\documentclass{article}
\usepackage[preprint]{neurips_2026}
\setcitestyle{numbers,square}

\usepackage[utf8]{inputenc} % allow utf-8 input
\usepackage[T1]{fontenc}    % use 8-bit T1 fonts
\usepackage[colorlinks=true,
linkcolor=blue,
citecolor=blue,
urlcolor=blue]{hyperref}     % hyperlinks
\usepackage{url}            % simple URL typesetting
\usepackage{booktabs}       % professional-quality tables
\usepackage{amsfonts}       % blackboard math symbols
\usepackage{nicefrac}       % compact symbols for 1/2, etc.
\usepackage{microtype}      % microtypography
\usepackage{xcolor}         % colors
\usepackage{graphicx}
\usepackage{amsmath}
\usepackage{amssymb}
\usepackage{array}
\usepackage{mathtools}
\usepackage{amsthm}
\usepackage{mathrsfs}
\usepackage{bm}
\usepackage{nccmath}   
\usepackage{float}
\usepackage{algorithm}
\usepackage{algorithmic}
\usepackage{caption}
\usepackage{subfigure}
\usepackage{subcaption} 
\usepackage{dsfont}
\usepackage{enumitem}
\usepackage{comment}
\usepackage{todonotes}

\newcommand{\R}{\mathbb{R}}

\newcommand{\E}{\mathbb{E}}

\DeclareMathOperator*{\argmin}{arg\,min}

\newcommand{\edg}[1]{\ensuremath{\! \left[ #1 \right] }}

\newtheorem{Theorem}{Theorem}[section]
\newtheorem{Lemma}[Theorem]{Lemma}

\newtheorem{Proposition}[Theorem]{Proposition}

\theoremstyle{definition}

\theoremstyle{remark}

\title{%Recursive Physics-Informed Neural Networks for PIDEs
INEUS: Iterative Neural Solver\\ for High-Dimensional PIDEs}

\newcommand{\affmark}[1]{\textsuperscript{\normalfont\textit{#1}}}

\author{%
\begin{tabular}{@{}cccc@{}}
\textbf{Jean-Loup Dupret}\affmark{a} &
\textbf{Davide Gallon}\affmark{b,a} &
\textbf{Patrick Cheridito}\affmark{a} 
\end{tabular}
\\[1.25em]
{\normalfont
\begin{tabular}{@{}c@{}}
\affmark{a} Department of Mathematics and RiskLab, \\ ETH Zurich, Switzerland, email: \texttt{\{jdupret,dgallon,patrickc\}@ethz.ch} 
\\[0.25em]
\affmark{b} Applied Mathematics: Institute for Analysis and Numerics,\\
University of Münster, Germany, email: \texttt{davide.gallon@uni-muenster.de}
\end{tabular}
}
}
    
\begin{document}
    \maketitle	
		\vspace{-0.5mm}
		
		\begin{abstract}
        In this paper, we introduce INEUS, a meshfree iterative neural solver for partial integro-differential equations (PIDEs). The method replaces the explicit evaluation of nonlocal jump integrals with single-jump sampling and reformulates PIDE solving as a sequence of recursive regression problems. Like Physics-Informed Neural Networks (PINNs), INEUS learns global solutions over the entire space-time domain, yet it offers a more efficient treatment of nonlocal terms and avoids the computationally expensive differentiation of full PIDE residuals. These features make INEUS particularly well suited for high-dimensional PDEs and PIDEs. Supported by a contraction-based convergence proof for linear PIDEs, our numerical experiments show that INEUS delivers accurate and scalable solutions for various high-dimensional linear and nonlinear examples.\vspace{-0.5mm}
        
%High-dimensional  partial integro-differential equations (PIDEs)  are numerically intractable with classical solvers, while existing deep learning methods face a fundamental trade-off: Physics-informed neural networks (PINNs) offer global solutions over the entire space-time domain but struggle with expensive jump terms, whereas deep Feynman–Kac  methods handle jumps more naturally but typically yield solutions that are local in space and time. We propose INEUS, a recursive learning scheme for solving nonlinear parabolic PIDEs that closes this gap. INEUS combines hard-constrained PINNs with an expectation-free target, replacing full jump integration by single-jump sampling and turning PIDE solving into a sequence of recursive regression steps. This yields a meshfree global solver that avoids explicit evaluation of the nonlocal expectation and bypasses the higher-order derivative costs induced by standard residual-based PINNs. For linear PIDEs, we establish a contraction-based theoretical foundation, showing that the method approximates  a class of contractive fixed-point Feynman--Kac operators and thus enjoys provable convergence guarantees. Numerical results on both high-dimensional linear and nonlinear problems with jumps illustrate the accuracy, flexibility, and scalability of the proposed approach.
			
		\end{abstract}
		
		%\begin{keyword}
		%	{\small } 
			%% keywords here, in the form: keyword \sep keyword
			
			%% MSC codes here, in the form: \MSC code \sep code
			%% or \MSC[2008] code \sep code (2000 is the default)
			
		%\end{keyword}
		
	%\end{frontmatter}
	
	\section{Introduction} \vspace{-1mm}
	\hypersetup{
		linkcolor=cyan,
		urlcolor=red,
		citecolor=black,
	}
Partial differential equations (PDEs) and their extensions, partial integro-differential equations (PIDEs), are important modeling tools in the natural sciences, engineering, economics and finance~\cite{AbergelTachet2010,GoswamiPatelShevgaonkar2016,KavallarisSuzuki2018}.  While traditional numerical methods such as finite differences~\cite{ContVoltchkova2005,KwonLee2011} and finite elements~\cite{solin2005partial,quarteroni1994numerical} are effective for low-dimensional problems, they quickly become computationally intractable as the dimension increases. This limitation, widely known as the curse of dimensionality, arises because mesh-based approaches require spatial discretization, causing computational costs to scale exponentially with the state dimension\vspace{-1mm}.

\paragraph{Deep learning for PDEs.} In recent years, neural networks have emerged as an effective tool for mitigating the curse of dimensionality in high-dimensional PDEs~\cite{blechschmidt2021three, CaiEtAl2021}. One prominent  approach is based on  Physics-Informed Neural Networks (PINNs)~\cite{raissi2017physics,raissi2019physics,sirignano2018dgm,berg2018unified, karniadakis2021physics, wang2022and,mowlavi2023optimal, wu2023comprehensive, dupret2026deep}, which minimize PDE residuals using gradient-based optimization and automatic differentiation. A key strength of PINNs is their flexibility: they can be applied to a wide range of challenging PDEs and provide \textit{global} approximations  over the entire space-time domain, making them particularly suitable for real-time applications. A second strand of the 
literature leverages the Feynman--Kac representation or backward stochastic differential equations (BSDEs, also known as nonlinear Feynman--Kac representations) to approximate parabolic PDEs on a finite time grid using neural networks~\cite{han2017deep,han2018solving, chan2019machine, beck2021solving, 
beck2021deep, Raissi2024}. Because they rely on time-discretized stochastic dynamics, these methods introduce approximation errors that may accumulate over time steps and iterations, while their accuracy depends strongly  on how well the sample paths cover the space-time domain.  %These methodstypically produce \textit{local} approximations, in the sense that the solution is learned along sampled trajectories or at pre-specified space-time points.
Beyond these two paradigms, alternative deep learning frameworks include  the deep mixed residual method (MIM)~\cite{LyuZhangChenChen2022}, weak adversarial networks (WAN)~\cite{ZangBaoYeZhou2020}, the deep Ritz method~\cite{EYu2018}, and  the deep Nitsche method~\cite{LiaoMing2021}.

\paragraph{Deep learning for PIDEs.} Despite these advances, the deep learning literature on PIDEs remains comparatively sparse. Existing PINN-based extensions ~\cite{mishra2021physics,saleh2023learning,kartik2025physics, bansal2025application} struggle with nonlocal terms because they require the computationally expensive numerical approximation of the integral term at every sampled space-time point. On the other hand, Feynman--Kac and BSDE-based approaches are able to handle nonlocal terms efficiently by incorporating jumps into the underlying dynamics~\cite{frey2022deep, andersson2025deep, gnoatto2025convergence, neufeld2024full, frey2025convergence, wang2023deep, lu2024temporal, ye2024fbsjnn}. However, these methods are restricted to parabolic PIDEs and, because they rely on simulated discretized stochastic processes, they approximate the solution only on a finite time grid, yielding local solutions that become more costly as the number of grid points increases. Consequently, they are less suited to approximating a global solution over the entire PIDE domain.\vspace{-0.5mm}%Finally,  \cite{neufeld2025multilevel}  propose a MLP extension for semilinear PIDEs.

\paragraph{Contributions.} To bridge this gap, we introduce INEUS, an iterative neural solver for nonlinear PIDEs, inspired by \cite{cheridito2025deep}, with the following key features: \vspace{-1mm}
\begin{itemize}[leftmargin=1em]
\item It combines the global approximation capabilities of PINNs with the efficient treatment
of nonlocal terms provided by Feynman--Kac and BSDE-based methods. 
\item
By using an expectation-free recursive target based on single-point jump sampling, it avoids 
explicit numerical integration of nonlocal terms and remains efficient in high dimensions.
\item
Its recursive formulation reduces the differentiation burden of residual-based PINNs by avoiding backpropagation through the full integro-differential operator and the explicit evaluation of gradients, Hessians, and integral terms altogether.
\end{itemize}
We further establish a contraction-based convergence proof for linear PIDEs, showing that 
repeated minimization of the INEUS loss function approximates both the standard  Feynman--Kac operator and a relaxed contractive variant that enhances algorithmic stability via Polyak averaging. We illustrate the accuracy and scalability of INEUS on several linear and nonlinear numerical examples, and compare its performance with PINNs and deep BSDE methods for PDEs and PIDEs. Proofs of the theoretical results and additional numerical  experiments are given in the Appendix.\vspace{-1mm}

	\section{INEUS}
	 We describe INEUS for a representative class of parabolic PIDEs, noting that it extends directly to more general PIDEs. Specifically, we consider equations of the form
  %Although INEUS applies to general PIDEs, we present the method, for concreteness, in the representative case of a parabolic PIDE of the form
  \begin{align}
    &\partial_t u(t,x) 
    + \mathcal{F}\bigl(t,x,u(t,x),\nabla_x u(t,x),\nabla^2_x u(t,x)\bigr)
    + \mathcal{I}(t,x,u) = 0, 
    &&(t,x)\in [0,T) \times D, \label{intgen}
    \\
    &u(T,x) = \varphi(x), 
    &&x\in D, \label{termgen}
    \\
    &u(t,x) = g(t,x), 
    &&(t,x)\in [0,T) \times D^c, \label{boundgen}
\end{align}
for a finite time horizon $T > 0$ and an open subset $D \subseteq \R^d$. We write $D_T:= [0,T) \times D$ and $\bar{D}_T =[0,T]\times \bar{D}$. The function $\mathcal{F} \colon D_T \times \R \times \R^d \times \R^{d\times d} \to \R$
specifies the local differential operator,
while  $\mathcal{I}$ is a nonlocal operator given by 
	\begin{equation*}
    \mathcal{I}(t,x,u)
    =
    \lambda(t,x) \int_E
    \ell\!\left(
        u(t,x+\gamma(t,x,e)) - u(t,x)
    \right)
   \nu(de),
\end{equation*} 
for a probability \vspace{-0.75mm}measure $\nu$ on $E:= \R^l \setminus \{0\}$, together with suitable functions $\lambda : D_T\to \R_+$, $\gamma : D_T \times E  \to \R^d$ and $\ell\colon \R \to \R$. Using random variables $E_1, E_2, \ldots  \overset{\rm{i.i.d.}}{\sim}   \nu$, one can write
	\begin{equation}
		\mathcal{I}(t,x,u) = \lambda(t,x) \,  \mathbb{E}^{\nu}\left[ \ell\!\left(u(t,x+\gamma(t,x,E_1)) - u(t,x) \right)\right] . \label{operator2}
	\end{equation}
	Let us denote by $\mathcal{U}$ the set of all functions in $C^{1,2}(D_T)\cap C(\bar{D}_T)$ such that the jump expectation in \eqref{operator2} is finite for all $(t,x) \in D_T$ 
	and by $\mathcal{A}$ the operator on $\mathcal{U}$ given by the left-hand side of \eqref{intgen}:  \begin{equation}  \label{HJBB}
		\begin{aligned}
			\mathcal{A}[u](t,x) &:= \partial_t u(t,x) + \mathcal{F}(t,x, u(t,x), \nabla_x u(t,x), \nabla^2_x u(t,x))+ \mathcal{I}(t,x,u)   \, .
		\end{aligned}
	\end{equation}
	For a given scaling parameter $\xi \in \mathbb{R}\setminus\{0\}$, we introduce the corresponding expectation-free operator $\mathcal{G}_\xi: D_T \times E \times \mathcal{U} \to \mathbb{R}$ \vspace{-1mm} by\footnote{with convention that whenever $x+\gamma(t,x,e) \in D^c$, we set $u(t,x+\gamma(t,x,e)) = g(t,x+\gamma(t,x,e))$.} 	 \begin{align*}
	 \mathcal{G}_\xi(t,x, e, u) \hspace{-0.1mm}:=& \hspace{0.2mm}u(t,x) + \xi \Big[\partial_t u(t,x)  + 
     \\[-1mm]
     & \mathcal{F}(t,x, u(t,x), \nabla_x u(t,x), \nabla^2_x u(t,x)) \hspace{-0.1mm} + \hspace{-0.1mm}\lambda(t,x) \, \ell\! \left( u(t,x+\gamma(t,x,e)) \hspace{-0.1mm} - \hspace{-0.1mm} u(t,x) \right)  \hspace{-0.1mm}\Big] . 
		\end{align*}%For ease of exposition, we write \(\mathcal{G}_\xi(t,x,e,u)\) instead of \(\mathcal{G}_\xi[u](t,x,e)\) throughout the paper. %, as well as,
	%	\begin{equation}
		%	\mathcal{G}_\xi(t,x,z,u) := \mathcal{G}_\xi(t,x,z,u) - u(t,x) \, . \label{key2}
		%	\end{equation}
In contrast to $\mathcal{A}$, $\mathcal{G}_\xi$ evaluates the jump term at a single point $e \in E$, thereby avoiding the computation of the jump expectations  at every point $(t,x)$ in \eqref{operator2}. Nevertheless, the next Proposition \ref{Proposition: L2} and Lemma \ref{lemma 1} show that $\mathcal{G}_{\xi}$ can be used to express the solution $u$ as  fixed point of a regression problem.

	\begin{Proposition} 	\label{Proposition: L2}
		Let $u\in \mathcal{U}$ be a solution of the PIDE \eqref{intgen}--\eqref{boundgen} %\todo{Have we already proved uniqueness of the solution?}
        and let $Y$ be a $D_T$-valued 
		random variable independent of $E_1$ such that $\E \,[ \mathcal{G}_\xi(Y, E_1, u)^2] 
		< \infty$. 
       Then $u(Y) = v^*(Y)$ almost surely for the Borel measurable function $v^*:D_T \to \R$ minimizing  the mean squared error \vspace{-1mm}
\begin{equation} \label{eq:MSE} \mathbb{E} \edg{ \big( v(Y) - G_{\xi}(Y, E_1, u) \big)^2 } \end{equation}
over all Borel measurable functions $v :  D_T \to \R$. 
  % Then the minimizer $v^*$ of the mean squared error  \vspace{-2mm}\begin{equation} \label{eq:MSE} \mathbb{E} \edg{ \big( v(Y) - G_{\xi}(Y, E_1, u) \big)^2 } \vspace{-1mm}, \end{equation} over all Borel measurable functions $v \colon D_T \to \R$ satisfies $u(Y) = v^*(Y)$ almost surely.
	\end{Proposition}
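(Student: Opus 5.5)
The plan is to use the standard $L^2$ characterization of conditional expectation. The minimizer of \eqref{eq:MSE} over Borel functions $v$ is any version of the regression function $v^*(y) = \E[\mathcal{G}_\xi(Y,E_1,u)\mid Y=y]$. Once we show that this conditional expectation equals $u(Y)$ almost surely, the claim follows. The argument then has two steps: first identify the regression function via independence, and second use the PIDE to evaluate it.

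\textbf{Step 1 (the regression function).} Since $\E[\mathcal{G}_\xi(Y,E_1,u)^2]<\infty$, the conditional expectation $Z:=\E[\mathcal{G}_\xi(Y,E_1,u)\mid Y]$ exists in $L^2$. By Doob--Dynkin, $Z=h(Y)$ for a Borel function $h$. For any Borel $v$ with $\E[v(Y)^2]<\infty$, orthogonality gives the Pythagorean decomposition
\[
\E\big[(v(Y)-\mathcal{G}_\xi)^2\big]=\E\big[(v(Y)-h(Y))^2\big]+\E\big[(h(Y)-\mathcal{G}_\xi)^2\big],
\]
where $\mathcal{G}_\xi$ abbreviates $\mathcal{G}_\xi(Y,E_1,u)$. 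If $\E[v(Y)^2]=\infty$, the loss is infinite, because $\mathcal{G}_\xi\in L^2$. Hence the minimizers are exactly the $v^*$ with $v^*(Y)=h(Y)$ almost surely.

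\textbf{Step 2 (evaluating $h$).} Since $Y$ is independent of $E_1$, the freezing lemma gives $h(y)=\E^\nu[\mathcal{G}_\xi(y,E_1,u)]$ for $P_Y$-almost every $y$. For each fixed $y=(t,x)\in D_T$, only the jump term in $\mathcal{G}_\xi$ depends on $e$, and its $\nu$-expectation is finite because $u\in\mathcal{U}$. Linearity then gives
\[
\E^\nu[\mathcal{G}_\xi(t,x,E_1,u)] = u(t,x)+\xi\,\mathcal{A}[u](t,x) = u(t,x),
\]
where the last equality uses that $u$ solves \eqref{intgen} on $D_T$, with the footnote's boundary convention \eqref{boundgen} applied to jumps that land in $D^c$. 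Therefore $h(Y)=u(Y)$ almost surely, and so $v^*(Y)=u(Y)$ almost surely.

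\textbf{Main obstacle.} The delicate step is the integrability and measurability bookkeeping needed to apply the freezing lemma. We need joint measurability of $(y,e)\mapsto\mathcal{G}_\xi(y,e,u)$, which follows from the continuity of $u$ and its derivatives and the measurability of $\lambda,\gamma,\ell,g$. We also need integrability of $\mathcal{G}_\xi(Y,E_1,u)$ under the product law, which follows from the $L^2$ hypothesis; Fubini then yields that $e\mapsto\mathcal{G}_\xi(y,e,u)$ is $\nu$-integrable for $P_Y$-almost every $y$. Together these justify the pointwise computation in Step 2.
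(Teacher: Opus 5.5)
Your proof is correct and follows essentially the same route as the paper. Both identify the MSE minimizer with the regression function $\E[\mathcal{G}_\xi(Y,E_1,u)\mid Y]$, then use the independence of $Y$ and $E_1$ together with the PIDE to evaluate it as $u+\xi\mathcal{A}[u]=u$. Your version spells out the Pythagorean decomposition and the freezing-lemma and measurability bookkeeping, which the paper handles with a citation.
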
 
Motivated by Proposition \ref{Proposition: L2}, we approximate the PIDE solution $u$  with a neural network $u_{\theta} \colon [0,T] \times \R^d \to \R$ and 
iteratively update its parameters by solving the \vspace{-1mm}regression	
\begin{equation}
		\theta^{(k+1)}  = \argmin_{\theta}   \mathbb{E}  \Big[ \big( u_\theta(Y)  - 
		\mathcal{G}_\xi(Y , E_1, u_{\theta^{(k)}}) \big)^2 \Big] ,  \label{notyet3}  
	\end{equation} 
	for a $D_T$-valued random variable $Y$ with distribution $\mu$ and independent of $E_1$. 
We use hard-constrained PINNs (hPINNs) \cite{lu2021physics}, which enforce the terminal and boundary conditions \eqref{termgen}--\eqref{boundgen} directly by specifying the network architecture of $u_\theta$ as 
	\begin{equation} \label{hPINN}
		u_\theta(t,x) = A(t,x) + B(t,x)v_\theta(t,x) \, ,
	\end{equation}
	for a function $A(t,x)$ encoding the  terminal and boundary conditions, a  smooth distance function $B(t,x)$  vanishing for $t=T$ and $x\in D^c$, and another neural network $v_\theta:D_T \to \R$  trained  only in the interior $D_T$.
    %where $A(t,x)$ is a function satisfying the terminal and boundary conditions, $B(t,x)$ is a smooth function vanishing on the constraint set $\{t = T\} \cup \{x \in D^c\}$, and $v_\theta : D_T \to \mathbb{R}$ is a neural network trained only in the interior $D_T$.
    %For example, in the case of a terminal condition only, one may take \(B(t,x)=T-t\) and \(A(t,x)=\Phi(x)\). For more complex problems, $A$ and $B$ depend on the exact boundary conditions imposed on the solution. 
    In practice, we sample independent realizations $(t_m, x_m, e_m)_{m=1}^{M}$ of $(Y,E_1)  \sim \mu \otimes \nu$ on $D_T \times E$ and solve  {\small 	 $\theta^{(k+1)}  = \argmin_{\theta} \frac{1}{M}  \sum_{m=1}^M \big(u_\theta(t_m, x_m)  - G_\xi(t_m, x_m, e_m, u_{\theta^{(k)}} \hspace{-0.23mm} ) \big)^2$}. \vspace{0.0mm}This allows us to avoid any space-time discretization of the PIDE \eqref{intgen}--\eqref{boundgen}, resulting in the meshfree Algorithm  \ref{Algo1}, which mitigates the curse of dimensionality. 
This recursive formulation offers two main computational advantages over standard PINN training. \vspace{-1mm}
\paragraph{Cheaper gradients.} In standard PINNs, one minimizes the expectation of $\mathcal{A}[u_\theta]^2$, which requires differentiation with respect to $\theta$. Since $\mathcal{A}
$ already contains second-order spatial derivatives, this produces third-order mixed derivatives during backpropagation, expensive to compute and numerically sensitive. In our formulation, the target $\mathcal{G}_\xi(Y, E_1, u_{\theta^{(k)}})$ is evaluated at the frozen weights $\theta^{(k)}$ and treated as a constant during the backward pass. Differentiating \eqref{notyet3} with respect to $\theta$ therefore requires only $\nabla_\theta u_\theta$, see Algorithm~\ref{Algo1} for the explicit gradient update \eqref{eq:algo1}. Such cheaper gradients are\hspace{-0.1mm} not\hspace{-0.1mm} specific to PIDEs and remain valid for standard PDEs without nonlocal terms, \hspace{-0.1mm}see Figures \ref{Fig: pde_linear}--\ref{Fig: pde_nonlinear}.\vspace{-0.6mm}
\paragraph{No numerical integration of the jump term.} A second advantage is that the scheme is built  on the expectation-free map $\mathcal{G}_\xi$, which evaluates the integrand of $\mathcal{I}[u]$ at a single draw $E_1 \sim \nu$ rather than numerically approximating the integral $\mathbb{E}^\nu[\cdot]$ at each sampled point $(t,x)$. By Proposition~\ref{Proposition: L2}, this preserves the PIDE solution
$u$ as the minimizer of the mean-square regression problem, while the scaling parameter $\xi$ mitigates the additional variance introduced by this single-jump target, as
quantified in Proposition~\ref{prop:variance_estim}.%This single-sample substitution is justified by Proposition~\ref{Proposition: L2}: the solution $u$ remains the minimizer of the mean-square error whether the jump expectation is evaluated exactly or replaced by a single sample.  The additional variance introduced by this single-jump target is controlled through the scaling parameter $\xi$, as quantified in Proposition~\ref{prop:variance_estim}.

Finally, the next lemma provides a useful characterization of  \eqref{notyet3} in terms of its  underlying operator.
	\begin{Lemma} \label{lemma 1}
		Let $\mu$ be a probability measure on $D_T$ with full support and $u_{\theta^{(k)}} \in \mathcal{U}$ be given at step $k=1,2,\ldots$\ . Let $u^* : \bar{D}_T \to \R$ be a Borel measurable function that minimizes the objective
		\begin{equation} \label{objective_functional} 
		\mathbb{E} \Big[ \big( v(Y) - \mathcal{G}_\xi(Y, E_1, u_{\theta^{(k)}}) \big)^2 \Big],
		\end{equation} 
      over all Borel functions $v :  D_T \to \R$,  where $Y \sim \mu$, $E_1 \sim \nu$. Then, $u^*$ satisfies the update rule
		\begin{equation} \label{ope1b}
			u^*(t,x) = \overline{\mathcal{T}} u_{\theta^{(k)}} (t,x):= u_{\theta^{(k)}}(t,x) + \xi \mathcal{A}[u_{\theta^{(k)}}](t,x) \, , \qquad \mu\text{-a.e.} \text{ on } D_T \, .
		\end{equation} %u_{\theta^k+1} must be a fixed point of the operator. how to link the two parts of the lemma?
        Moreover, if $u^*$ is realized by a hPINN of the form \eqref{hPINN}  and is a fixed point of the operator $\overline{\mathcal T}$, then $u^*$ is a classical solution of the PIDE problem \eqref{intgen}--\eqref{boundgen}.
		%Moreover, if $u_{\theta^{(k+1)}}$  is a hPINN of the form  \eqref{hPINN} that achieves this minimum, that is,
        %$u_{\theta^{(k+1)}} = u^*$ $\mu\text{-a.e.} \text{ on } D_T$, then  the fixed points of the operator $\overline{\mathcal{T}}$ in $\mathcal{U}$ are exact solutions to the PIDE \eqref{intgen}--\eqref{boundgen}.
	\end{Lemma}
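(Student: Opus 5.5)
The plan has two parts: identify the $L^2$ minimizer as a conditional expectation, then read off the PIDE from the fixed-point identity together with the hPINN structure.

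\emph{Step 1 (the minimizer).} By the standard $L^2$ projection argument, any Borel $u^*$ minimizing \eqref{objective_functional} satisfies
\[
u^*(Y)=\E\big[\mathcal{G}_\xi(Y,E_1,u_{\theta^{(k)}})\,\big|\,Y\big] \quad \text{a.s.}
\]
I assume the target has finite second moment, as in Proposition~\ref{Proposition: L2}. Since $E_1$ is independent of $Y$, the freezing (independence) lemma lets me compute this conditional expectation by integrating out $e$ with respect to $\nu$ at fixed $y=(t,x)$. Expanding $\mathcal{G}_\xi$, only the jump term depends on $e$. Its $\nu$-integral is $\lambda(t,x)\,\E^\nu[\ell(u_{\theta^{(k)}}(t,x+\gamma(t,x,E_1))-u_{\theta^{(k)}}(t,x))]=\mathcal{I}(t,x,u_{\theta^{(k)}})$ by \eqref{operator2}, and it is finite because $u_{\theta^{(k)}}\in\mathcal{U}$. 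Hence
\[
\E\big[\mathcal{G}_\xi(y,E_1,u_{\theta^{(k)}})\big]=u_{\theta^{(k)}}(y)+\xi\,\mathcal{A}[u_{\theta^{(k)}}](y).
\]
The a.s.\ identity in $Y$ is equivalent to equality $\mu$-a.e.\ on $D_T$, which gives \eqref{ope1b}.

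\emph{Step 2 (fixed point $\Rightarrow$ PIDE).} Suppose $u^*=u_\theta$ is an hPINN of the form \eqref{hPINN} lying in $\mathcal{U}$, and $u^*=\overline{\mathcal{T}}u^*$. Then $u^*=u^*+\xi\mathcal{A}[u^*]$, and since $\xi\neq0$ this gives $\mathcal{A}[u^*]=0$. A priori this holds only $\mu$-a.e., if the fixed-point identity is understood in that sense.

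This upgrade is the main obstacle. I would argue that $\mathcal{A}[u^*]$ is continuous on $D_T$, because $u^*\in C^{1,2}$. This requires continuity of $\mathcal{F}$ and continuity of the integral term $(t,x)\mapsto\mathcal{I}(t,x,u^*)$, which holds under standard dominated-convergence assumptions on $\lambda$, $\gamma$, $\ell$. A continuous function vanishing $\mu$-a.e., with $\mu$ of full support, vanishes everywhere: otherwise it is nonzero on an open set, and that set has positive $\mu$-mass. Thus \eqref{intgen} holds pointwise on $D_T$.

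\emph{Step 3 (terminal and boundary conditions).} These come directly from the architecture. $B$ vanishes at $t=T$ and on $D^c$, so $u^*=A$ there, and $A$ encodes $\varphi$ and $g$. This yields \eqref{termgen}--\eqref{boundgen}, and with the footnote convention the jump term evaluates $g$ outside $D$ consistently. Combined with $u^*\in C^{1,2}(D_T)\cap C(\bar D_T)$, $u^*$ is a classical solution.
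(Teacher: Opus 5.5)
Your proof is correct and follows the paper's argument: you identify the minimizer with the conditional expectation of $\mathcal{G}_\xi$ given $Y$, integrating out $E_1$ by independence, and then obtain $\mathcal{A}[u^*]=0$ from the fixed-point identity using $\xi\neq 0$. From there you pass from $\mu$-a.e.\ to everywhere via continuity and full support of $\mu$, and you read off the terminal and boundary conditions from the hPINN construction. Your explicit finite-second-moment assumption and your justification of the continuity of $\mathcal{A}[u^*]$ spell out hypotheses the paper uses only implicitly.
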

	Lemma \ref{lemma 1} shows that the update rule \eqref{notyet3} based on an hPINN $u_\theta$ is equivalent, $\mu$-a.e.\ on $ D_T$, to the operator $\overline{\mathcal T}$ in  \eqref{ope1b}, whose fixed points are solutions of the PIDE \eqref{intgen}–\eqref{boundgen}. However, \(\overline{\mathcal T}\) is, in general, not a contraction.\ 
    In the next section, Proposition \ref{prop:contrac} shows that a suitable modification of \(\overline{\mathcal T}\) defines a contraction mapping on the space of bounded continuous functions, and that this modified operator can be recovered through repeated minimization of \eqref{notyet3}. \vspace{-1mm}%We then prove in Proposition \ref{Prop: conv} that applying $n$ successive minimizations of \eqref{notyet3} effectively approximates this modified contractive operator, thereby guaranteeing convergence.

	\section{Convergence analysis\vspace{-1mm}}
    The convergence analysis exploits the Feynman–Kac representation and the contractivity of the associated semigroup, which are naturally available for linear PIDEs. We therefore restrict to this setting, noting that the algorithm itself applies unchanged to fully nonlinear problems (see Section~\ref{Section: numeric}). We hence consider linear PIDEs in $D=\R^d$ of the form\footnote{Boundary conditions \eqref{boundgen} and general open domains $D \subset \R^d$ can  both be treated easily by introducing the stopping time $\tau := T \wedge \inf \{s\geq t :X_s \notin D \}$ in the Feynman--Kac formula, see Th.\ 4.2 in \cite{yong1999stochastic}.\ We omit  here this extension for readability.} 
    
    %Although the scheme introduced in Section 2 applies  to general nonlinear PIDE (see also numerical Section \ref{Section: numeric}), the convergence analysis is carried out for simplicity to linear PIDEs of the form\footnote{Boundary conditions can also be incorporated by introducing stopping times in the Feynman--Kac representation; see Theorem 4.2 in \cite{yong1999stochastic}.}  

\begin{align}
    \partial_t u(t,x)
    + \mathcal L[u](t,x)
    + \mathcal I[u](t,x)
    - c(t,x)u(t,x)
    + f(t,x)
    &= 0,
    && (t,x)\in D_T,
    \label{intdiff}
    \\
    u(T,x) &= \varphi(x),
    && x\in D.
    \label{termdiff}
\end{align}
	Here,  the functions $f, \varphi, c$ are assumed to be continuous and uniformly bounded, with
    \begin{equation} \label{eq:assumption_c}
        c(y)\ge c_0>0, \qquad y\in\bar{D}_T \, .
    \end{equation} We denote by $\mathcal{L}$ the second-order linear partial differential operator
	\begin{equation*}
		\mathcal{L}[u](t,x) = b^\top\hspace{-0.4mm}(t,x) \nabla_x u(t,x) + \frac{1}{2} \text{Tr}\left[\sigma \sigma^\top\hspace{-0.4mm}(t,x) \nabla^2_x u(t,x) \right] ,
	\end{equation*}
	with $b: D_T \to \R^d$ and $\sigma: D_T \to \R^{d\times q}$, and by $\mathcal{I}[u]$ the non-local operator \eqref{operator2} with $\ell(z) =z$  and  $\gamma : D_T \times E \to \R^d$. The coefficients $b, \sigma, \gamma$ are assumed to satisfy standard conditions ensuring existence and uniqueness of the\hspace{-0.1mm} jump-diffusion\hspace{-0.1mm} in\hspace{-0.1mm} Theorem~\ref{thm:fk};\hspace{-0.1mm} see Theorem 1.3.1 in \cite{bouchard2021} for a proof.
	\begin{Theorem}[Feynman--Kac] \label{thm:fk}
		Let $u \in \mathcal{U}$ be solution of the PIDE \eqref{intdiff}--\eqref{termdiff}, then 
		\begin{equation}
		\hspace{-1.2mm}	u(t,x) = \mathcal{T}_{t,T}\,\varphi(x) \hspace{-0.25mm}: = \hspace{-0.25mm}\mathbb{E}\left[ \int_t^T \hspace{-1mm}e^{-\int_t^s c(r, X_r) dr} f(s, X_s) ds + e^{-\int_t^T c(r,X_r) dr} \varphi(X_T) \  \Big| \ X_t = x \right]\hspace{-0.32mm} , \hspace{-0.5mm}\label{Feynman}
		\end{equation}
		for a stochastic process $(X_t)_{0\leq t \leq T}$ satisfying
		\begin{equation}
			dX_t = b(t,X_t) dt +\sigma(t, X_t) dW_t + \int_E \gamma(t,X_{t-},  e) dN(dt, de) \, , \label{sde}
		\end{equation}
		where $W$ is a $q$-dimensional Brownian motion and $N$ is a Poisson random measure  on \(E\) with predictable compensator \( \lambda(s,X_{s-})\nu(de)ds\).\vspace{-0.5mm}
	\end{Theorem}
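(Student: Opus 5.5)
The plan is the classical verification argument: apply Itô's formula for jump-diffusions to the discounted process along the solution of \eqref{sde}, take conditional expectations, and read off \eqref{Feynman}. Fix $(t,x)\in D_T$ and let $X$ be the unique solution of \eqref{sde} started at $X_t=x$; existence and uniqueness come from the standing assumptions on $b,\sigma,\gamma$. Define the discount factor $\beta_s := e^{-\int_t^s c(r,X_r)\,dr}$, which satisfies $0<\beta_s\le 1$ thanks to \eqref{eq:assumption_c}. Consider the process
\[
Z_s := \beta_s u(s,X_s) + \int_t^s \beta_r f(r,X_r)\,dr, \qquad s\in[t,T].
\]

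\textbf{Step 1: It\^o's formula.} Since $u\in C^{1,2}(D_T)$, the jump-diffusion It\^o formula applies to $\beta_s u(s,X_s)$, and the product rule gives $d\beta_s=-c\beta_s\,ds$. Compensating the Poisson random measure by $\lambda(s,X_{s-})\nu(de)\,ds$, the drift term is
\[
\beta_s\big(\partial_t u+\mathcal L[u]+\mathcal I[u]-cu+f\big)(s,X_{s-})\,ds,
\]
where $\mathcal I[u]$ uses $\ell(z)=z$. By \eqref{intdiff} this drift vanishes. Hence
\[
dZ_s=\beta_s\nabla_x u^\top\sigma\,dW_s+\int_E\beta_s\big(u(s,X_{s-}+\gamma)-u(s,X_{s-})\big)\,\tilde N(ds,de),
\]
so $Z$ is a local martingale.

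\textbf{Step 2: from local martingale to martingale.} This is the main obstacle. Membership in $\mathcal U$ gives only finiteness of the jump expectation, not integrability of the stochastic integrands. I would localize with stopping times
\[
\tau_n:=\inf\{s\ge t: |X_s|\ge n\}\wedge (T-1/n).
\]
Stopping before $T$ is needed because $u$ is only $C^{1,2}$ on the open-in-time set $D_T$. With this localization, $\E[Z_{\tau_n}]=Z_t=u(t,x)$.

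To pass to the limit I need uniform integrability of $\beta_{\tau_n}u(\tau_n,X_{\tau_n})$. I would first try to use boundedness of $u$: the data $f,\varphi,c$ are bounded and $c\ge c_0$, so a bounded solution is the natural class, and I would take $u$ bounded as implicit in the uniqueness setting. If that is not available, I would instead assume polynomial growth of $u$ and use moment bounds $\E[\sup_s|X_s|^p]<\infty$, which follow from the standard coefficient conditions. The integral part $\int_t^{\tau_n}\beta_r f\,dr$ is dominated by $T\|f\|_\infty$.

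\textbf{Step 3: conclusion.} Let $n\to\infty$. Then $\tau_n\to T$ almost surely, because $X$ does not explode and has càdlàg paths. By continuity of $u$ on $\bar D_T$ and the left-continuity of $X$ at the deterministic time $T$ (almost surely there is no jump exactly at $T$), we get $u(\tau_n,X_{\tau_n})\to\varphi(X_T)$. Dominated convergence then yields
\[
u(t,x)=\E\Big[\int_t^T\beta_sf(s,X_s)\,ds+\beta_T\varphi(X_T)\,\Big|\,X_t=x\Big],
\]
which is \eqref{Feynman}. This agrees with the cited result, Theorem 1.3.1 in \cite{bouchard2021}, whose localization argument handles the technicalities of Step 2.
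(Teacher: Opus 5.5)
Your proposal is correct and is essentially the argument the paper relies on: the paper gives no proof of its own beyond citing \cite{bouchard2021}, where the result is obtained by exactly this It\^o-formula, localization and limit argument, and the same discounted It\^o/Dynkin computation reappears in the paper's proof of part (iii) of Proposition~\ref{prop:contrac}. The growth hypothesis you add on $u$ (boundedness, consistent with the paper working in $\bar{\mathcal C}$, together with bounded $\lambda$) is genuinely necessary rather than a convenience, for two reasons: it is what makes the stopped compensated jump integral a true martingale, since your stopping times control $X_{s-}$ but not the post-jump point $X_{s-}+\gamma$, and without some growth class the representation can fail (Tychonoff-type solutions of the backward heat equation with zero terminal data).
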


	\subsection{The Feynman--Kac semigroup and its generator} 
The Feynman--Kac representation \eqref{Feynman} defines a two-parameter family of affine operators $(\mathcal{T}_{t,r})_{0\leq t \leq r \leq T}$ acting on the space $\mathcal{C}$ of bounded, uniformly continuous functions on $D$. To obtain a one-parameter semigroup structure,  we introduce the space-time process $Y_s := (t+s , X_{t+s})$ for $s\geq 0$, with initial condition $Y_0 =  y := (t,x) \in \bar{D}_T$. Let $\bar{\mathcal{C}}$ denote the space of bounded, uniformly continuous functions on $\bar{D}_T$. Then, for $h:= r-t \in [0,T-t]$, the operator $\mathcal{T}_{t,r}$ can  be rewritten as a one-parameter family of affine operators on $\bar{\mathcal{C}}$ given by
	\begin{equation}
		\mathcal{T}_{h}\Psi(y) =  \mathbb{E}\left[\int_0^{h} e^{-\int_0^s c(Y_u)du} f(Y_s) ds + e^{-\int_0^{h} c(Y_s) ds}\Psi(Y_{h}) \ \Big| \ Y_0 = y \right] \hspace{-0.2mm}, \, \quad \Psi \in \bar{\mathcal{C}}, \label{semigroupbb}
	\end{equation}
	with  $u(T,x) = \varphi(x)$ when $\Psi = u$. %Although $(\mathcal{T}_h)$ is now a F-K semigroup acting on functions of space and time, we note that the presence of the source term $f\not\equiv 0$ makes $\mathcal{T}_h\bar{F}$  affine.  Hence, we can rewrite the semigroup $\mathcal{T}_h$ in \eqref{semigroupbb} as
    Equivalently, $\mathcal{T}_h$ admits the decomposition, 
	\begin{equation} \label{eq: decomposition}
		\mathcal{T}_h \Psi = \mathcal{R}_h + \mathcal{S}_h \Psi \, , 
	\end{equation}
where 
\begin{equation}
    \mathcal{R}_h(y) = \mathbb{E}\left[\int_0^h e^{-\int_0^s c(Y_u) du} f(Y_s) ds \, \big| \, Y_0 = y \right]  \vspace{-0.5mm},\label{semigroupa}
\end{equation} 
	and 
	\begin{equation}
    \mathcal{S}_h \Psi(y) = \mathbb{E}\left[ e^{-\int_0^h c(Y_u) du} \, \Psi(Y_h)  \, \big| \, Y_0 = y \right] .
    \label{semigroupb}
	\end{equation}
	Thus, $\mathcal T_h$ is an affine operator on $\bar{\mathcal{C}}$, whereas
$(\mathcal S_h)_{0\le h\le T-t}$ is the associated linear semigroup. We now  show that  $(\mathcal{S}_h)$ forms a monotone and contractive semigroup.
     
    \begin{Proposition}\label{prop:contrac}
    Let $(\mathcal S_h)_{0\le h\le T-t}$ be the linear operator family defined
    in \eqref{semigroupb}, acting on the Banach space
    $\bar{\mathcal C}$ of bounded, uniformly continuous functions on
    $\bar{D}_T$.
    \begin{enumerate}[align=left,leftmargin=*, labelsep=-0.2em, label=(\roman*)]
    \item \label{prop:contrac:item1} $(\mathcal S_h)_{0 \le h \le T-t}$ is a monotone local semigroup 
    on $\bar{\mathcal C}$.
    \item \label{prop:contrac:item2} Under Assumption \eqref{eq:assumption_c}, $(\mathcal S_h)_{0 < h \le T-t}$ is a strict contraction on $\bar{\mathcal C}$:
    for every 
    $\Psi_1, \Psi_2 \in \bar{\mathcal{C}}$,
    \begin{equation} \label{eq: contract}
    \|\mathcal{S}_h \Psi_1 - \mathcal{S}_h \Psi_2\|_\infty 
    \;\le\; 
    e^{-c_0 h}\,\|\Psi_1 - \Psi_2\|_\infty.
    \end{equation}
    \item \label{prop:contrac:item3}
    The infinitesimal generator $\mathcal{A}_0$ of $(\mathcal{S}_h)_{0 \le h \le T-t}$,
    defined by
    \begin{equation*}
        \mathcal{A}_0 \Psi 
        \;:=\; 
        \lim_{h \downarrow 0} \frac{\mathcal{S}_h \Psi - \Psi}{h}\,,
        \qquad \Psi \in D(\mathcal{A}_0),
    \end{equation*}
    admits for every $\Psi \in D(\mathcal{A}_0) \cap \mathcal{U}$ and every $y\in D_T$ the explicit representation
    \begin{equation}
    \mathcal A_0\Psi(y)
    =
    \partial_t\Psi(y)
    +
    \mathcal L[\Psi](y)
    +
    \mathcal I[\Psi](y)
    -
    c(y)\Psi(y).
    \label{inf_gen}
    \end{equation}
    %where $\mathcal L$ denotes the diffusion operator and $\mathcal I$ the jump operator associated with the time–space process $Y$.
    \end{enumerate}
    \end{Proposition}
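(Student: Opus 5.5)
We will prove the three items in order, relying throughout on the strong Markov property of the space-time jump-diffusion $Y_s=(t+s,X_{t+s})$ associated with \eqref{sde}, whose well-posedness is granted by the standing assumptions behind Theorem~\ref{thm:fk}.

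\emph{Item \ref{prop:contrac:item1}.}
Monotonicity is immediate from \eqref{semigroupb}: if $\Psi_1\le\Psi_2$ pointwise, then, since the weight $e^{-\int_0^h c(Y_u)du}$ is positive,
\[
\mathcal S_h\Psi_1\le\mathcal S_h\Psi_2 .
\]
Linearity is clear, and $\mathcal S_0=\mathrm{Id}$. For the semigroup identity $\mathcal S_{h+k}=\mathcal S_h\mathcal S_k$, with $h+k\le T-t$ (hence "local"), we do the following:
\begin{itemize}
\item split the exponent as $\int_0^{h+k}=\int_0^h+\int_h^{h+k}$;
\item condition on $\mathcal F_h$;
\item use the Markov property of $Y$, which states that the law of $(Y_{h+s})_{s\ge0}$ given $\mathcal F_h$ is that of $Y$ started at $Y_h$.
\end{itemize}
This yields $\mathbb E[e^{-\int_h^{h+k}c}\Psi(Y_{h+k})\mid\mathcal F_h]=\mathcal S_k\Psi(Y_h)$, and hence the claim.

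We must also check that $\mathcal S_h$ maps $\bar{\mathcal C}$ into itself. Boundedness holds because $|\mathcal S_h\Psi|\le\|\Psi\|_\infty$, using $c\ge c_0>0$. Uniform continuity follows from two ingredients:
\begin{itemize}
\item the $L^2$-continuity of the flow $y\mapsto Y^y_h$, which holds under the Lipschitz conditions on $b,\sigma,\gamma$ (cf.\ \cite{bouchard2021});
\item the uniform continuity and boundedness of $c$ and $\Psi$, combined with dominated convergence.
\end{itemize}
We expect this to be the most technical point. We would state it under the standard Lipschitz assumptions, citing the flow estimates $\mathbb E|X^{x}_s-X^{x'}_s|^2\le C|x-x'|^2$, and treat time shifts by the continuity of the coefficients.

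\emph{Item \ref{prop:contrac:item2}.}
By linearity, $\mathcal S_h\Psi_1-\mathcal S_h\Psi_2=\mathcal S_h(\Psi_1-\Psi_2)$. Using $e^{-\int_0^h c(Y_u)du}\le e^{-c_0h}$ almost surely together with $|\Psi_1-\Psi_2|(Y_h)\le\|\Psi_1-\Psi_2\|_\infty$, we obtain
\[
|\mathcal S_h(\Psi_1-\Psi_2)(y)|\le e^{-c_0h}\|\Psi_1-\Psi_2\|_\infty .
\]
Taking the supremum over $y$ gives \eqref{eq: contract}, and $e^{-c_0h}<1$ for $h>0$.

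\emph{Item \ref{prop:contrac:item3}.}
For $\Psi\in D(\mathcal A_0)\cap\mathcal U$, we apply It\^o's formula for jump-diffusions to the product
\[
Z_s:=e^{-\int_0^s c(Y_u)du}\,\Psi(Y_s).
\]
The differential of $Z_s$ is
\[
dZ_s=e^{-\int_0^s c}\big(\partial_t\Psi+\mathcal L[\Psi]+\mathcal I[\Psi]-c\Psi\big)(Y_{s})\,ds+dM_s,
\]
where $M$ collects the Brownian stochastic integral and the compensated Poisson integral. The compensator $\lambda\nu(de)ds$ of $N$ turns the jump sum $\Psi(Y_{s-}+(0,\gamma))-\Psi(Y_{s-})$ into exactly $\mathcal I[\Psi]$ with $\ell(z)=z$.

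To handle the martingale part, we localize with stopping times $\tau_n$ (exit of $Y$ from compacts of $D_T$), so that $M_{\cdot\wedge\tau_n}$ is a true martingale. Taking expectations gives
\[
\mathcal S_h\Psi(y)-\Psi(y)=\mathbb E\int_0^h e^{-\int_0^s c}\,\mathcal A_0\Psi(Y_s)\,ds ,
\]
up to the localization. We then divide by $h$ and let $h\downarrow0$. The limit is justified by the right-continuity of $Y$ at $0$, the continuity of the integrand at $y\in D_T$, and dominated convergence, using $\mathbb P(\tau_n\le h)\to0$ for $y$ in the interior. Since the limit defining $\mathcal A_0\Psi$ exists by assumption, it must coincide pointwise with \eqref{inf_gen}.

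The main obstacle is the integrability needed to pass limits through the jump term. We would first try bounding it using finiteness of the jump expectation (part of the definition of $\mathcal U$), boundedness of $\lambda$, and boundedness of $\Psi\in\bar{\mathcal C}$. The latter makes $|\Psi(\cdot+\gamma)-\Psi|\le2\|\Psi\|_\infty$ uniformly, so dominated convergence applies directly.
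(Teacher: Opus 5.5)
Your route is the paper's. You use the Markov and tower property for the semigroup identity, positivity of the discount factor for monotonicity, and the pathwise bound $e^{-\int_0^h c(Y_u)\,du}\le e^{-c_0h}$ for the contraction. For the generator you use It\^o's product rule, a Dynkin identity, division by $h$, and uniqueness of pointwise limits. Your extra check that $\mathcal S_h$ maps $\bar{\mathcal C}$ into itself is something the paper skips. It relies on hypotheses the paper does not state: Lipschitz coefficients, and uniform continuity of $c$ and of the coefficients in time, whereas the paper only assumes $c$ continuous and bounded.

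The step that does not go through as written is removing the localization in item (iii). Write $\Phi:=\partial_t\Psi+\mathcal L[\Psi]+\mathcal I[\Psi]-c\Psi$, and put $\Phi$ rather than $\mathcal A_0\Psi$ inside your Dynkin integral; otherwise the identification is circular. For fixed $n$, your dominated-convergence argument with $\mathbb P(\tau_n\le h)\to 0$ correctly gives $h^{-1}\bigl(\mathbb E[Z_{h\wedge\tau_n}]-\Psi(y)\bigr)\to\Phi(y)$. But you also need $h^{-1}\,\mathbb E[Z_h-Z_{h\wedge\tau_n}]\to 0$, and $\mathbb P(\tau_n\le h)\to 0$ does not give this. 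Boundedness of $Z$ only yields the bound $2\|\Psi\|_\infty\,\mathbb P(\tau_n<h)/h$. With jumps of unbounded size (e.g.\ the Gaussian jumps in \eqref{linear_PDE}), a single jump in $[0,h]$ leaves any fixed compact with probability of order $h$, so this bound does not tend to zero.

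The fix uses the domain assumption. By the strong Markov property at $\tau_n$,
\[
\mathbb E\bigl[Z_h-Z_{h\wedge\tau_n}\bigr]=\mathbb E\Bigl[\mathbf 1_{\{\tau_n<h\}}\,e^{-\int_0^{\tau_n}c(Y_u)\,du}\,\bigl(\mathcal S_{h-\tau_n}\Psi-\Psi\bigr)(Y_{\tau_n})\Bigr].
\]
For $\Psi\in D(\mathcal A_0)$, the map $s\mapsto\mathcal S_s\Psi$ is continuous with right derivative $\mathcal S_s\mathcal A_0\Psi$, and $\|\mathcal S_s\|\le 1$. Hence $\|\mathcal S_r\Psi-\Psi\|_\infty\le r\|\mathcal A_0\Psi\|_\infty$, and the left-hand side above is at most $h\|\mathcal A_0\Psi\|_\infty\,\mathbb P(\tau_n<h)=o(h)$.

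With this patch your argument is more careful than the paper's. The paper simply asserts that $\int_0^h e^{-\int_0^s c(Y_u)\,du}\,dM_s$ is square-integrable with zero mean. For the Brownian part that needs control of $\nabla_x\Psi$ along paths, which boundedness of $\Psi$ does not supply.
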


	In particular, if $\Psi = u$ is solution of the PIDE \eqref{intdiff}, then $\mathcal{A} \Psi(t,x) =  \mathcal{A}_0\Psi(t,x) + f(t,x) = 0$ for all $(t,x) \in D_T$, see  \eqref{HJBB}. Moreover, since the affine part $f$ cancels when taking differences, the contraction estimate \eqref{eq: contract} for $\mathcal S_h$ immediately yields the same contraction modulus for $\mathcal T_h$. \vspace{-1.5mm}
	\subsection{A contractive simulation-free operator}
	The update rule \eqref{ope1b} is based on the operator $\overline{\mathcal{T}}$ acting on $\mathcal{D}(\mathcal{A}_0) \cap \mathcal{U}$, which can be written as
	$$\overline{\mathcal{T}} \Psi  = \Psi +  \xi	\mathcal{A} \Psi  = \Psi	+ \xi \lim_{h \downarrow 0} \frac{1}{h}(\mathcal{T}_{h}\Psi - \Psi) \, ,$$
    using the decomposition \eqref{eq: decomposition} and the fact that $\lim_{h\to 0} \mathcal{R}_h/h =f$ in $\bar{\mathcal{C}}$. By Lemma \ref{lemma 1}, the exact solution $u$ is a  fixed point of $\overline{\mathcal{T}}$. However,  $\overline{\mathcal{T}}$ is not, in general, a contraction. We therefore introduce instead  of $\overline{\mathcal{T}}$ the following \textit{relaxed operator} based on the Feynman--Kac affine family $(\mathcal{T}_h)_{0 < h\le T-t}$.
    \begin{Proposition} \label{Prop:modified}
       Let \((\widetilde{\mathcal T}_h)_{0 < h\le T-t}\) be the family of affine operators on the Banach space \(\bar{\mathcal C}\) of bounded, uniformly continuous functions on \(\bar D_T\), defined for $\alpha > 0$ by
    \begin{equation} \label{mod_operator}
		\widetilde{\mathcal{T}}_h\Psi := \Psi	+ \frac{\alpha}{h} \, (\mathcal{T}_{h}\Psi - \Psi) \, = \Big(1-\frac{\alpha}{h}\Big) \Psi + \frac{\alpha}{h}\mathcal{T}_{h}\Psi \, .
	\end{equation}
    \begin{enumerate}[align=left,leftmargin=*, labelsep=-0.2em, label=(\roman*)]
    \item \label{prop:mod:item2} Assume that \eqref{eq:assumption_c} holds, and let $\rho_h := e^{-c_0 h}$. If 
    $0<\alpha <2h/(1+\rho_h)$,
    then \(\widetilde{\mathcal T}_h\) is a strict contraction on \(\bar{\mathcal C}\), with contraction constant
    \[
  L_h
    =
    \left|1-\frac{\alpha}{h}\right|
    +
    \frac{\alpha}{h}\rho_h \in (0,1) \, .
    \]
    \item  The operator \label{prop:mod:item3}$\widetilde{\mathcal{T}}_h$ admits the solution $u$ of the PIDE \eqref{intdiff}--\eqref{termdiff} as its unique fixed point.
    \end{enumerate}

    \end{Proposition}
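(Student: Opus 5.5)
For part (i), I will work directly with the affine form $\widetilde{\mathcal T}_h\Psi = (1-\tfrac{\alpha}{h})\Psi + \tfrac{\alpha}{h}\mathcal T_h\Psi$. The first step is to check that $\widetilde{\mathcal T}_h$ maps $\bar{\mathcal C}$ into itself. It is a linear combination of the identity and $\mathcal T_h = \mathcal R_h + \mathcal S_h$, so it suffices that $\mathcal R_h$ and $\mathcal S_h\Psi$ are bounded and uniformly continuous. Boundedness is immediate from the boundedness of $f$ and $\Psi$ together with $c\ge c_0>0$. Uniform continuity I take from the standing assumption in Proposition~\ref{prop:contrac}, which already treats $\mathcal S_h$ as acting on $\bar{\mathcal C}$; the same argument applies to $\mathcal R_h$.

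Next, for $\Psi_1,\Psi_2\in\bar{\mathcal C}$ the source term $\mathcal R_h$ cancels in the difference:
\[
\widetilde{\mathcal T}_h\Psi_1-\widetilde{\mathcal T}_h\Psi_2=\Big(1-\frac{\alpha}{h}\Big)(\Psi_1-\Psi_2)+\frac{\alpha}{h}\big(\mathcal S_h\Psi_1-\mathcal S_h\Psi_2\big).
\]
I then apply the triangle inequality in $\|\cdot\|_\infty$ and the contraction estimate \eqref{eq: contract} from Proposition~\ref{prop:contrac}\ref{prop:contrac:item2}. Since $\alpha/h>0$, this gives $\|\widetilde{\mathcal T}_h\Psi_1-\widetilde{\mathcal T}_h\Psi_2\|_\infty\le L_h\|\Psi_1-\Psi_2\|_\infty$ with $L_h=|1-\alpha/h|+(\alpha/h)\rho_h$.

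It remains to show $L_h\in(0,1)$, which I do by cases, writing $a:=\alpha/h>0$ and noting $\rho_h\in(0,1)$ because $h>0$ and $c_0>0$.
\begin{itemize}
\item If $a\le 1$, then $L_h=1-a(1-\rho_h)$. This is strictly less than $1$ because $a>0$ and $\rho_h<1$, and it is positive because $a(1-\rho_h)<1$.
\item If $a>1$, then $L_h=a(1+\rho_h)-1$. This is positive since $a>1$, and it is strictly less than $1$ exactly when $a<2/(1+\rho_h)$, i.e.\ $\alpha<2h/(1+\rho_h)$.
\end{itemize}
The hypothesis on $\alpha$ is therefore precisely what the second case needs, and $\widetilde{\mathcal T}_h$ is a strict contraction.

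For part (ii), the fixed-point equation $\widetilde{\mathcal T}_h\Psi=\Psi$ is equivalent, since $\alpha/h\neq0$, to $\mathcal T_h\Psi=\Psi$. The task is then to show that the PIDE solution $u$ satisfies $\mathcal T_h u=u$, which I read as the Markov or flow property of the Feynman--Kac representation. By Theorem~\ref{thm:fk}, $u(y)=\mathcal T_{t,T}\varphi(x)$. Conditioning on $\mathcal F_{t+h}$ and using the Markov property of the space-time process $Y$, I split $\int_0^{T-t}$ at $h$ and factor the discount $e^{-\int_0^h c}$. This gives $u(y)=\mathbb E[\int_0^h e^{-\int_0^s c}f(Y_s)\,ds+e^{-\int_0^h c}u(Y_h)\mid Y_0=y]=\mathcal T_hu(y)$. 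I also need $u\in\bar{\mathcal C}$, which follows from $u\in C(\bar D_T)$ bounded by $\|f\|_\infty T+\|\varphi\|_\infty$; uniform continuity I assume as part of the setting, as is implicit in the statement.

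Uniqueness then follows from part (i) by the Banach fixed-point theorem: $\widetilde{\mathcal T}_h$ is a strict contraction on the complete space $\bar{\mathcal C}$, so any two fixed points coincide. The main obstacle is the semigroup identity $\mathcal T_h u=u$. It needs the strong Markov property of the jump-diffusion and careful handling of the time horizon, for $h\le T-t$ where $Y_h$ stays in $\bar D_T$. I would justify this by appealing to the semigroup property of Proposition~\ref{prop:contrac}\ref{prop:contrac:item1} and the tower property of conditional expectation.
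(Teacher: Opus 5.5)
Your proposal is correct and follows essentially the same route as the paper. Like the paper, you cancel the affine part in the difference and bound it with the triangle inequality and the contraction estimate $\rho_h$ from Proposition~\ref{prop:contrac}. You then split into the cases $\alpha\le h$ and $\alpha>h$, and get part (ii) from $\mathcal T_h u=u$ plus Banach's theorem. The extra details you add (checking that $\widetilde{\mathcal T}_h$ maps $\bar{\mathcal C}$ into itself, and deriving $\mathcal T_h u=u$ from the Markov property rather than citing the Feynman--Kac theorem directly) are consistent with the paper's argument and do not change it.
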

First, the \textit{relaxation} parameter $\alpha$ in \eqref{mod_operator} is allowed to differ from the \textit{scaling} parameter $\xi$ in $\overline{\mathcal{T}}$. This additional degree of freedom will play an important role in Proposition \ref{Prop: conv}, where it improves the flexibility and stability of the scheme.  
%note first that $\alpha$ can be different from $\xi$ as we will show in Proposition \ref{Prop: conv}. A
Second, although $\widetilde{\mathcal T}_h$ is contractive for  $0<\alpha <2h/(1+\rho_h)$, a direct implementation of this operator would still require the evaluation of $\mathcal T_h$ in \eqref{semigroupbb}, which in turn involves simulating the dynamics of
$Y_s$ and approximating the conditional expectation by Monte Carlo averages over many sample paths. Such a procedure tends to be computationally expensive, introduces time-discretization
errors, and yields only a local approximation for fixed initial conditions $y$. Instead, we exploit the following expansion.

\begin{Proposition}\label{ass:series}
Fix $n\in\mathbb N$ and assume $\Psi\in D(\mathcal{A}^n_0), f \in D(\mathcal{A}^{n-1}_0)$. %\footnote{A typical sufficient condition is  $\Psi,f\in C_b^{n,2n}(\bar D_T) \cap \mathcal{U}$ with  smooth coefficients $b, \sigma, \gamma,c, \lambda$ having bounded derivatives up to order \(n\), and that the jump term admits the required integrability so that \(\mathcal A_0^k\Psi\in \bar C\)  and \(\mathcal A_0^k f\in \bar C\) for \(k\le n\).}.
 Then, there exist remainder terms $R^\Psi_n(h), R^f_{n}(h) \in \bar{\mathcal{C}}$ such that, as $h\to 0$,
\begin{equation}\label{eq:Sh-expansion}
\mathcal S_h\Psi
=
\sum_{j=0}^{n-1}\frac{h^j}{j!}\,\mathcal A_0^{\,j}\Psi
+
R_n^\Psi(h),
\qquad
\|R_n^\Psi(h)\|_\infty=\mathcal O(h^n), 
\end{equation}
\begin{equation}\label{eq:Rh-expansion}
\mathcal R_h
=
\sum_{j=0}^{n-2}\frac{h^{j+1}}{(j+1)!}\,\mathcal A_0^{\,j}f
+
R_{n}^f(h),
\qquad
\|R_{n}^f(h)\|_\infty=\mathcal O(h^{n}).
\end{equation}  Consequently, the Feynman--Kac operator \eqref{semigroupbb} admits the expansion
\begin{align} \label{eq_mod_op}
	\mathcal{T}_{h}\Psi %&= \Psi +\sum_{j=1}^{\infty}\frac{h^j}{j!} \left(	\mathcal{A}_0^j \Psi + \mathcal{A}_0^{j-1} f \right)\\
	&= \Psi +\sum_{j=1}^{n-1}\frac{h^j}{j!} \left(	\mathcal{A}_0^j \Psi + \mathcal{A}_0^{j-1} f \right) + R_n(h) \, ,
\end{align}
where $
R_n(h):=R_n^\Psi(h)+R_n^f(h)\in\bar{\mathcal C},
$ and $
\|R_n(h)\|_\infty=\mathcal O(h^n).$ \\[1.2mm]
Moreover, if \(n\ge 2\) and \(0<\alpha<2h/(1+\rho_h)\), then the relaxed operator \eqref{mod_operator} admits the expansion
	\begin{align}
		\widetilde{\mathcal{T}}_h \Psi  = \Psi + \alpha \left[ \mathcal{A} \Psi +  \sum_{j=2}^{n-1}\frac{h^{j-1}}{j!} \left(	\mathcal{A}_0^j \Psi + \mathcal{A}_0^{j-1} f \right) +  R_{n-1}(h) \right] , \label{fin_approx}
	\end{align}
where $\mathcal A\Psi = \mathcal A_0\Psi + f$ and $\|R_{n-1}(h)\|_\infty = \mathcal O(h^{n-1})$.
\end{Proposition}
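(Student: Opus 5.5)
The plan is to use the standard semigroup Taylor formula with integral remainder. For \eqref{eq:Sh-expansion} we use that $(\mathcal S_h)$ is a strongly continuous contraction semigroup on $\bar{\mathcal C}$ with generator $\mathcal A_0$, by Proposition~\ref{prop:contrac}. For $\Psi\in D(\mathcal A_0^n)$ the map $h\mapsto \mathcal S_h\Psi$ is then $n$ times differentiable in $\bar{\mathcal C}$, with $\frac{d^j}{dh^j}\mathcal S_h\Psi=\mathcal S_h\mathcal A_0^j\Psi$. Iterating $\mathcal S_h\Phi-\Phi=\int_0^h \mathcal S_s\mathcal A_0\Phi\,ds$ applied to $\Phi=\mathcal A_0^j\Psi$, $j=0,\dots,n-1$, gives
\[
\mathcal S_h\Psi=\sum_{j=0}^{n-1}\frac{h^j}{j!}\mathcal A_0^j\Psi+\int_0^h\frac{(h-s)^{n-1}}{(n-1)!}\,\mathcal S_s\mathcal A_0^n\Psi\,ds .
\]
Since $\|\mathcal S_s\|\le 1$ by \eqref{eq: contract}, the remainder $R_n^\Psi(h)$ has norm at most $\frac{h^n}{n!}\|\mathcal A_0^n\Psi\|_\infty=\mathcal O(h^n)$. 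It lies in $\bar{\mathcal C}$ because it is a Bochner integral of a continuous $\bar{\mathcal C}$-valued map.

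For \eqref{eq:Rh-expansion}, the key step is to rewrite $\mathcal R_h=\int_0^h \mathcal S_s f\,ds$. This follows from \eqref{semigroupa} by Fubini and the Markov property of $Y$: conditioning at time $s$ shows that $\mathbb E[e^{-\int_0^s c}f(Y_s)\mid Y_0=y]=\mathcal S_s f(y)$. We then insert the expansion above with $n-1$ in place of $n$ for $f\in D(\mathcal A_0^{n-1})$ and integrate term by term:
\[
\int_0^h\sum_{j=0}^{n-2}\frac{s^j}{j!}\mathcal A_0^jf\,ds=\sum_{j=0}^{n-2}\frac{h^{j+1}}{(j+1)!}\mathcal A_0^jf .
\]
The remainder $\int_0^h R^f_{n-1}(s)\,ds$ is $\mathcal O(h^n)$, since $\|R^f_{n-1}(s)\|_\infty\le C s^{n-1}$ uniformly for small $s$. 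We then add the two expansions using \eqref{eq: decomposition} and reindex the $f$-sum by $j\mapsto j-1$. This gives \eqref{eq_mod_op} with $R_n=R_n^\Psi+R_n^f$.

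For \eqref{fin_approx}, we substitute \eqref{eq_mod_op} into \eqref{mod_operator}:
\[
\widetilde{\mathcal T}_h\Psi=\Psi+\frac{\alpha}{h}\Big[\sum_{j=1}^{n-1}\frac{h^j}{j!}\big(\mathcal A_0^j\Psi+\mathcal A_0^{j-1}f\big)+R_n(h)\Big].
\]
The $j=1$ term contributes $\alpha(\mathcal A_0\Psi+f)=\alpha\mathcal A\Psi$. The remaining terms each lose one power of $h$. The new remainder $h^{-1}R_n(h)$, which we rename $R_{n-1}(h)$, satisfies $\|h^{-1}R_n(h)\|_\infty=\mathcal O(h^{n-1})$. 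Here $n\ge2$ is needed so that $\mathcal A$ appears, i.e.\ so that the $j=1$ term is present. The condition on $\alpha$ plays no role in the algebra; it only ensures, by Proposition~\ref{Prop:modified}, that the expanded operator is the contractive one.

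The main obstacle is the identity $\mathcal R_h=\int_0^h\mathcal S_sf\,ds$, together with the strong continuity of $(\mathcal S_h)$ on $\bar{\mathcal C}$. The latter is needed to make the Bochner integrals and the differentiation of $h\mapsto\mathcal S_h\Psi$ rigorous. We would handle strong continuity by appealing to the definition of the generator domain in Proposition~\ref{prop:contrac}\ref{prop:contrac:item3}. Indeed, for $\Phi\in D(\mathcal A_0)$ the derivative $\frac{d}{dh}\mathcal S_h\Phi=\mathcal S_h\mathcal A_0\Phi$ holds on $[0,T-t]$ via the semigroup property, and continuity of $s\mapsto\mathcal S_s\mathcal A_0^n\Psi$ follows from differentiability of $s\mapsto\mathcal S_s\mathcal A_0^{n-1}\Psi$ in norm. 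That fact suffices for every integral used above.
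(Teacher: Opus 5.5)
Your proposal is correct and follows the paper's proof essentially step for step. Both use the integral-remainder Taylor formula for $\mathcal S_h$ bounded by contractivity, then $\mathcal R_h=\int_0^h\mathcal S_s f\,ds$, then division by $h$; your integrated remainder $\int_0^h R^f_{n-1}(s)\,ds$ coincides, by Fubini, with the paper's $\frac{1}{(n-1)!}\int_0^h(h-s)^{n-1}\mathcal S_s\mathcal A_0^{n-1}f\,ds$.

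Your attention to strong continuity goes beyond the paper, which asserts the Taylor formula without comment. One justification needs correcting: continuity of $s\mapsto\mathcal S_s\mathcal A_0\Phi$ does not follow from differentiability alone. It follows because the right difference quotients $\mathcal S_s\bigl[(\mathcal S_\epsilon\Phi-\Phi)/\epsilon\bigr]$ converge to it uniformly in $s$, by contractivity, and a uniform limit of continuous maps is continuous.
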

	\subsection{An efficient training algorithm} \label{section:efficient algo}
As $\mathcal{A}_0$ is a second-order partial integro-differential operator, evaluating $\mathcal{A}_0^j$  requires computing  partial derivatives of order up to $2j$,  which quickly becomes computationally prohibitive. The next proposition shows that repeated minimization of the objective \eqref{notyet3} generates a candidate that approximates both the truncated expansion  \eqref{eq_mod_op} of the Feynman--Kac operator $\mathcal{T}_h$ and the truncated expansion \eqref{fin_approx} of the relaxed operator $\widetilde{\mathcal T}_h$, while avoiding the explicit computation of the expensive\,$\mathcal{A}_0^j$'s. 
\begin{Proposition}\label{Prop: conv}
Fix an epoch $k$ and a number of inner steps $n\geq 2$. Set $u_0 := u_{\theta^{(k)}} \in D(\mathcal A_0^n)$  to be an hPINN satisfying the terminal condition \eqref{termdiff} and let  $\xi = h/(n-1)$, $0<\alpha <2h/(1+\rho_h)$, $f\in D(\mathcal A_0^{n-1})$. Define the $(n-1)$-step inner recursion for \(i=0,\dots,n-2\),
\begin{equation}
  u_{i+1}:=u_{\theta_{i+1}}, \qquad    \theta_{i+1}\in \arg\min_{\theta}\,  \mathbb E\Big[\big(u_\theta(Y)-\mathcal{G}_\xi(Y,E_1,u_i)\big)^2\Big] \, ,
    \label{eq:inner_min1}
\end{equation}
%\begin{equation}
%    u_{i+1}
%=
%\arg\min_{\theta}\,
%\mathscr L^{(k,i)} (\theta  \, ; \xi),
%\qquad \hspace{-1.7mm}
%\mathscr L^{(k,i)} (\theta ; \xi)
%:=
%\mathbb E\Big( u_\theta(Y)-\mathcal{G}_\xi(Y,E_1,u_i)\Big)^2, \label{eq:inner_min1}
%\end{equation}
where $(Y, E_1) \sim \mu \otimes \nu$. Assume that the hPINN class is expressive enough to attain each minimizer $u_{i+1}$. Define the next outer iterate by the Polyak average %\todo{what are you doing in the code? Instead of fine tuning, we can save old neural networks and do an interpolation of the two outputs}
\begin{equation}\label{eq:relaxed-update-clean}
u_{\theta^{(k+1)}}
:=
\left(1-\frac{\alpha}{h}\right)u_{\theta^{(k)}}
+
\frac{\alpha}{h}  u_{n-1}.
\end{equation}
Then
\begin{equation}\label{eq:conv-relaxed}
u_{\theta^{(k+1)}}
=
u_{\theta^{(k)}}+\alpha\sum_{j=1}^{n-1}\frac{h^{j-1}}{j!}
\Big(\mathcal A_0^j u_{\theta^{(k)}}+\mathcal A_0^{j-1}f\Big)
+
\frac{\alpha}{h}E_{n-1}(h),
\end{equation}
where the remainder $E_{n-1}(h)$  is given by \eqref{eq:Enh} and is such that
\[
\left\|\frac{\alpha}{h}E_{n-1}(h)\right\|_\infty
=
\mathcal O\!\left(\frac{\alpha h}{n-1}\right),
\qquad h\downarrow 0.
\]
In particular, the relaxed operator $\widetilde{\mathcal{T}}_h$ is such that
\begin{equation}\label{eq:conv-relaxed-Ttilde}
\|u_{\theta^{(k+1)}}-\widetilde{\mathcal T}_h u_{\theta^{(k)}}\|_\infty
\le
\frac{\alpha}{h}\|E_{n-1}(h)\|_\infty
+
\alpha\|R_{n-1}(h)\|_\infty
=
\mathcal O\!\left(\frac{\alpha h}{n-1}\right)
+
\mathcal O(\alpha h^{\,n-1}).
\end{equation}
%\begin{equation}\label{eq:conv-inner}
% u_{\theta^{(k+1)}}
%=
%u_{\theta^{(k)}}+\sum_{j=1}^{n-1}\frac{h^j}{j!}
%\Big(\mathcal A_0^j u_{\theta^{(k)}}+\mathcal A_0^{j-1}f\Big)
%+
%E_n(h),
%\end{equation}
%where the remainder $E_{n-1}(h)$  is given by \eqref{eq:Enh} and is such that
%\[
%\|E_{n-1}(h)\|_\infty=\mathcal O\!\left(\frac{h^2}{n-1}\right),
%\qquad h\downarrow 0.
%\]
Moreover, when $\alpha = h$,  the Polyak average \eqref{eq:relaxed-update-clean} reduces to $u_{\theta^{(k+1)}}=u_{n-1}$ and \(\widetilde{\mathcal T}_h=\mathcal T_h\). Hence,
\begin{equation}\label{eq:conv-inner-Th}
\big\| u_{\theta^{(k+1)}}-\mathcal T_hu_{\theta^{(k)}}\big\|_\infty
\le
\|E_{n-1}(h)\|_\infty+\|R_n(h)\|_\infty
=
\mathcal O\!\left(\frac{h^2}{n-1}\right)+\mathcal O(h^n).
\end{equation}
%\todo{{\small I also have new idea, potentially we can fix $m$ and calculate $\widetilde{\mathcal T}_h$ exactly for m steps (computationally we would need to save $m$ neural networks because of the recursion). At the end of this cycle we fine tune a single neural networks to predict u and restart the cycle.}}

\end{Proposition}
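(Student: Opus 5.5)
By Lemma~\ref{lemma 1} and the expressivity assumption, each inner minimizer satisfies $u_{i+1}=\overline{\mathcal T}u_i=u_i+\xi\mathcal A u_i=(I+\xi\mathcal A_0)u_i+\xi f$, $\mu$-a.e.\ on $D_T$. Since $\mu$ has full support and both sides are continuous, the identity holds everywhere on $D_T$; by the hPINN structure \eqref{hPINN}, all iterates share the terminal values, so it extends to $\bar D_T$. Unrolling this affine recursion $n-1$ times gives
\begin{equation*}
u_{n-1}=(I+\xi\mathcal A_0)^{n-1}u_0+\xi\sum_{i=0}^{n-2}(I+\xi\mathcal A_0)^{i}f.
\end{equation*}
This is legitimate because $u_0\in D(\mathcal A_0^n)$ and $f\in D(\mathcal A_0^{n-1})$, so every power appearing is well defined in $\bar{\mathcal C}$. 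Expanding binomially gives
\begin{equation*}
u_{n-1}=\sum_{j=0}^{n-1}\binom{n-1}{j}\xi^j\mathcal A_0^ju_0+\sum_{j=0}^{n-2}\Big(\xi\sum_{i=j}^{n-2}\binom{i}{j}\Big)\xi^j\mathcal A_0^jf .
\end{equation*}
By the hockey-stick identity the $f$-coefficient is $\binom{n-1}{j+1}\xi^{j+1}$.

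Next, insert $\xi=h/(n-1)$ and compare with the target coefficients $h^j/j!$ from Proposition~\ref{ass:series}. We have
\begin{equation*}
\binom{n-1}{j}\Big(\frac{h}{n-1}\Big)^j=\frac{h^j}{j!}\prod_{m=0}^{j-1}\Big(1-\frac{m}{n-1}\Big).
\end{equation*}
The terms $j=0,1$ match exactly, and for $j\ge 2$ the relative defect is $1-\prod_{m<j}(1-m/(n-1))\le \binom{j}{2}/(n-1)$. Define
\begin{equation}\label{eq:Enh}
E_{n-1}(h):=\sum_{j=2}^{n-1}\Big[\binom{n-1}{j}\xi^j-\frac{h^j}{j!}\Big]\mathcal A_0^ju_0+\sum_{j=1}^{n-2}\Big[\binom{n-1}{j+1}\xi^{j+1}-\frac{h^{j+1}}{(j+1)!}\Big]\mathcal A_0^jf .
\end{equation}
Then $u_{n-1}=u_0+\sum_{j=1}^{n-1}\frac{h^j}{j!}(\mathcal A_0^ju_0+\mathcal A_0^{j-1}f)+E_{n-1}(h)$. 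Because $n$ is fixed and $h\downarrow0$, the leading term of $E_{n-1}(h)$ is the $j=2$ term, namely $-\tfrac{h^2}{2(n-1)}(\mathcal A_0^2u_0+\mathcal A_0f)$. All higher terms are $\mathcal O(h^3)$ with bounded norms $\|\mathcal A_0^ju_0\|_\infty$ and $\|\mathcal A_0^{j}f\|_\infty$. Hence $\|E_{n-1}(h)\|_\infty=\mathcal O(h^2/(n-1))$.

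The Polyak step is then pure algebra: $(1-\alpha/h)u_0+(\alpha/h)u_{n-1}=u_0+(\alpha/h)(u_{n-1}-u_0)$. Substituting the expansion of $u_{n-1}$ gives \eqref{eq:conv-relaxed}, and the remainder bound $\|(\alpha/h)E_{n-1}(h)\|_\infty=\mathcal O(\alpha h/(n-1))$ follows. To get \eqref{eq:conv-relaxed-Ttilde}, subtract the expansion \eqref{fin_approx} of $\widetilde{\mathcal T}_hu_0$. Its $j=1$ term equals $\alpha\mathcal A u_0$, so the polynomial parts cancel exactly. What remains is $(\alpha/h)E_{n-1}(h)-\alpha R_{n-1}(h)$, and the triangle inequality gives the bound. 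For $\alpha=h$ the average collapses to $u_{n-1}$ and $\widetilde{\mathcal T}_h=\mathcal T_h$. Subtracting \eqref{eq_mod_op} from the expansion of $u_{n-1}$ leaves $E_{n-1}(h)-R_n(h)$, which gives \eqref{eq:conv-inner-Th}.

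The main obstacle is justifying that the $\mu$-a.e.\ identity from Lemma~\ref{lemma 1} upgrades to an identity in $\bar{\mathcal C}$ at each inner step. For this, each $u_i$ must stay in $\mathcal U\cap D(\mathcal A_0^{n-i})$, so that the next application of $\mathcal A$ is classical and the unrolled powers make sense. I would handle this by observing that the exact iterate $(I+\xi\mathcal A_0)^iu_0+\dots$ lies in $D(\mathcal A_0^{n-i})$ by the domain hypotheses. I would then use the expressivity assumption to identify the network $u_{i+1}$ with this continuous function, with continuity plus full support of $\mu$ giving everywhere equality. A secondary care point is the bookkeeping of the $f$-coefficients via the hockey-stick identity, and checking that the $\mathcal O(\cdot)$ constants depend only on $n$ and on the norms of $\mathcal A_0^ju_0$ and $\mathcal A_0^jf$.
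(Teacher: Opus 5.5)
Your proposal is correct and follows the paper's proof essentially step for step. Both arguments use the one-step identity $u_{i+1}=(I+\xi\mathcal A_0)u_i+\xi f$ from Lemma~\ref{lemma 1}, upgraded by continuity and full support of $\mu$, then unroll it with the binomial and hockey-stick identities, compare coefficients via $\prod_{m<j}(1-m/(n-1))$ with defect bound $\binom{j}{2}/(n-1)$, and finish with the Polyak algebra and a termwise comparison against \eqref{fin_approx} and \eqref{eq_mod_op}. Your remark about keeping each $u_i$ in $\mathcal U\cap D(\mathcal A_0^{n-i})$ is extra care that the paper leaves implicit.

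One small correction: the identity extends from $D_T$ to $t=T$ by continuity of both sides on $\bar D_T$, not because the iterates share terminal values. In fact, the shared terminal values combined with the identity force $\mathcal A u_i(T,\cdot)=0$, which is a compatibility condition hidden in the expressivity assumption.
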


Each inner minimization \eqref{eq:inner_min1} is solved by performing $N$ steps of mini-batch gradient descent, which yields an efficient and robust meshfree training procedure \cite{du2019gradient, allen2019convergence}, see INEUS  Algorithm \ref{Algo1}. Importantly, considering the relaxed operator $\widetilde{\mathcal{T}}_h$ instead of the Feynman--Kac operator $\mathcal{T}_h$ provides an additional tunable parameter $\alpha$, which increases the flexibility and stability of the scheme.  In particular, since $\alpha = h$ gives back the original Feynman--Kac operator,  $\widetilde{\mathcal{T}}_h$ contains $\mathcal{T}_h$ as a special case. Because INEUS is a recursive approach, it is known to be less stable for nonlinear function approximation than residual-based methods such as standard PINNs, see \cite{baird1995residual, cheridito2025deep}.  The relaxed operator $\widetilde{\mathcal{T}}_h$ then helps mitigate this instability of $\mathcal{T}_h$ thanks to the Polyak average \eqref{eq:relaxed-update-clean}, as confirmed in Appendix \ref{Appendix: additional},  Figure \ref{Fig: FKvsmod}.\ %Importantly, the relaxed recursion in Proposition \ref{Prop: conv} \ref{item:propconv:2} can be applied only at the end of  a selected number of epochs $k$. I
Moreover, choosing the scaling parameter \(\xi = h/(n-1)\) substantially reduces the jump-induced variance of the single-jump target \(\mathcal{G}_\xi\), as quantified  in Proposition~\ref{prop:variance_estim}. %\todo{To change this since $\xi=h/(n-1)$}This instability can also arise in our setting from the single-jump point operator $\mathcal{G}_\xi$, which introduces additional variance, although this effect is mitigated by choosing a smaller scaling parameter $\xi$, as shown in 
Finally,  the operator \(\mathcal{G}_\xi\) requires the evaluation of the second-order differential operator \(\mathcal L[u_\theta]\), which in principle involves \(\partial_t u_\theta\), \(\nabla_x u_\theta\), and \(\nabla_x^2 u_\theta\). Proposition \ref{Proposition: efficent} and Lemma \ref{lem:directional-log-transform} then further reduce the cost of evaluating $\mathcal{G}_\xi$ by replacing this explicit computation of  gradients and Hessians of \(u_\theta\) with a single directional second derivative. 

\section{Numerical results}  \label{Section: numeric}
The proposed  INEUS method for nonlinear  PIDEs is  summarized in Algorithm \ref{Algo1}.  In all examples, we use the DGM architecture of \cite{sirignano2018dgm} as it has been shown to improve PINN performance empirically, together with the hPINN encoding \eqref{hPINN}. %In our numerical experiments, we use the DGM architecture of \cite{sirignano2018dgm} as it has been shown toempirically improve PINN performance, combined with a hPINN construction. 
Further details of the network design and hyperparameters are given in Appendix \ref{app:algo}. Additional experiments on both PDEs and PIDEs are reported in Appendix \ref{Appendix: additional}.
	\paragraph{Linear PIDE with quadratic terminal condition.}
	We first assess INEUS on a benchmark problem with closed-form solution. We consider the  linear PIDE in $d$ dimensions on $D_T = [0,T) \times \mathbb{R}^d$,
{\small 	\begin{equation}
	\begin{aligned}
	\hspace{-0.1mm}	\partial_t u(t,x) \hspace{-0.2mm} + \hspace{-0.2mm} b x^\top \nabla_x u(t,x) \hspace{-0.2mm} &+ \hspace{-0.2mm} \frac{1}{2}\text{Tr}[\Sigma \Sigma^\top \nabla_x^2 u(t,x)]  \hspace{-0.2mm}- \hspace{-0.2mm}c u(t,x) 
+\lambda \hspace{-0.2mm}\int_{\R^d} \hspace{-1.1mm}\big(u(t,x+e)- u(t,x)\big) \varphi_{\Sigma_J}\hspace{-0.2mm}(e) de= 0 \, ,
	\end{aligned} \label{linear_PDE}
	\end{equation}}subject to the terminal condition $u(T,x) = \|x\|^2$, where $\Sigma \in \R^{d\times q}$ and $\varphi_{\Sigma_J}$ is the $\mathcal{N}(0,\Sigma_J)$-density with $\Sigma_J \in \R^{d\times d}$. % It is  direct to show, see \cite{cheridito2025deep}, that the solution satisfies
	%$$u(t,x) = e^{(2\xi -c) (T-t)} \|x\|^2 + b(t) \, ,$$
	%where $b(t)$ solves the one-dimensional ODE
%	$$b'(t) - c b(t)  = -\big(\text{Tr}[\Sigma \Sigma^\top] + \lambda_t \text{Tr}[\Sigma_J] \big)  e^{(2\xi -c) (T-t)} \,. $$
The corresponding solution is explicitly given by 
	\begin{equation} \label{analytical_linear}
	    u(t,x) = e^{(2b -c) (T-t)} \|x\|^2 + e^{-c(T-t)} \big(\text{Tr}[\Sigma \Sigma^\top] + \lambda \text{Tr}[\Sigma_J] \big)  \frac{e^{2b(T-t)}-1}{2b} \, .
	\end{equation}
Figure \ref{fig: linear_LQR_u} displays the INEUS approximation for this problem in dimension \(d=100\). In this high-dimensional setting with jumps, standard PINNs become computationally intractable, whereas INEUS still provides an accurate approximation of the solution.
\begin{figure}[h]
	\centering
	\begin{minipage}{.5\textwidth}
		\includegraphics[width=0.95\columnwidth, height=3.8cm]{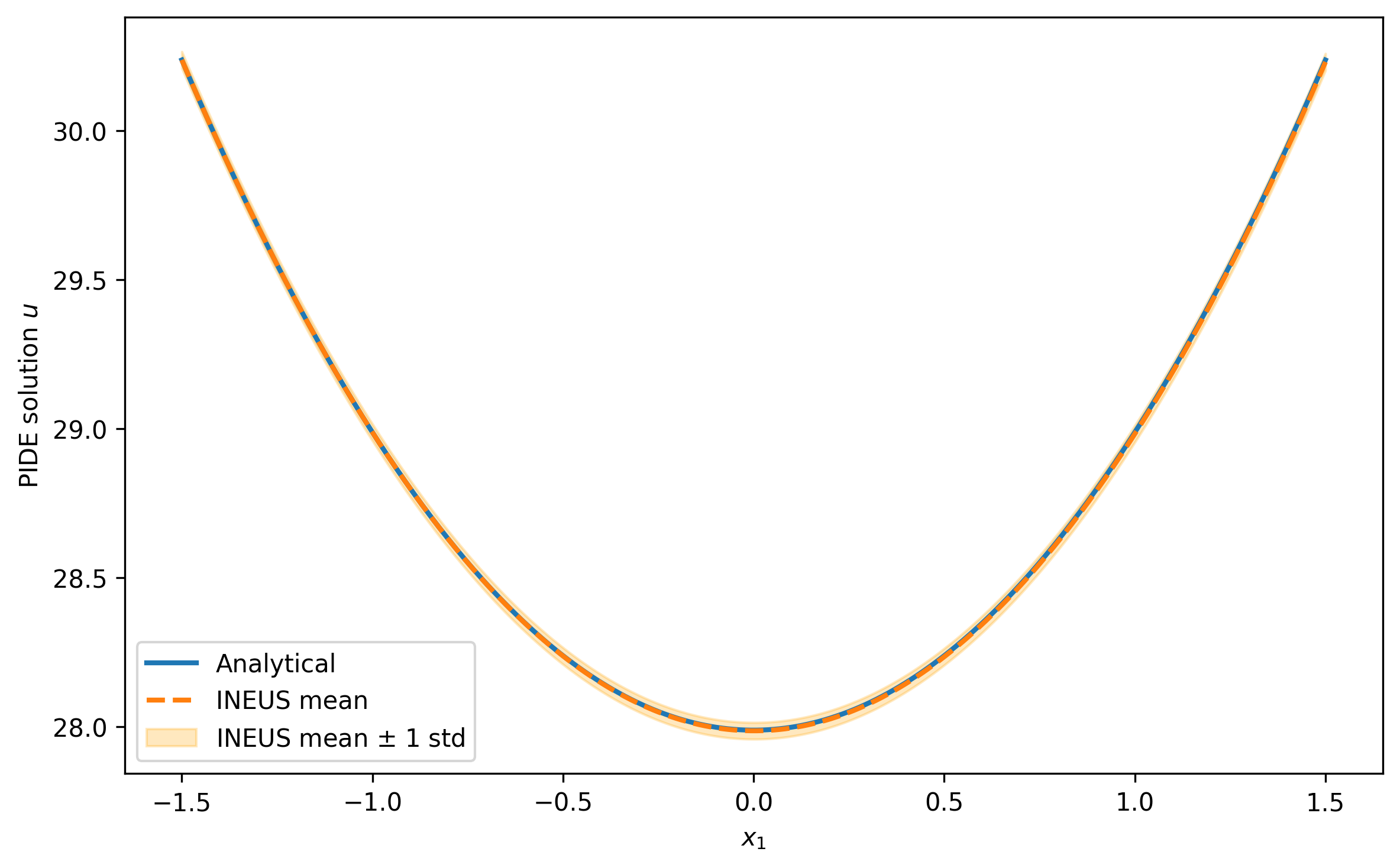}
		\label{fig:test10} \vspace{6mm}
	\end{minipage} \hspace{-4mm}
	\begin{minipage}{.5\textwidth} 	\vspace{-6mm}
\includegraphics[width=0.96\columnwidth, height=3.8cm]{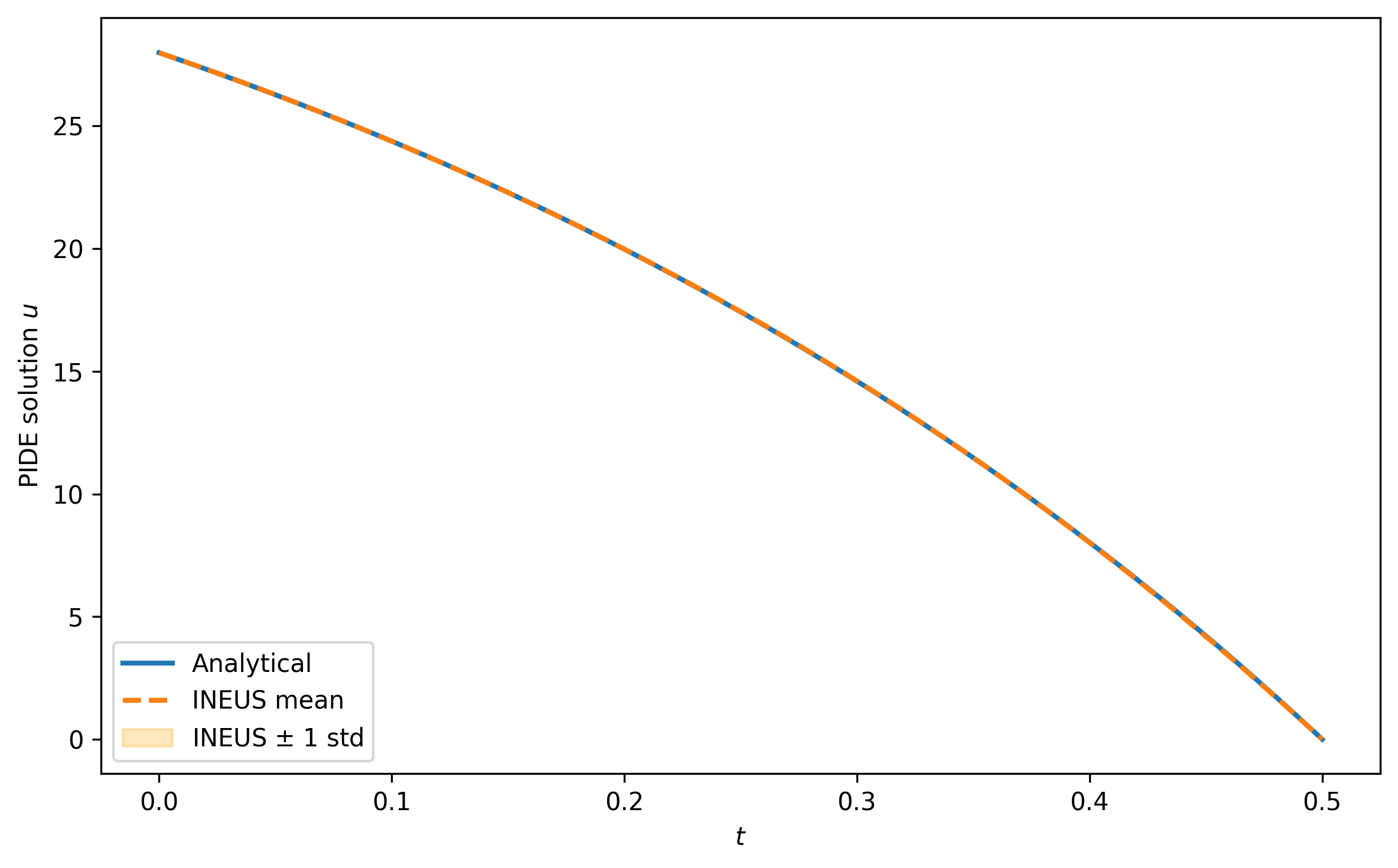}
		\label{fig:test20}
	\end{minipage}\vspace*{-6mm}
\caption{\small PIDE solution $u(0,x)$  for $x=(x_1,0, \dots, 0)$ with $x_1  \in [-1.5, 1.5]$ (left) and  $u(t,x)$ for  $t \in [0,0.5]$ and $x= \mathbf{0}_{100}$ (right) for a 100-dimensional linear PIDE \eqref{linear_PDE} with jumps. Orange dotted lines: numerical results of INEUS with $\pm 1$ standard deviation given by orange shaded area. Blue lines: analytical solution \vspace{-2mm}\eqref{analytical_linear}.}
\label{fig: linear_LQR_u} \vspace{-3mm}
\end{figure} 
\begin{figure}[h]
	\centering
    \vspace{-2mm}
	\subfigure{ \hspace{-1mm}
		\includegraphics[width=0.485\columnwidth, height=3.8cm]{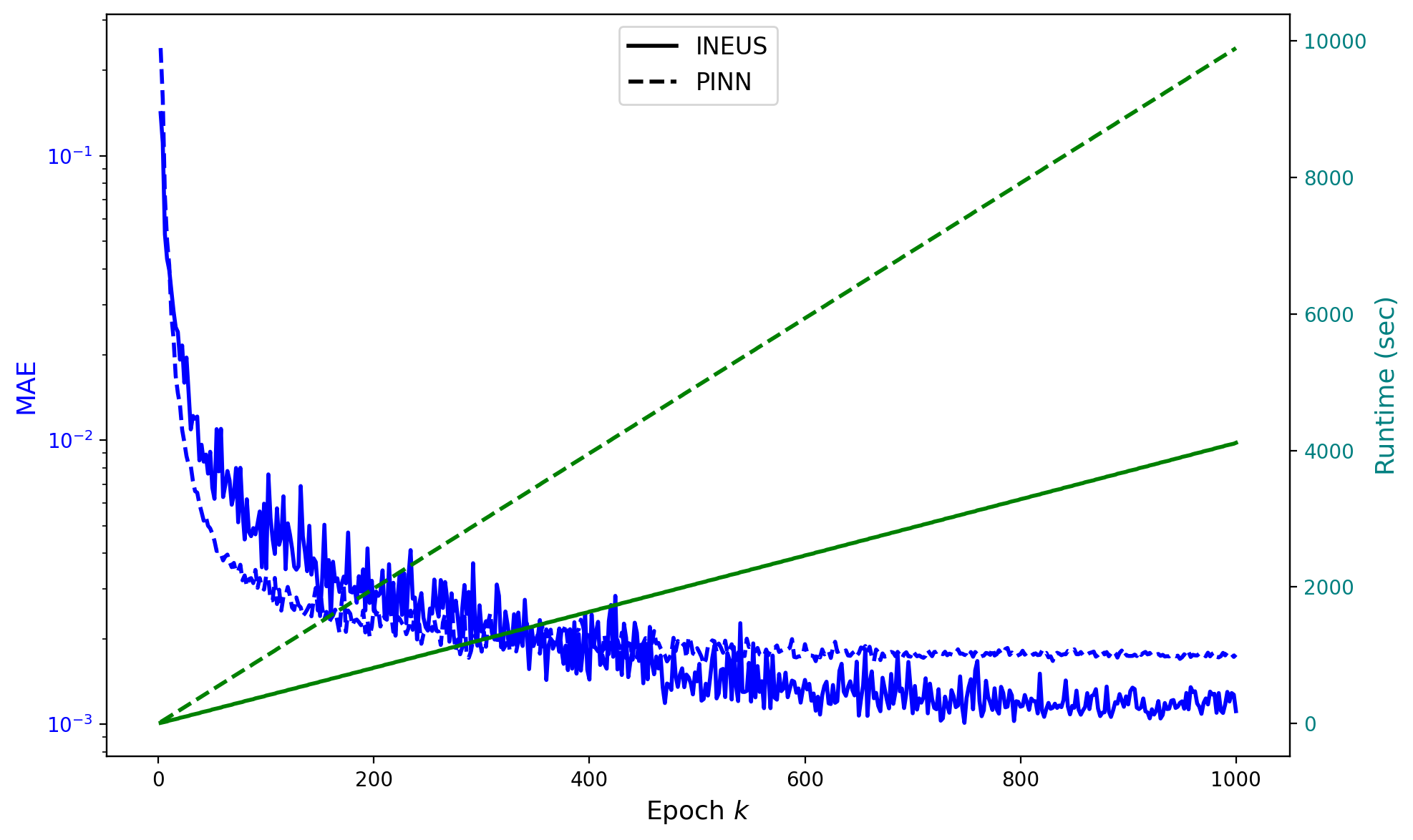}
		\label{fig:sub44}
	}
	 \hspace{-3mm}
	\subfigure{
\includegraphics[width=0.49\columnwidth, height=3.8cm]{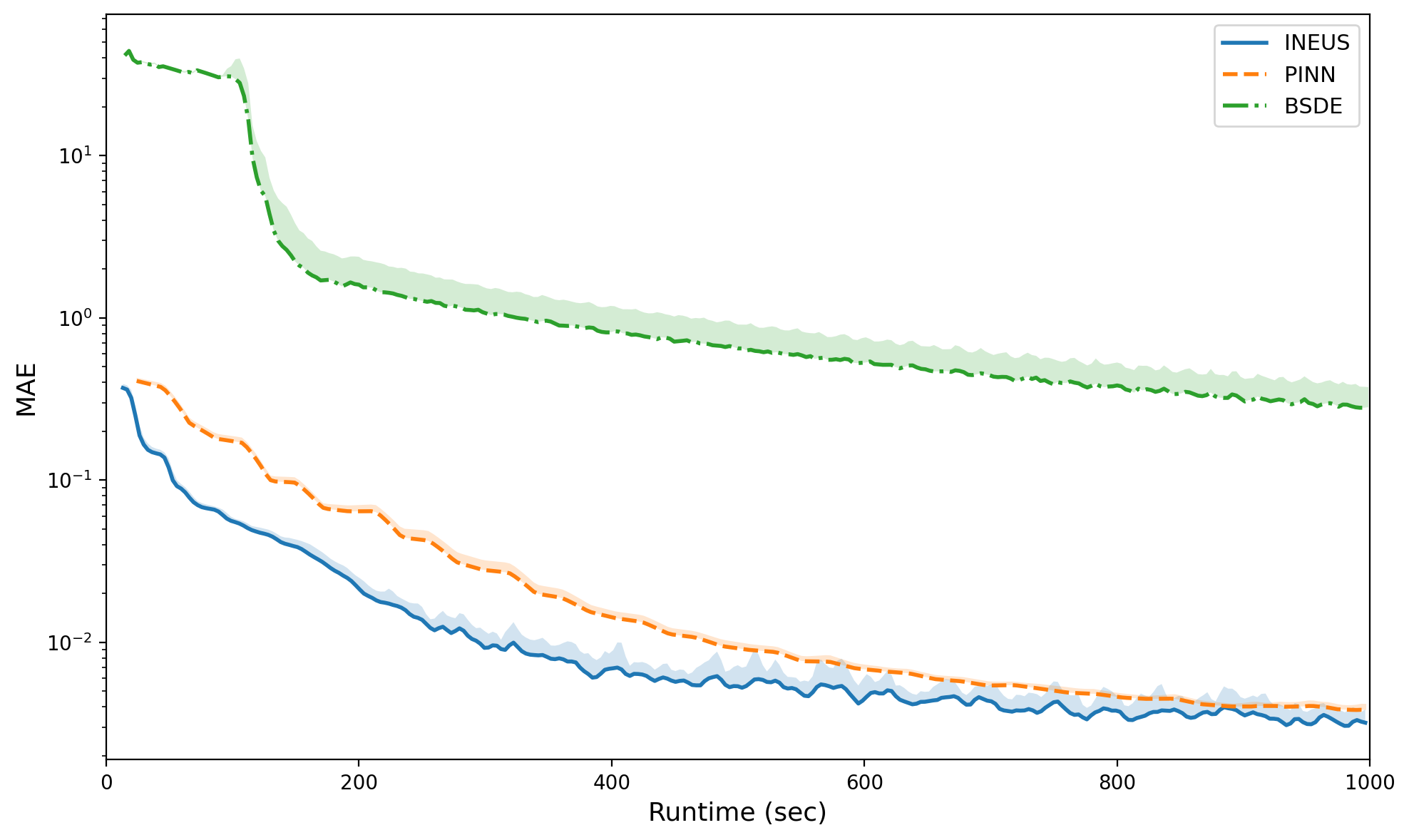}
	\label{fig:sub33}
	} \vspace{-1.95mm}
	\caption{{\small Left panel: comparison of ${\rm MAE}$ (blue) and runtime in seconds (green) of INEUS (solid lines) and  PINN (dashed lines) as functions of the epoch $k$. Right panel: comparison of INEUS, PINN, and deep BSDE with jumps as functions of runtime (in seconds). Both panels correspond to a 10-dimensional linear PIDE \eqref{linear_PDE}.}} 
	\label{fig1} 
\end{figure} \\
Figure \ref{fig1} then compares the performance of INEUS  with competing methods in $d=10$, where PINNs now remain computationally feasible.
The \vspace{-0.1mm}left plot reports the mean absolute errors of $u_{\theta^{(k)}}$ with respect to \eqref{analytical_linear} 
given \vspace{-0.1mm}by ${\small {\rm MAE} = \frac{1}{\mathcal{M}} \sum_{i=1}^{\mathcal{M}} |u_{\theta^{(k)}}\hspace{-0.37mm}(t_i, x_i) - u(t_i, x_i) | }$ 
on a test set of size $\mathcal{M}=2000$ uniformly sampled from $[0,0.5] \times [-1.5,1.5]^d$, \vspace{-0.1mm}together\hspace{-0.1mm} with\hspace{-0.1mm} runtimes\hspace{-0.1mm} as\hspace{-0.1mm} functions of the number of epochs $k$.\hspace{-0.1mm} Overall, \vspace{-0.1mm}INEUS exhibits slightly faster convergence in terms of epochs and a substantially lower computational cost. This gain\vspace{-0.1mm} comes from avoiding both the numerical evaluation of third-order \vspace{-0.1mm}derivatives and the numerical \vspace{-0.1mm}integration of the jump term. \vspace{-0.1mm}On the other hand, the residual-based structure of PINNs makes them more stable than INEUS; \vspace{-0.1mm}see \citet{baird1995residual}. On the right panel, INEUS also outperforms the deep BSDE method in terms of $\rm{MAE}$ as a function of runtime. To obtain a meaningful comparison over the whole space-time domain, we ran the jump-adapted \vspace{-0.1mm}deep BSDE  \cite{andersson2025deep} starting from many \vspace{-0.1mm}randomly sampled initial points \((t_0,x_0)\in D_T\). However, because that approach depends on simulated discretized \vspace{-0.1mm}stochastic
processes, its performance deteriorates when used to approximate a global solution, \vspace{-0.1mm}both due to discretization errors that accumulate and to its limited ability to cover the full domain. \vspace{-0.1mm}These observations are further supported by the following nonlinear examples and by additional experiments on higher-dimensional problems and PDEs; see Appendix~\ref{Appendix: additional}.\vspace{-0.5mm}

\paragraph{Nonlinear Hamilton--Jacobi--Bellman PIDE.} \label{section: nonlinear}
 \hspace{-0.5mm} Let us consider the nonlinear PIDE on $[0,T) \times \R^d$,
{\small \begin{equation} \label{nonlinear}
 	\partial_t u(t,x) + \text{Tr}[\nabla^2_x u(t,x)] - \eta \|  \nabla_x u(t,x) \|^2 - \frac{\lambda}{\eta} \hspace{-0.5mm}\int_{\R^d} \hspace{-0.75mm}\left(e^{-\eta(u(t,x+e) - u(t,x))} -1 \right) \nu(de) + f(t,x) = 0,
 \end{equation}}with terminal condition $u(T,x) = \varphi(x)$. %Define the function $\psi$ as the solution of the linear PIDE
% \begin{equation} \label{lin_PIDE}
% 	\partial_t v(t,x) + \text{Tr}[\Sigma \Sigma^\top\nabla^2_x v(t,x)] + \lambda \int_{\R^d} (v(x+z) - v(x)) \, \nu(dz) - \eta f(t,x) v(t,x) = 0
% \end{equation}
% with terminal condition $v(T,x) = e^{-\eta g(x)}$. 
By the Feynman--Kac formula \eqref{Feynman}, this can be written
{\small \begin{equation} \label{repres1}
	 u(t,x) = - \frac{1}{\eta} \log \Bigg( \mathbb{E}\left[\exp\left(-\int_t^T \eta f(s,X_s) ds \right) e^{-\eta \varphi(X_T)} \, \Big| \, X_t =x \right] \Bigg).
\end{equation}}where {\small $
 	dX_s = \sqrt{2} \,   dW_s  + \int_{\R^d} e N(ds, de)$, $t \leq s \leq T, \ X_t = x, $}
and $N$ is a Poisson  measure with intensity $\lambda \nu(de) ds$, $\lambda\geq0$. This probabilistic representation provides a Monte Carlo (MC) estimator for $u(t,x)$, thereby serving as a reference solution for assessing the  INEUS accuracy on the nonlinear PIDE \eqref{nonlinear}. Here we study a $100$-dimensional case with $\varphi(x) = ||x||^2$ and Gaussian jumps $\nu = \mathcal{N}(\mu_J, \Sigma_J)$.
  \begin{figure}[h]
	\centering
    \vspace{-1mm}
	\begin{minipage}{.5\textwidth}
		\includegraphics[width=0.95\columnwidth, height=3.7cm]{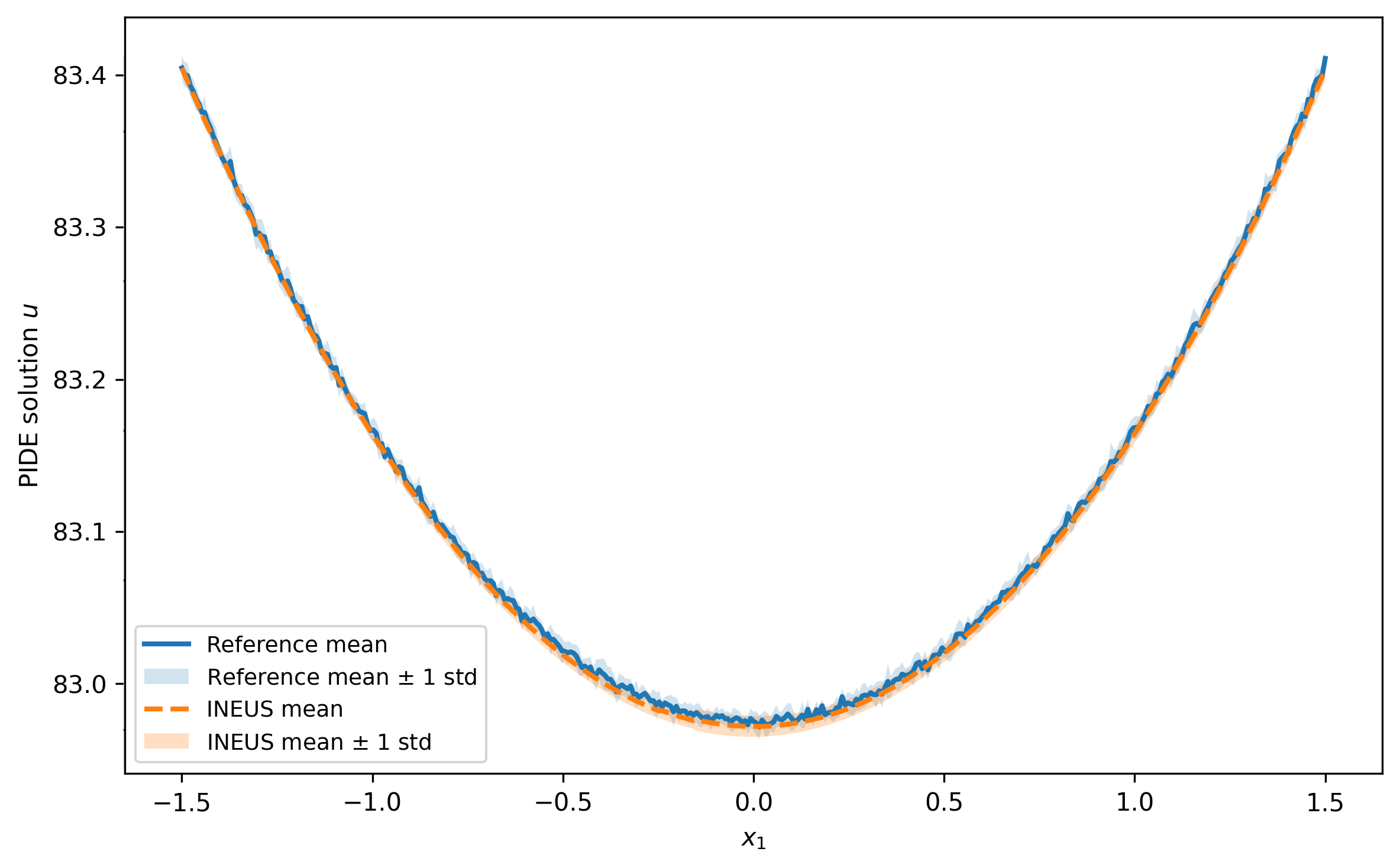}
		\label{fig:nonlinear_t0} \vspace{4mm}
	\end{minipage} \hspace{-4mm}
	\begin{minipage}{.5\textwidth} 	\vspace{-5mm}
\includegraphics[width=0.96\columnwidth, height=3.7cm]{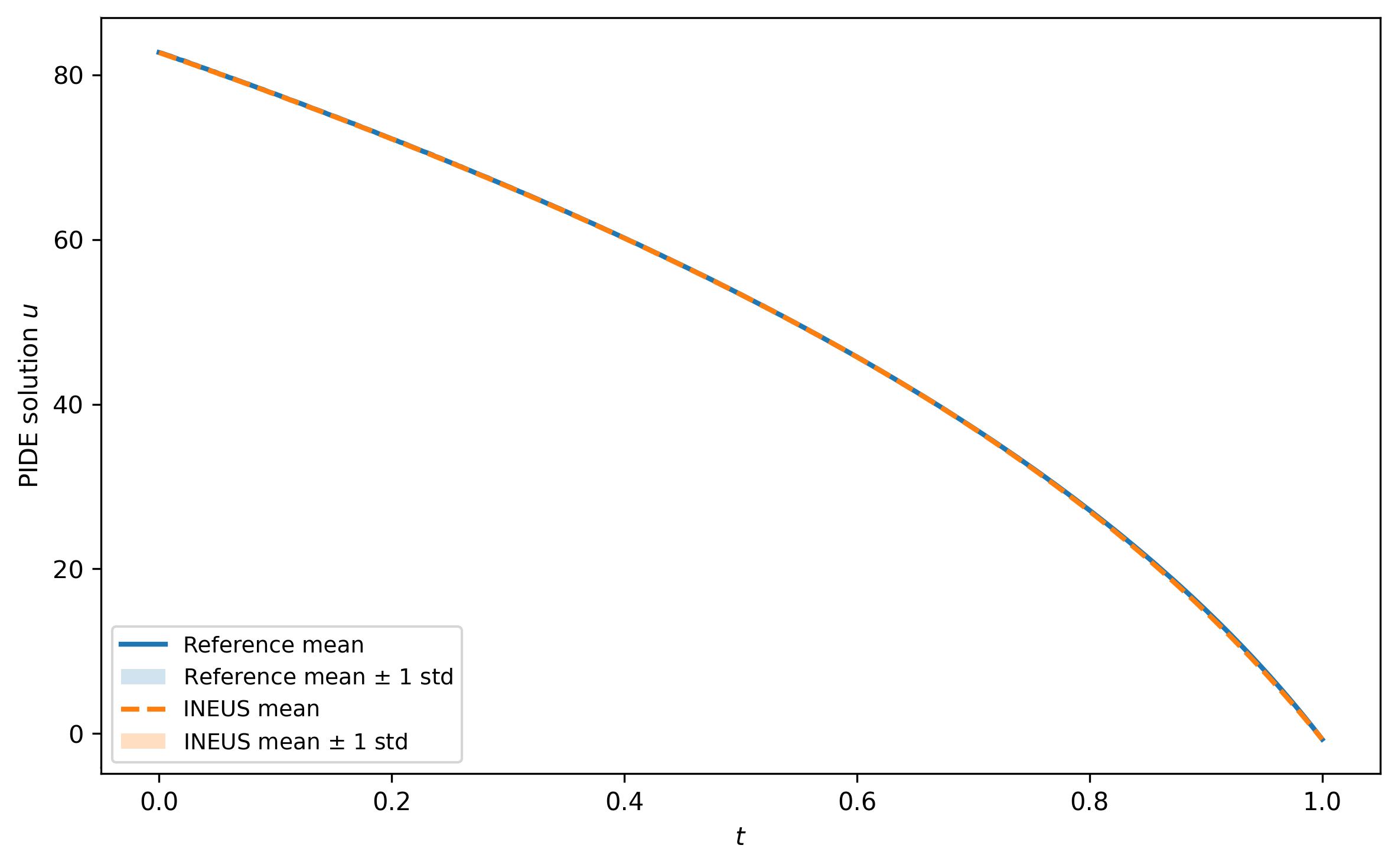}
		\label{fig:nonlinear_x0}
	\end{minipage}\vspace*{-6mm}
\caption{\small PIDE solution $u(0,x)$  for $x=(x_1,0, \dots, 0)$ with $x_1  \in [-1.5, 1.5]$ (left) and  $u(t,x)$ for  $t \in [0,1]$ and $x= \mathbf{0}_{100}$ (right) for a 100-dimensional nonlinear PIDE \eqref{nonlinear}. Orange dotted lines: numerical results of INEUS with $\pm 1$ standard deviation given by orange shaded area. Blue lines: reference MC solution \eqref{repres1}.}
\label{fig: nonlinear} \vspace{-4.5mm}
\end{figure} \vspace{-2mm}
\begin{figure}[h!]
	\centering
	\subfigure{ \hspace{-1mm}
		\includegraphics[width=0.485\columnwidth, height=3.65cm]{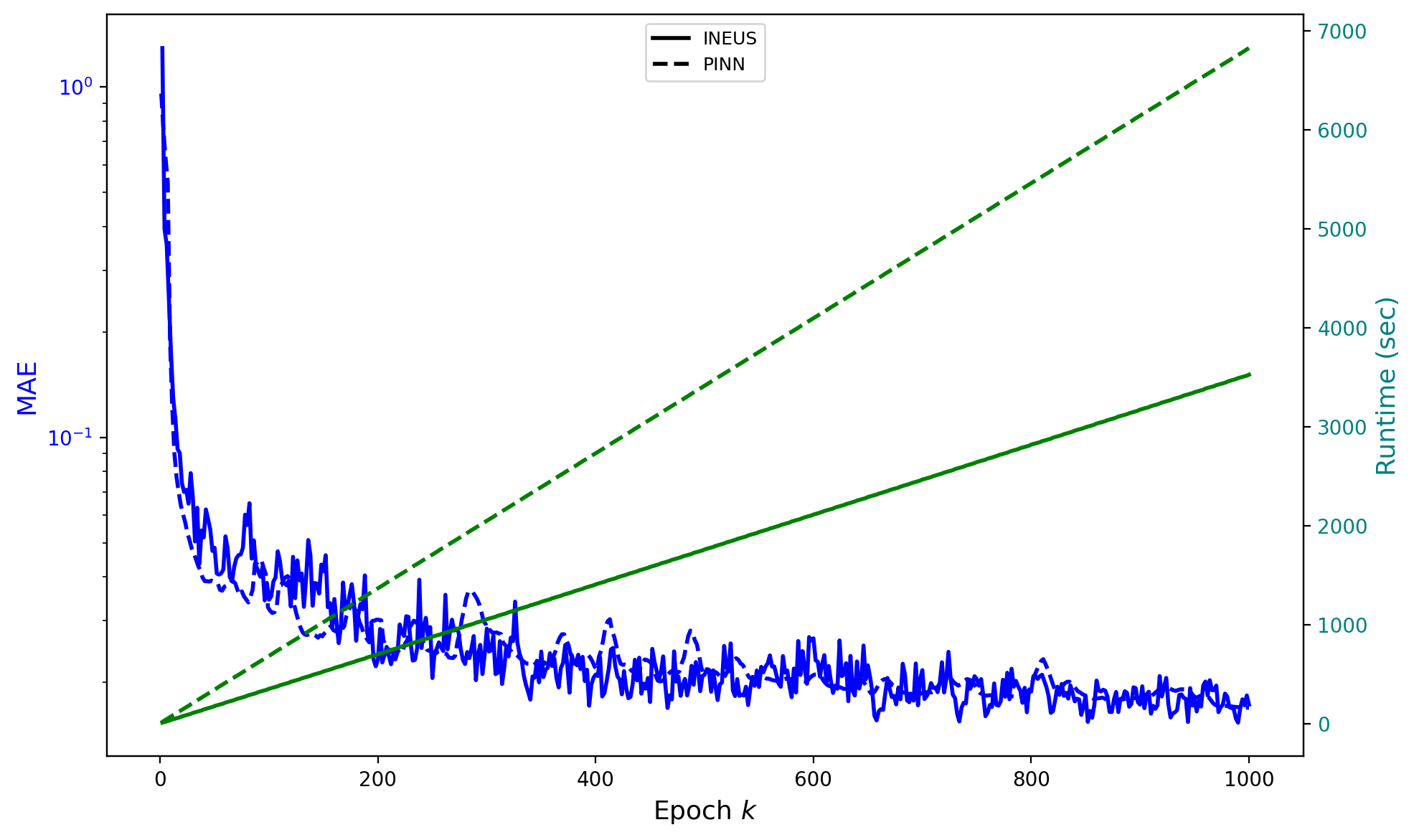}
		\label{fig:nonlinearvspinn}
	}
	 \hspace{-4mm}
	\subfigure{	\includegraphics[width=0.49\columnwidth, height=3.7cm]{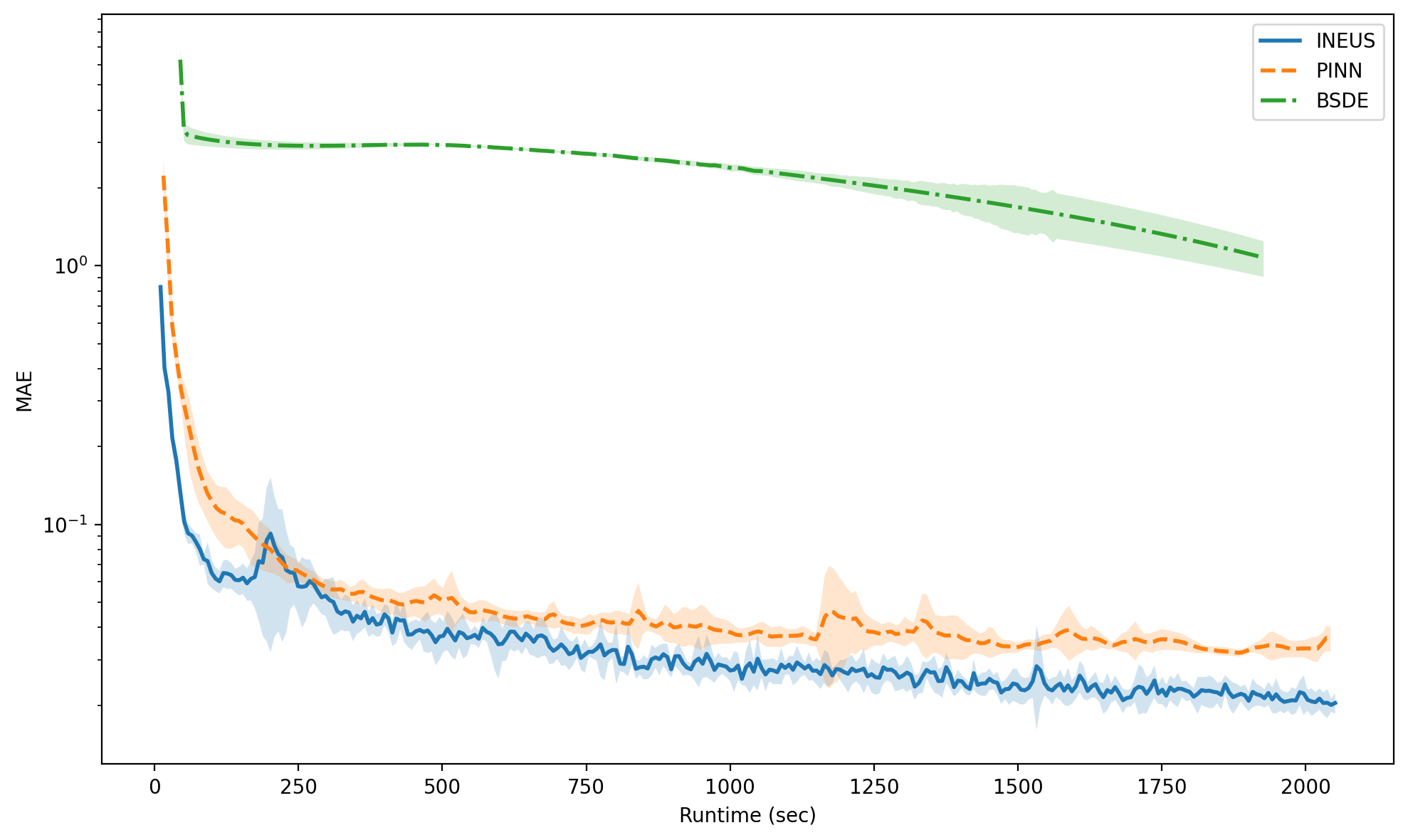}
	\label{fig:nonlinearvsall}
	} \vspace{-2.5mm}
	\caption{{\small Left panel: comparison of ${\rm MAE}$ (blue) and runtime in seconds (green) of INEUS (solid lines) and  PINN (dashed lines) as functions of the epoch $k$. Right panel: comparison of INEUS, PINN, and deep BSDE with jumps as functions of runtime (seconds). Both panels  correspond to a 10-dimensional nonlinear PIDE \eqref{nonlinear}.\vspace{-4.5mm}}} 
	\label{fig:nonlinearvs} 
\end{figure} 
\paragraph{Nonlinear Black--Scholes PIDE.} We \vspace{0.6mm}extend the $d$-dimensional nonlinear Black--Scholes PDE with default risk from \cite{han2018solving} to the following PIDE \vspace{-1.5mm}on $D_T = [0,T) \times \mathbb{R}^d_+$:
{\small \begin{equation} \label{eq:default_risk}
	\partial_t u + \bar\mu \, x \cdot \nabla_x u + \frac{1}{2} \operatorname{Tr}\!\big[\sigma^2 \operatorname{diag}(x)^2 \nabla^2_x u\big]
	- \big[(1-\delta)\, Q(u) + R\big]\, u + \lambda\,
    \mathbb{E}^{\nu}\!\left[
      u \bigl(t,\,x\odot e^{\mathbf{E}}\bigr)-u(t,x)
    \right] = 0,
\end{equation}}with terminal condition $u(T,x) = \min_{i=1,\ldots,d} x_i$ and $Q$ the piecewise-linear default intensity \eqref{eq:intensity} depending on the solution $u$. Here, $\sigma,\bar\mu,\delta,R,\lambda \in\mathbb{R}_+$, and $\odot$ denotes component-wise multiplication. The jump vector \(
\mathbf{E}=(E_1,\ldots,E_d)\) has distribution \(\nu\) and is specified by $
E_i
=
\mu_J
+
\sigma_J
\big(
\sqrt{\rho_J}\,Z_0
+
\sqrt{1-\rho_J}\,Z_i
\big), $
where \(Z_0,Z_1,\ldots,Z_d\) are independent standard normal random variables.
The upper row of Figure~\ref{fig:min_option} considers the diffusion-only case ($\lambda = 0$), for which a reference value $u(0,x_0) = 57.300$ is available from \cite{han2018solving}. The lower row of Figure~\ref{fig:min_option} reports the corresponding experiment with jumps, for which the reference value $u(0,x_0) = 55.810$ is computed
via the 
forward Picard scheme~\cite{bender2007forward}, 
based on the Feynman--Kac representation with $4\times10^5$ antithetic 
paths and $30$ Picard iterations.
Additional experiments for both the nonlinear Black--Scholes PIDE~\eqref{eq:default_risk} and a linear Black--Scholes variant~\eqref{Black_Scholes} (with  reference solution) are provided in Appendix~\ref{Appendix: additional}.

\begin{figure}[h]
    \centering

    % Upper row: PDE without jumps
    \begin{minipage}{.48\textwidth}
        \centering
        \includegraphics[width=0.8\linewidth]{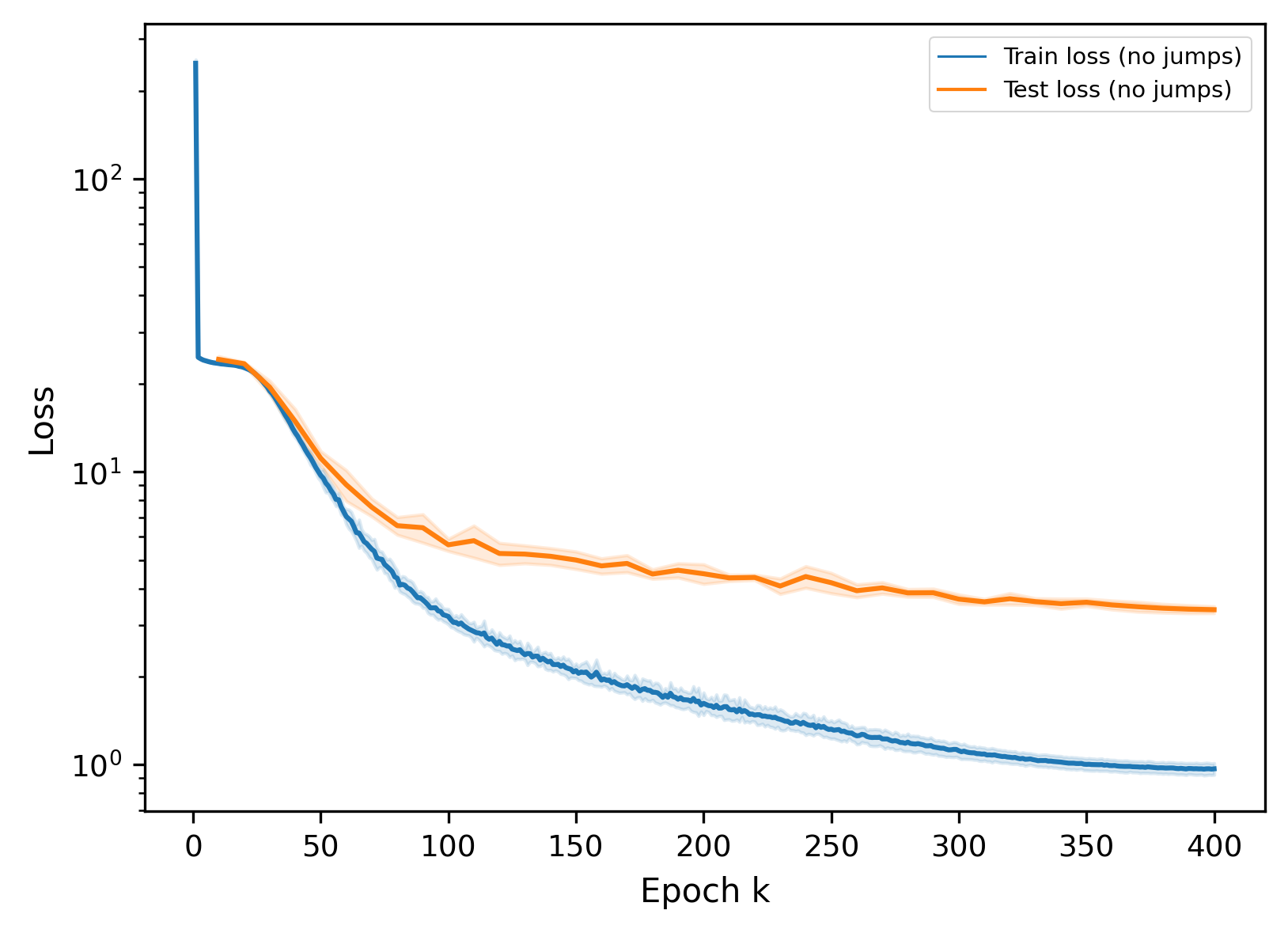}
    \end{minipage}\hfill
    \begin{minipage}{.48\textwidth}
        \centering
        \includegraphics[width=0.8\linewidth]{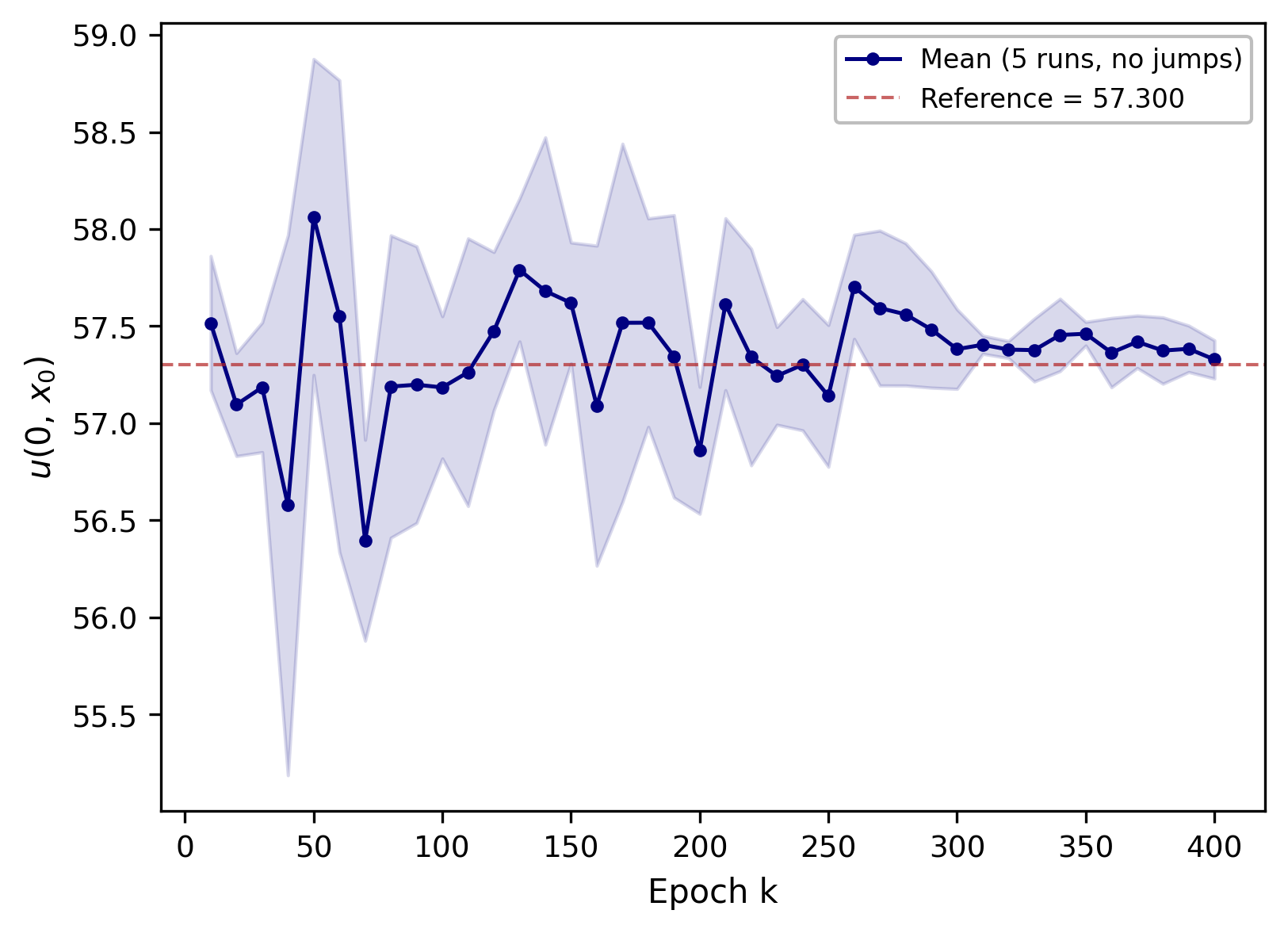}
    \end{minipage}

    \vspace{2mm}

    % Lower row: PIDE with jumps
    \begin{minipage}{.48\textwidth}
        \centering
        \includegraphics[width=0.8\linewidth]{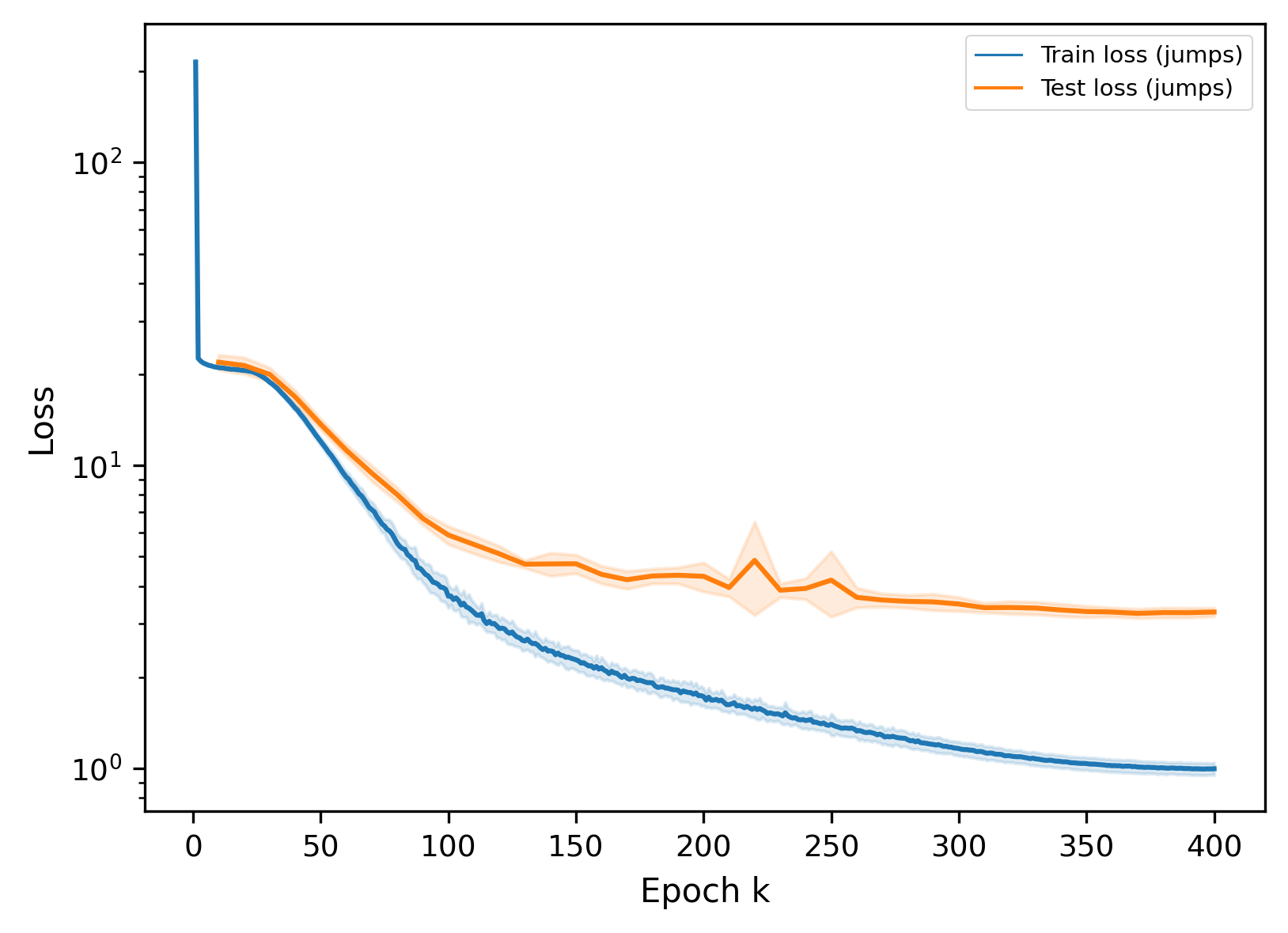}
    \end{minipage}\hfill
    \begin{minipage}{.48\textwidth}
        \centering
        \includegraphics[width=0.8\linewidth]{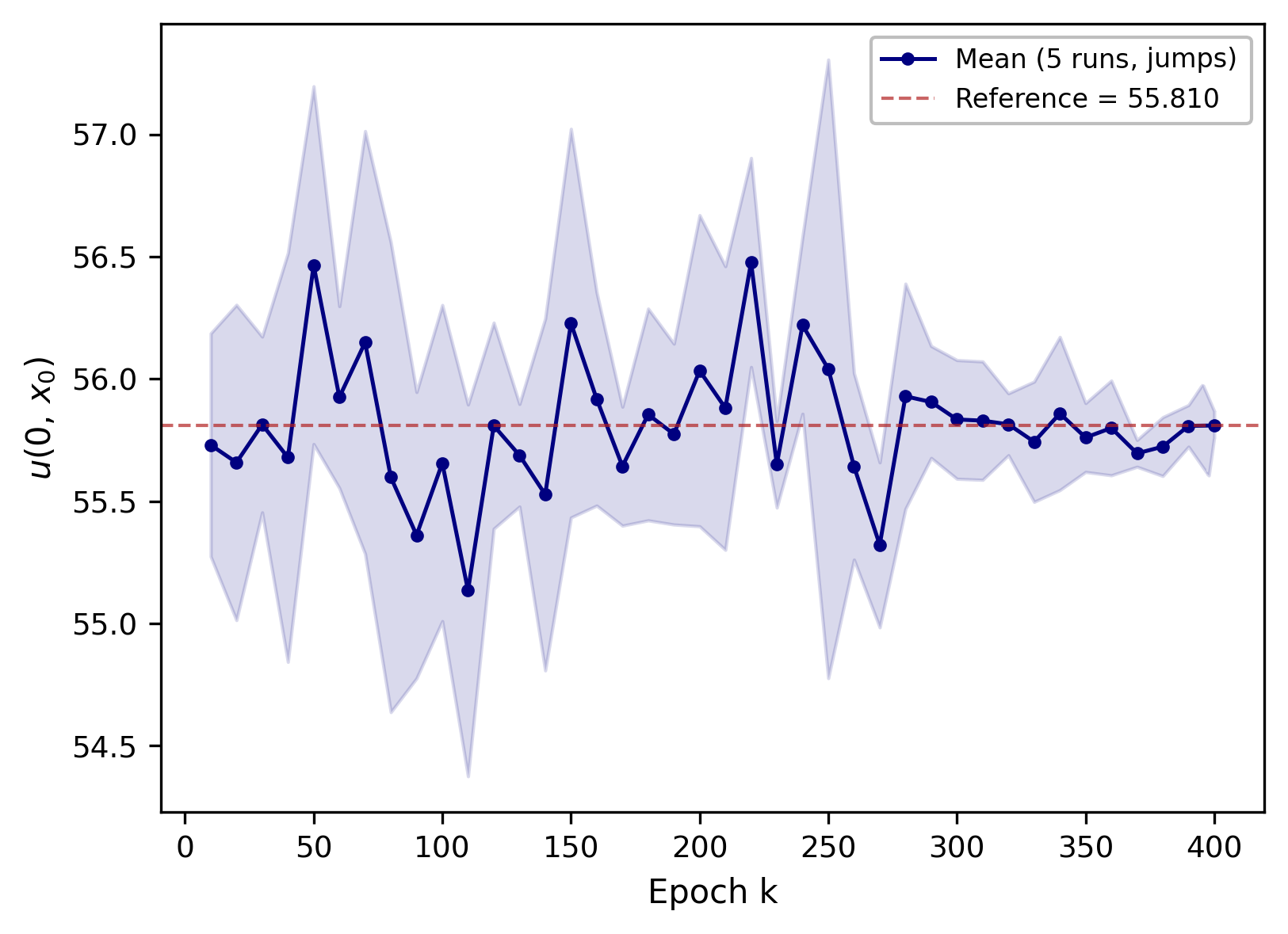}
    \end{minipage}

    \vspace*{-2mm}
    \caption{\small INEUS for the 100-dimensional nonlinear Black--Scholes equation \eqref{eq:default_risk}, without jumps  (upper row) and with jumps (lower row). The left panels show the training and test losses over $400$ epochs, while the right panels show the prediction trajectory of $u_\theta(0,x_0)$ with $x_0=100\cdot\mathbf{1}_{100}$. The no-jump estimate is compared with the Feynman--Kac reference value $57.300$, while the jump estimate is compared with the Picard reference value $55.810$.\vspace{-1mm}}
    \label{fig:min_option}
\end{figure} 

 \section{Conclusions, limitations and future work}

We proposed INEUS, a recursive learning framework for high-dimensional nonlinear PIDEs with jumps. INEUS combines hard-constrained PINNs with an expectation-free  target, thereby avoiding both explicit jump integration and the expensive higher-order derivatives required by standard residual-based PINN training. This yields a meshfree global solver that remains effective in high dimension and is naturally adapted to jump processes. For linear PIDEs, we established a contraction-based theoretical foundation by linking the recursive updates to both the Feynman--Kac semigroup and its relaxed version.  This shows that INEUS  approximates a class of fixed-point operators with provable convergence properties, while the relaxed formulation provides an additional mechanism to improve training stability. Our numerical results on linear and nonlinear examples with jumps demonstrate that INEUS achieves accurate global approximations at a lower computational cost than standard residual-based PINNs and deep BSDE approaches for high-dimensional PDEs and PIDEs. 

Several directions remain open. On the theoretical side, extending the convergence analysis beyond the linear case and with less restrictive smoothness assumptions would require developing a  substantially different machinery from the one used in the paper, since the semigroup structure underlying our analysis is no longer available for nonlinear PIDEs. % Finite mini-batch optimization, neural-network approximation, and periodic regression back into the hPINN class also introduce optimization and approximation errors that are not yet quantified.
On the algorithmic side, adaptive relaxation strategies and improved sampling schemes may further enhance stability and efficiency. More broadly, the proposed framework also appears promising for stochastic control problems.
 %mention that our scheme also applies directly to PDEs and can be used to solve stochastic control problems. Limitation: stability issues especially for nonlinear PDEs with non-smooth terminal conditions. But $\widetilde{\mathcal{T}}_h$ helps mitigate this.
	{\small
		% BibTeX users please use one of
		%\bibliographystyle{spbasic}      % basic style, author-year citations
		\bibliographystyle{abbrvnat}      % mathematics and physical sciences
		\bibliography{bibfpinn}   % name your BibTeX data base
		
	}

%%%%%%%%%%%%%%%%%%%%%%%%%%%%%%%%%%%%%%%%%%%%%%%%%%%%%%%%%%%%%%%%%%%%%%%%%%%%%%%
%%%%%%%%%%%%%%%%%%%%%%%%%%%%%%%%%%%%%%%%%%%%%%%%%%%%%%%%%%%%%%%%%%%%%%%%%%%%%%%
% APPENDIX
%%%%%%%%%%%%%%%%%%%%%%%%%%%%%%%%%%%%%%%%%%%%%%%%%%%%%%%%%%%%%%%%%%%%%%%%%%%%%%%
%%%%%%%%%%%%%%%%%%%%%%%%%%%%%%%%%%%%%%%%%%%%%%%%%%%%%%%%%%%%%%%%%%%%%%%%%%%%%%%
\newpage
\appendix
\onecolumn
\section{Algorithm and network architecture} \label{app:algo}
\subsection{INEUS algorithm}
We now describe the practical implementation of the full scheme in Proposition~\ref{Prop: conv}, which approximates the relaxed contracting operator $\widetilde{\mathcal{T}}_h$ (or $\mathcal{T}_h$ when $\alpha = h$). The algorithm is organized into outer epochs indexed by $k=1,2,\ldots$, inner recursion steps indexed by $i=0,\ldots,n-2$, and gradient updates indexed by $j=0,\ldots,N-1$. To simplify notation, we suppress the dependence of the weights on the inner-step index $i$ and write $\theta^{(k)}_j$ instead of the more explicit $\theta^{(k)}_{i,j}$.
	\begin{algorithm}[h!]
		\caption{Iterative Neural Solver for PIDES (INEUS)}
		\label{Algo1}
		\begin{algorithmic}[1]
			\REQUIRE learning rate $\eta>0$, relaxation parameter $\alpha>0$, scale $h>0$, number of inner steps $n\ge 2$, number of gradient steps $N$, batch size $M$.
\STATE Initialize admissible weights $\theta^{(0)}$ for the hPINN $u_{\theta}$ satisfying boundary conditions \eqref{termgen}--\eqref{boundgen}.
\STATE Set $\xi \gets h/(n-1)$ and \(k\gets 0\).
			
			\REPEAT

				\STATE Set $\theta_{0}^{(k)} \gets \theta^{(k)}$.  \hfill\(\triangleright\) Initialize the frozen target network
			\FOR{$i=0$ to $n-2$}

			\FOR{$j=0$ to $N-1$}
			 \STATE Sample $\{(y_m,e_m)\}_{m=1}^M$ i.i.d.\ from $\mu\otimes \nu$ on $D_T\times E$.\vspace{0.5mm}
			 \STATE Form the empirical loss \vspace{-2mm}
           \begin{equation} \label{eq:emp_loss}
             \widehat{\mathscr L}^{(k)}(\theta \, ; \xi)
            :=
            \frac1M\sum_{m=1}^M
            \Big(
            u_\theta(y_m)-\mathcal{G}_\xi(y_m,e_m,u_{\theta^{(k)}_{0}})
            \Big)^2 . \vspace{-5mm}\end{equation} 
           \STATE Update
            \[
            \theta_{j+1}^{(k)}
            \gets
            \theta_{j}^{(k)}
            -
            \eta \nabla_\theta \widehat{\mathscr L}^{(k)}(\theta_{j}^{(k)} \, ; \xi),
            \]
            or equivalently,
         \begin{equation} \label{eq:algo1}
            \theta_{j+1}^{(k)}
            \gets
            \theta_{j}^{(k)}
            -
            \frac{2\eta}{M}\sum_{m=1}^{M}
            \Big(
            u_{\theta_{j}^{(k)}}(y_m)
            -
            \mathcal{G}_\xi(y_m,e_m,u_{\theta_{0}^{(k)}})
            \Big) \, 
            \nabla_\theta u_{\theta_{j}^{(k)}}(y_m).         \end{equation}
        \ENDFOR \vspace{0.5mm}
			
			\STATE Set $\theta_{0}^{(k)} \gets \theta_{N}^{(k)}$. \hfill\(\triangleright\) Freeze target for next inner step
			\ENDFOR
			
			\STATE Apply the Polyak update
			\[
			u^{(k+1)}
			\gets
			\left(1-\frac{\alpha}{h}\right)u_{\theta^{(k)}}
			+
			\frac{\alpha}{h}\,u_{\theta^{(k)}_N}.
			\]
			  \STATE Choose parameters \(\theta^{(k+1)}\) such that \(u_{\theta^{(k+1)}} \approx u^{(k+1)}\),
    by regression onto the hPINN class, see Equation~\eqref{eq:distillation}.
			%\STATE Choose $\theta^{(k+1)}$ such that $u_{\theta^{(k+1)}} = u^{(k+1)}$.
			\STATE $k \gets k+1$.
			\UNTIL{some convergence criterion is satisfied. %$\max_{(t,x) \in B} \Big| V_{\theta^{(k)}}\hspace{-0.4mm}(t,x)- V_{\theta^{(k-1)}}\hspace{-0.4mm}(t,x) \Big|  \leq \varepsilon$. 
			}
			\STATE \textbf{return} $u_{\theta^{(k)}}$ and set $k_* \leftarrow k$.
		\end{algorithmic} 
	\end{algorithm} 
    \\
We emphasize that after the Polyak update, the relaxed function $u^{(k+1)}$ is generally not itself represented exactly by the chosen hPINN parametrization. We therefore determine new parameters $\theta^{(k+1)}$ by projecting this updated target back onto the hPINN class, that is by choosing  $u_{\theta^{(k+1)}}$ to approximate $u^{(k+1)}$ in a least-squares sense over sampled points $(t,x)$. Since performing this regression at every epoch can be computationally costly, we instead carry it out only once every block of $K$ epochs. Unrolling the Polyak recursion over this block, starting from epoch $k$, yields the target
\begin{equation}\label{eq: target}u^{(k + K)} = \Big(1-\frac{\alpha}{h}\Big)^K u_{\theta^{(k)}} + \sum_{j=1}^K \frac{\alpha}{h}\Big(1-\frac{\alpha}{h}\Big)^{K-j} u_{\theta_N^{(k+j-1)}} .\end{equation}
Thus, at the end of the block, the new hPINN $u_{\theta^{(k+K)}}$ is obtained by regressing this weighted combination  $u^{(k+K)}$ of the block-start network and the intermediate candidate networks back onto the admissible hPINN class, that is,
\begin{equation} \label{eq:distillation}
\min_{\theta}
\;
\mathbb{E}_{(t,x)\sim \mu}
\Big[
u_\theta(t,x)
-
\Big(
(1-\alpha/h)^K u_{\theta^{(k)}}(t,x)
+
\sum_{j=1}^K
\frac{\alpha}{h}(1-\alpha/h)^{K-j}
u_{\theta_N^{(k+j-1)}}(t,x)
\Big)
\Big]^2 ,
\end{equation}
where $\mu$ is the same space-time sampling distribution used for training. For $K=1$, this reduces exactly to Algorithm~\ref{Algo1}. The student network has the same hPINN/DGM architecture as the main model and is initialized with the last candidate network in the block. In the numerical experiments of Section~\ref{Section: numeric}, we use $K=10$. The distillation is performed for $200$ Adam steps with learning rate $10^{-3}$, after which the distilled network replaces the current model and training resumes.
\subsection{Hyperparameters} 
\paragraph{DGM network}
\label{dgm}
\begin{figure}[H] 
\hspace{-0.2cm}	\includegraphics[scale=0.36]{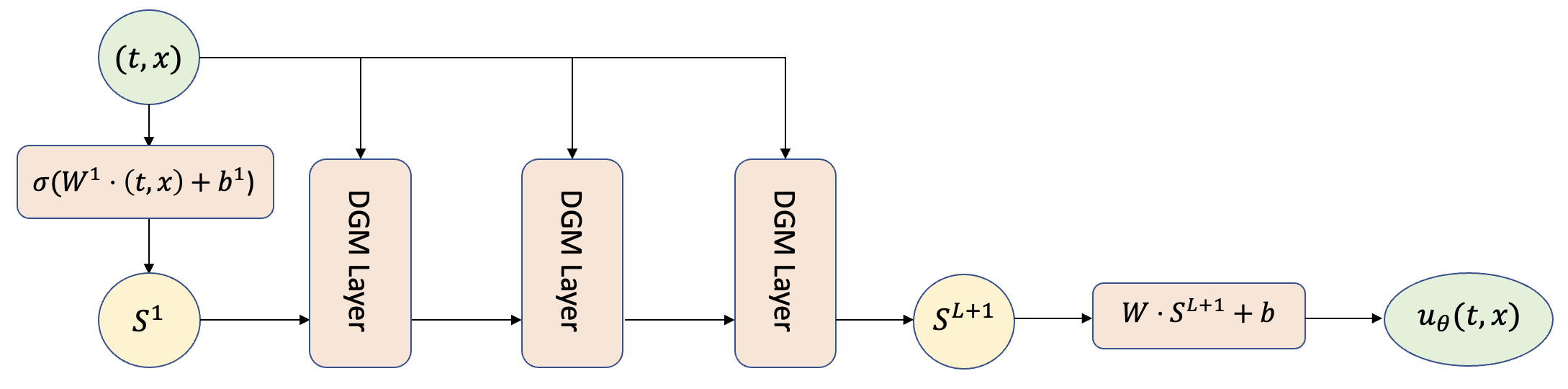} 
\caption{\small DGM architecture of the neural network $u_\theta$ with $L=3$ (\textit{i.e.}\ 4 hidden layers).}
	\label{U_S process} 
\end{figure}
Each DGM layer of the network $u_\theta$ in Figure \ref{U_S process} is of the following form
$$
\begin{aligned}
	S^1 & =\sigma \left(W^1 \cdot (t,x)+b^1\right), \\
	Z^{\ell} & =\sigma \left(U^{z, \ell} \cdot (t,x)+W^{z, \ell} \cdot S^{\ell}+b^{z, \ell}\right), \quad \ell=1, \ldots, L, \\
	G^{\ell} & =\sigma\left(U^{g, \ell} \cdot (t,x)+W^{g, \ell} \cdot S^\ell+b^{g, \ell}\right), \quad \ell=1, \ldots, L, \\
	R^{\ell} & =\sigma \left(U^{r, \ell} \cdot (t,x)+W^{r, \ell} \cdot S^{\ell}+b^{r, \ell}\right), \quad \ell=1, \ldots, L, \\
	H^{\ell} & =\sigma \left(U^{h, \ell} \cdot (t,x)+W^{h, \ell} \cdot \left(S^{\ell} \odot R^{\ell}\right)+b^{h, \ell}\right), \quad \ell=1, \ldots, L, \\
	S^{\ell+1} & =\left(1-G^{\ell}\right) \odot H^{\ell}+Z^{\ell} \odot S^{\ell}, \quad \ell=1, \ldots, L, \\
	u_\theta(t, x) & = A(t,x) + B(t,x) \left(W \cdot S^{L+1}+b \right), 
\end{aligned}
$$
where the number of hidden layers is $L+1$, $\cdot$ denotes matrix multiplication, $\odot$ element-wise multiplication, $A:[0,T]\times \R^d \to \R$ encodes the exact terminal and boundary conditions and $B :[0,T]\times \R^d \to \R$ is a smooth distance function vanishing for $t=T$ and $x\in D^c$, see \eqref{hPINN}. The DGM parameters of the value network are
\begin{align*}
		\theta = \Big\{ \hspace{-0.5mm}W^1, b^1, \left(U^{z, \ell}, W^{z, \ell}, b^{z, \ell}\right)_{\ell=1}^L,&\left(U^{g, \ell}, W^{g, \ell}, b^{g, \ell}\right)_{\ell=1}^L,
		\\
	&\left(U^{r, \ell}, W^{r, \ell}, b^{r, \ell}\right)_{\ell=1}^L, \left(U^{h, \ell}, W^{h, \ell}, b^{h, \ell}\right)_{\ell=1}^L , W, b\Big\} .
\end{align*}The number of units in each layer is $n_{\rm{hid}}$ and  $\sigma: \mathbb{R}^{n_{\rm{hid}}} \rightarrow \mathbb{R}^{n_{\rm{hid}}}$ is a twice-differentiable element-wise nonlinearity. Throughout the numerical examples of the paper, we use $L=3$ with $n_{\rm{hid}} = 50$ neurons in 
each of the DGM layers, and we take $\tanh$ for the 
activation function $\sigma$. \vspace{-1mm}

\paragraph{Training setup.} In Algorithm \ref{Algo1},  INEUS is implemented with batch size $M = 1024$, number of inner steps $n = 20$, number of gradient steps $N = 32$, and a maximum number of epochs $k_* = 1000$ as the stopping criterion\footnote{We stop after a fixed number of epochs for comparison and reproducibility purposes, but any other terminal criterion could be used equivalently, for instance based on the variation of the loss function $\widehat{\mathscr{L}}^{(k)}$ or of the network $u_{\theta^{(k)}}$.}. The network parameters are updated using Adam \citep{kingma2014adam} with a constant learning rate $\eta = 0.0005$. Overall, we observe that INEUS is fairly robust with respect to these hyperparameters. \vspace{-1mm}
 
 \paragraph{Choice of $\alpha$ and $h$.}
The relaxation parameter $\alpha$ and scaling parameter $h$ should be chosen within the contraction range $ 0<\alpha<2h/(1+\rho_h)$. For fixed $h$, the contraction modulus
$L_h=|1-\frac{\alpha}{h}|+\frac{\alpha}{h}\rho_h
$ is minimized at $\alpha=h$, where $L_h=\rho_h$. In practice, $\alpha$ should therefore be chosen close to, but not necessarily equal to, $h$: $\alpha = h$ gives the strongest contraction, while $\alpha < h$ provides a more damped update that can improve numerical stability when recursive training  introduces variability. The choice of $h$ involves a separate trade-off:  larger $h$ reduces $\rho_h$ and strengthens the contraction, but Proposition 3.5 shows that the update approximation error grows as $
\mathcal{O}\!\left(\alpha h / (n-1)\right)+\mathcal{O}(\alpha h^{\,n-1}).
$
Hence,  $h$ should balance stronger contraction against the accuracy of the recursive approximation. That is why, in all the experiments, we set $\alpha = 0.4$ and $h = 0.5$, which provides a good trade-off between convergence speed, numerical stability and accuracy of the iterative updates in INEUS,  although problem-specific fine-tuning may be beneficial. Alternatively,  one may also consider an epoch-dependent relaxation factor $\alpha_k >0$ in \eqref{eq:relaxed-update-clean} to adapt the size of the outer update throughout training. Choosing $\alpha_k<h$ indeed dampens the update and improves stability by avoiding overly large changes in the networks $\| u_{\theta^{(k+1)}}-  u_{\theta^{(k)}} \|$, while choosing $\alpha_k > h$ may accelerate convergence in flatter regions of the optimization landscape. 

\paragraph{Sampling measure.} Another crucial component is the choice of the sampling measure $\mu$, which plays an important role in the quality of the approximation. We therefore use the residual-adaptive distribution (RAD) introduced by \citet{wu2023comprehensive}, which has been shown to improve the performance of PINN-style methods. We refer to \citet{wu2023comprehensive} for an ablation study comparing RAD with standard uniform and Gaussian sampling strategies. For the Black--Scholes PIDEs \eqref{Black_Scholes}--\eqref{eq:default_risk}, we further combine RAD with path-based sampling, which generates collocation points along geometric Brownian motion trajectories, to accelerate convergence.
\paragraph{Baselines.}
We compare INEUS with a standard PINN/DGM baseline \cite{sirignano2018dgm} and a classical deep BSDE method with jumps \cite{andersson2025deep}. For all methods, the reported $\mathrm{MAE}$ is computed on the same fixed test set of $\mathcal{M}=2000$ independently sampled space-time points. The PINN baseline uses the same network architecture, sampling domain, optimizer, and training hyperparameters as INEUS. The only difference is that it is trained by minimizing the standard squared PIDE residual directly; the nonlocal jump operator $\mathcal{I}[u]$ is approximated by Monte Carlo using 64 jump samples. One reported PINN or INEUS epoch then corresponds to $(n-1)N=608$ optimizer updates. For the deep BSDE baseline, we use the classical local formulation for jump-diffusions. Since this method is intrinsically local and parameterizes a single initial value $u(t_0,x_0)$, it does not directly produce the global value function required for comparison with INEUS. We therefore sample multiple initial training points $(t_0,x_0)\sim\rho$, train independent local BSDE models at those points, and then fit a supervised global surrogate $u_\theta(t,x)$ to the resulting  local solution labels. The global test MAE is computed by evaluating this surrogate on the fixed test set. Each local BSDE model is trained for 10,000 optimizer steps with batch size 1024, Adam learning rate $3\times 10^{-4}$, gradient clipping norm 1.0, 25 time steps, and 64 Monte Carlo samples for the jump expectation. All methods are implemented in TensorFlow/Keras and run with GPU acceleration on an NVIDIA RTX 4090.
% \paragraph{Baselines.} When comparing INEUS with PINNs \cite{sirignano2018dgm} and the deep BSDE method with jumps \cite{andersson2025deep}, $\rm{MAE}$ is computed on a fixed test set of \(\mathcal{M}=2000\) sample points. The PINN architecture and hyperparameters are exactly the same as those used for INEUS (with $(n-1)\times N = 608$ optimizer updates per epoch), except that 64 samples are used for the Monte Carlo estimation of the jump term $\mathcal{I}[u]$. The deep BSDE method with jumps is an intrinsically local solver that parameterizes a single initial value $u(t_0,x_0)$. To compute a global test MAE, the method is run from many randomly sampled initial (training) points using multiple independent local models. Each deep BSDE model is  trained for 10,000 epochs with batch size 1024, Adam learning rate $3 \times 10^{-4}$, gradient clipping norm 1.0, 25 time steps and 64 Monte Carlo samples for the jump expectation. All three methods are implemented using TensorFlow and Keras with GPU acceleration on an NVIDIA RTX 4090. %The code will be made available at x.
\section{Proof of theoretical results} \label{app:Proof}
	\subsection{Proof of Proposition \ref{Proposition: L2}} \label{proof: propL2}
    \begin{proof}
		Since $Y$ is independent of $E_1$, one obtains from the definition of $\mathcal{G}_\xi$ that
		\begin{align}
			\mathbb{E}[\mathcal{G}_\xi(Y,  E_1, u) \mid Y= (t,x)] = u(t,x)   &+ \xi \hspace{0.1mm}\Big[ \partial_t u(t,x) + \mathcal{F}(t,x,u(t,x),\nabla_xu(t,x), \nabla^2_xu(t,x)) \hspace{-0.5mm} \nonumber
			\\
			&  \hspace{10mm}+ \lambda(t,x) \, \mathbb{E}^{\nu} \left[ \ell\!\left(u(t,x+\gamma(t,x,E_1)) - u(t,x) \right) \right]  \Big] \nonumber
			\\[0.1cm] \label{Bell}
			&\hspace{-17mm}= u(t,x) + \xi \, \mathcal{A}[u](t,x)\\[0.1cm]
			&\hspace{-17mm}= u(t,x), \notag
		\end{align} 
		where the last equality follows from Equation \eqref{HJBB}. On the other hand, it is well known that 
		\begin{equation}
			\mathbb{E}[\mathcal{G}_\xi(Y,  E_1, u) \mid Y] = v^*(Y) \label{L2opt}
		\end{equation}
		for the Borel measurable function $v^* \colon  D_T \to \R$ minimizing the mean squared error
		\[ v \mapsto
	 \mathbb{E} \edg{ \big( v(Y) - \mathcal{G}_\xi(Y, E_1, u) \big)^2 } ,
		\]
	over all Borel measurable functions $v :  D_T \to \R$,	see e.g. Theorem 4.1.15 of \citet{durrett}. This concludes the proof. \vspace{-1mm}
	\end{proof}

    \subsection{Proof of Lemma \ref{lemma 1}}
    \begin{proof}
		Since $u^*$ is an exact minimizer of \eqref{objective_functional},   Equations \eqref{Bell} and \eqref{L2opt} in \ref{proof: propL2} imply that, for $\mu$-a.e. $(t,x) \in D_T$, 
        \[
u^*(t,x)
= \E\!\left[\mathcal{G}_\xi\!\left((t,x),E_1,u_{\theta^{(k)}}\right)\middle|(t,x)\right]
= u_{\theta^{(k)}}(t,x)+\xi \mathcal{A}[u_{\theta^{(k)}}](t,x).
\]
Hence the update rule \eqref{ope1b} holds $\mu$-a.e.\ on $D_T$. 
Now let $u^*$ be a fixed point of $\overline{\mathcal T}$. By definition of \eqref{ope1b}, $\mu\text{-a.e. on } D_T$,
\[
u^*=\overline{\mathcal T}u^*
\quad \Longleftrightarrow \quad
u^* = u^* + \xi \mathcal A[u^*],
\]
so that
\[
\xi \mathcal A[u^*]=0
\qquad \mu\text{-a.e. on } D_T.
\]
Since $\xi\neq 0$, it follows that
\[
\mathcal A[u^*]=0
\qquad \mu\text{-a.e. on } D_T.
\]
Because $\mu$ has full support and $(t,x)\mapsto \mathcal A[u^*](t,x)$ is continuous, we conclude that
\[
\mathcal A[u^*](t,x)=0
\qquad \text{for all } (t,x)\in D_T.
\]
Finally, if $u^*$ is represented by the hPINN architecture \eqref{hPINN}, then the terminal and boundary conditions are satisfied by construction, namely
\[
u^*(T,x)=\varphi(x), \qquad x\in D,
\]
and
\[
u^*(t,x)=g(t,x), \qquad (t,x)\in [0,T) \times D^c.
\]
Therefore, $u^*$ satisfies \eqref{intgen} together with \eqref{termgen}--\eqref{boundgen}, and is thus a classical solution of the PIDE problem \eqref{intgen}--\eqref{boundgen}.
%		The fixed point property of $\overline{\mathcal{T}}$ is equivalent to $\xi \mathcal{A}[u] = 0$.
%		Since $\xi \neq 0$, this implies $\mathcal{A}[u](t,x) = 0$ for almost all $(t,x) \in D_T$. Since $\mu$ as full support and $(t,x) \mapsto \mathcal{A}[\cdot](t,x)$ is continuous, $\mathcal{A}u=0$ everywhere in $D_T$. Together with  $u$ being realized by a hPINN architecture \eqref{hPINN} (implying $u(T,x)=F(x)$ on $D$ and $u(t,x) = g(x)$ on $D_T^{\mathrm{ext}}$ and hence satisifying the terminal and boundary conditions \eqref{termgen}--\eqref{boundgen} by construction), this makes $u$ a classical solution to the PIDE problem \eqref{intgen}-\eqref{boundgen}. 
	\end{proof}

\subsection{Proof of Proposition \ref{prop:contrac}}
  
\begin{proof}
    We establish \ref{prop:contrac:item1}, \ref{prop:contrac:item2}, and \ref{prop:contrac:item3} in sequence.
    
    Linearity of $\mathcal{S}_h$ is immediate from linearity of conditional
    expectation.
    The identity $\mathcal{S}_0 = \mathbb{I}$ is clear 
    from~\eqref{semigroupb}. 
    The semigroup identity $\mathcal{S}_{h_1+h_2} 
    = \mathcal{S}_{h_1} \circ \mathcal{S}_{h_2}$ follows by the 
    Markov property of $Y$ at time $h_1$ and the tower property of 
    conditional expectation
    \begin{align*}
        (\mathcal{S}_{h_1+h_2}\Psi)(y)
        &=
        \mathbb{E}\!\left[
            e^{-\int_0^{h_1+h_2}c(Y_u)\,du}\,\Psi(Y_{h_1+h_2})
            \,\Big|\,Y_0=y
        \right]
        \\
        &=
        \mathbb{E}\!\left[
            e^{-\int_0^{h_1}c(Y_u)\,du}
            \cdot
            e^{-\int_{h_1}^{h_1+h_2}c(Y_u)\,du}
            \,\Psi(Y_{h_1+h_2})
            \,\Big|\,Y_0=y
        \right] \\
        &=
        \mathbb{E}\!\left[
            e^{-\int_0^{h_1}c(Y_u)\,du}
            \cdot
            \mathbb{E}\!\left[
                e^{-\int_{h_1}^{h_1+h_2}c(Y_u)\,du}
                \,\Psi(Y_{h_1+h_2})
                \,\Big|\,\mathcal{F}_{h_1}
            \right]
            \,\Bigg|\,Y_0=y
        \right]\\
        &  = (\mathcal{S}_{h_1}(\mathcal{S}_{h_2}\Psi))(y)
\end{align*}
    Monotonicity follows from the strict positivity of the 
    weight $e^{-\int_0^h c(Y_u)\,du} > 0$,
    if $\Psi_1 \le \Psi_2$ pointwise, then 
    $\mathcal{S}_h\Psi_1 \le \mathcal{S}_h\Psi_2$ pointwise.
    This demonstrates \ref{prop:contrac:item1}.
    
    Fix $y=(t,x)\in\bar{D}_T$ and $\Psi\in\bar{\mathcal{C}}$.
    By the triangle inequality for conditional expectations and definition
    \eqref{semigroupb},
    \[
        |(\mathcal{S}_h\Psi)(y)|
        \le
        \mathbb{E}\!\left[
            e^{-\int_0^h c(Y_u)du}\,|\Psi(Y_h)|
            \,\Big|\,Y_0=y
        \right].
    \]
    The hypothesis $c\ge c_0>0$ gives the pointwise bound
    $e^{-\int_0^h c(Y_u)du}\le e^{-c_0 h}$, hence
    \[
        |(\mathcal{S}_h\Psi)(y)|
        \le
        e^{-c_0 h}\,
        \mathbb{E}\!\left[
            |\Psi(Y_h)|
            \,\Big|\,Y_0=y
        \right].
    \]
    Taking the supremum over all $y=(t,x)\in\bar{D}_T$ yields
    \[
        \|\mathcal{S}_h\Psi\|_{\infty}
        \le
        e^{-c_0 h}\,\|\Psi\|_{\infty},
    \]
    with $\rho_h=e^{-c_0 h}<1$ for every $h>0$,
    establishing the strict contraction in \ref{prop:contrac:item2}.
    
    Finally we demonstrate the generator formula in \ref{prop:contrac:item3}.
    Fix $y=(t,x)\in\bar{D}_T$ and $\Psi\in D(\mathcal{A}_0) \cap C^{1,2}(D_T)$ .
    Define the auxiliary process
    \[
        Z_s := e^{-\int_0^s c(Y_u)\,du}\,\Psi(Y_s), \qquad s\in[0,h],
    \]
    so that $Z_0=\Psi(y)$ and $(\mathcal{S}_h\Psi)(y)=\mathbb{E}[Z_h\mid Y_0=y]$
    by~\eqref{semigroupb}.
    Writing $Z_s=U_s V_s$ with $U_s:=e^{-\int_0^s c(Y_u)\,du}$ and
    $V_s:=\Psi(Y_s)$, we have $dU_s=-c(Y_s)U_s\,ds$.
    The generalised Itô formula for Lévy-type semimartingales
    \citep[Theorem 4.4.7]{applebaum2009levy} applied to $V_s=\Psi(Y_s)$ gives
    \[
        dV_s
        =
        \bigl(\partial_t\Psi+\mathcal{L}[\Psi]+\mathcal{I}[\Psi]\bigr)(Y_s)\,ds
        +dM_s,
    \]
    % Browninan integral: A stochastic integral with respect to a continuous local martingale with bounded predictable integrand is a square-integrable martingale, hence has zero mean.
    % Compensated jump integrals: A stochastic integral with respect to the compensated Poisson random measure \tilde{N} with a square-integrable integrand is a square-integrable martingale, hence has zero mean 
    where $M_s$ is a local martingale.
    Since $U$ is of bounded variation with no martingale part,
    the semimartingale product rule $d(UV)=U_{s-}dV_s+V_{s-}dU_s$
    integrates to
    \begin{equation}\label{eq:ito}
        Z_h
        =
        \Psi(y)
        +
        \int_0^h
        e^{-\int_0^s c(Y_u)\,du}
        \bigl[
            \partial_t\Psi+\mathcal{L}[\Psi]+\mathcal{I}[\Psi]-c\Psi
        \bigr](Y_s)\,ds
        +\int_0^h e^{-\int_0^s c(Y_u)du}\,dM_s.
    \end{equation}
    
    Since $\Psi\in\bar{\mathcal{C}}$ is bounded and 
    $\int_0^h e^{-\int_0^s c(Y_u)du} \, d M_s$
    is square-integrable, hence has zero expectation.
    Taking conditional expectations in~\eqref{eq:ito} yields the
    Dynkin--Feynman--Kac formula 
    \begin{equation}\label{eq:dynkin}
        (\mathcal{S}_h\Psi)(y)
        =
        \Psi(y)
        +
        \mathbb{E}\!\left[
            \int_0^h
            e^{-\int_0^s c(Y_u)\,du}
            \bigl[
                \partial_t\Psi+\mathcal{L}[\Psi]+\mathcal{I}[\Psi]-c\Psi
            \bigr](Y_s)\,ds
            \,\Bigg|\,Y_0=y
        \right].
    \end{equation}
    
    Rearranging~\eqref{eq:dynkin}
    we have
    \[
        \frac{(\mathcal{S}_h\Psi)(y)-\Psi(y)}{h}
        =
        \frac{1}{h}\,
        \mathbb{E}\!\left[
            \int_0^h\! e^{-\int_0^s c(Y_u)\,du}
    [\partial_t\Psi+\mathcal{L}[\Psi]+\mathcal{I}[\Psi]-c\Psi](Y_s)\,ds
            \,\Bigg|\,Y_0=y
        \right].
    \]
    The right-hand side converges pointwise to $\Phi := \partial_t\Psi + \mathcal{L}[\Psi] + \mathcal{I}[\Psi] - c\Psi$ as $h\downarrow 0$, this follows from right-continuity of $Y$ and local boundedness of $\Phi$ near $y$.
    Since $\Psi \in D(\mathcal{A}_0)$, the limit of the 
    left-hand side as $h \downarrow 0$ exists in $\bar{\mathcal C}$ by 
    definition of the generator. Since sup-norm convergence implies pointwise convergence and pointwise limits are unique, we conclude
    \begin{equation*}
    \mathcal A_0\Psi(y)
    = \Phi(y) =
    \partial_t\Psi(y)
    +
    \mathcal L[\Psi](y)
    +
    \mathcal I[\Psi](y)
    -
    c(y)\Psi(y).
    \end{equation*}
\end{proof}

\subsection{Proof of Proposition \ref{Prop:modified}}
  \begin{proof}
 \ref{prop:mod:item2} By Proposition \ref{prop:contrac} \ref{prop:contrac:item2}, the operator \(\mathcal T_h\) is a contraction on \(\bar{\mathcal C}\) with constant \(\rho_h=e^{-c_0h}\), since its affine part cancels when taking differences. Hence, for any \(\Psi_1,\Psi_2\in\bar{\mathcal C}\),
	\begin{equation*}
		\| \mathcal{T}_{h} \Psi_1- \mathcal{T}_{h} \Psi_2 \|_\infty  \leq \rho_h \| \Psi_1 - \Psi_2 \|_\infty \, \, .
	\end{equation*}
	Therefore,
	\begin{align*}
		\hspace{0mm}	\widetilde{\mathcal{T}}_h\Psi_1 -  \widetilde{\mathcal{T}}_h\Psi_2&= \left(\Psi_1 + \frac{\alpha}{h} \, (\mathcal{T}_{h}\Psi_1- \Psi_1 ) \right) - \left(\Psi_2 + \frac{\alpha}{h} \, (\mathcal{T}_{h}\Psi_2 - \Psi_2 ) \right) 
		\\
		&= (\Psi_1 - \Psi_2 ) +\frac{\alpha}{h} \left(\mathcal{T}_{h} \Psi_1  - \mathcal{T}_{h} \Psi_2  - (\Psi_1 - \Psi_2)\right)
		\\
		&= (1-\frac{\alpha}{h})(\Psi_1 - \Psi_2) + \frac{\alpha}{h}\left(\mathcal{T}_{h} \Psi_1  - \mathcal{T}_{h} \Psi_2 \right) .
	\end{align*}
	Taking the supremum norm gives
	\begin{align*}
		\| 	\widetilde{\mathcal{T}}_h\Psi_1 - \widetilde{\mathcal{T}}_h\Psi_2 \|_\infty &\leq  \left| 1 -\frac{\alpha}{h} \right| \|\Psi_1 - \Psi_2 \|_\infty +\frac{\alpha}{h} 	\| \mathcal{T}_{h} \Psi_1 - \mathcal{T}_{h} \Psi_2 \|_\infty 
		\\
		& \leq \left(  \left| 1 -\frac{\alpha}{h} \right| +\frac{\alpha}{h}\rho_h \right) \| \Psi_1 - \Psi_2 \|_\infty  \,.
	\end{align*}
	Thus \(\widetilde{\mathcal T}_h\) is a contraction whenever
\[
0 < \left|1-\frac{\alpha}{h}\right|+\frac{\alpha}{h}\rho_h<1.
\]

If \(0<\alpha\le h\), then
\[
L_h
=
1-\frac{\alpha}{h}(1-\rho_h),
\qquad\text{so that}\qquad
0<L_h<1.
\]
If \(\alpha>h\), then
\[
L_h
=
\frac{\alpha}{h}(1+\rho_h)-1.
\]
Since \(\alpha/h>1\), we have \(L_h>0\), and \(L_h<1\) holds provided that
\[
\frac{\alpha}{h}(1+\rho_h)<2,
\qquad\text{i.e.}\qquad
\alpha<\frac{2h}{1+\rho_h}.
\]
This proves \ref{prop:mod:item2}.

\medskip

    \ref{prop:mod:item3} By the Feynman--Kac  Theorem \ref{thm:fk}, the solution \(u\) of \eqref{intdiff}--\eqref{termdiff} satisfies
\[
u=\mathcal T_hu.
\]
Substituting this identity into \eqref{mod_operator} gives
\[
\widetilde{\mathcal T}_h u
=
u+\frac{\alpha}{h}(\mathcal T_hu-u)
=
u,
\]
so \(u\) is a fixed point of \(\widetilde{\mathcal T}_h\). Uniqueness then follows from Banach's fixed-point theorem, since \(\widetilde{\mathcal T}_h\) is a contraction by \ref{prop:mod:item2}. \end{proof}
\subsection{Proof of Proposition \ref{ass:series}}
\begin{proof}
Since  $\Psi\in D(\mathcal A_0^n)$ and
$\mathcal A_0^k\Psi\in \bar{\mathcal C}$ for all $k\le n$, we have the following Taylor expansion for the  semigroup $(\mathcal{S}_h)$,
\[
\mathcal S_h\Psi
= \sum_{j=0}^{n-1}\frac{h^j}{j!}\,\mathcal A_0^{\,j}\Psi
+ \frac{1}{(n-1)!}\int_0^h (h-s)^{n-1}\,\mathcal S_s\,\mathcal A_0^{\,n}\Psi\,ds, \quad h > 0\,.
\]
Hence \eqref{eq:Sh-expansion} holds with 
\[
R_n^\Psi(h)
:= \frac{1}{(n-1)!}\int_0^h (h-s)^{n-1}\,\mathcal S_s \mathcal A_0^{\,n}\Psi\,ds.
\]
As $(\mathcal S_h)$ is a contraction on $\bar{\mathcal{C}}$, then
\[
\|R_n^\Psi(h)\|_\infty
\le \frac{1}{(n-1)!}\int_0^h (h-s)^{n-1}\|\mathcal A_0^{\,n}\Psi\|_\infty\,ds
= \frac{h^n}{n!}\|\mathcal A_0^{\,n}\Psi\|_\infty.
\]
Similarly,  from \eqref{semigroupa}--\eqref{semigroupb},
$$\mathcal{R}_h = \int_0^h \mathcal{S}_s f ds \, ,$$and since  $f\in D(\mathcal A_0^{n-1})$ with
$\mathcal A_0^k f\in \bar{\mathcal C}$ for all $k\le n-1$, we get \[
\mathcal R_h
= \sum_{j=0}^{n-2}\frac{h^{j+1}}{(j+1)!}\,\mathcal A_0^{\,j}f
+ \frac{1}{(n-1)!}\int_0^h (h-s)^{n-1}\,\mathcal S_s\,\mathcal A_0^{\,n-1}f\,ds .
\]
Therefore, \eqref{eq:Rh-expansion} holds with
\[
R_{n}^f(h)
:= \frac{1}{(n-1)!}\int_0^h (h-s)^{n-1}\,\mathcal S_s \mathcal A_0^{\,n-1}f\,ds.
\]
Again by contractivity,
\[
\|R_{n}^f(h)\|_\infty
\le \frac{1}{(n-1)!}\int_0^h (h-s)^{n-1}\|\mathcal A_0^{\,n-1}f\|_\infty\,ds
= \frac{h^{n}}{n!}\|\mathcal A_0^{\,n-1}f\|_\infty.
\]
Combining \eqref{eq:Sh-expansion} and \eqref{eq:Rh-expansion} with
$\mathcal T_h\Psi=\mathcal S_h\Psi+\mathcal R_h$ yields
\eqref{eq_mod_op}, with
\[
R_n(h):=R_n^\Psi(h)+R_n^f(h)\in\bar{\mathcal C},
\qquad
\|R_n(h)\|_\infty=\mathcal O(h^n).
\] 
Dividing \eqref{eq_mod_op} by $h$, we get
\[
\frac{\mathcal T_h\Psi-\Psi}{h}
=
\sum_{j=1}^{n-1}\frac{h^{j-1}}{j!}
\bigl(\mathcal A_0^{\,j}\Psi+\mathcal A_0^{\,j-1}f\bigr)
+
R_{n-1}(h),
\]
where
\[
R_{n-1}(h):=\frac{R_n(h)}{h}\in\bar{\mathcal C},
\qquad
\|R_{n-1}(h)\|_\infty=\mathcal O(h^{n-1}).
\]
Therefore, for $n\ge2$ and $0<\alpha<2h/(1+\rho_h)$,
\begin{align*}
\widetilde{\mathcal T}_h\Psi
&=
\Psi+\frac{\alpha}{h}\bigl(\mathcal T_h\Psi-\Psi\bigr) \\
&=
\Psi+\alpha\left[
\sum_{j=1}^{n-1}\frac{h^{j-1}}{j!}
\bigl(\mathcal A_0^{\,j}\Psi+\mathcal A_0^{\,j-1}f\bigr)
+R_{n-1}(h)
\right] \\
&=
\Psi+\alpha\left[
\mathcal A\Psi
+\sum_{j=2}^{n-1}\frac{h^{j-1}}{j!}
\bigl(\mathcal A_0^{\,j}\Psi+\mathcal A_0^{\,j-1}f\bigr)
+R_{n-1}(h)
\right],
\end{align*}
since $\mathcal A\Psi=\mathcal A_0\Psi+f$. This proves
\eqref{fin_approx}.
\end{proof}
We also note that for $u$ solution of \eqref{intdiff}--\eqref{termdiff}, then \eqref{HJBB} yields $\mathcal{A} u = 0$ on  $D_T$.  Moreover, since we consider linear PIDEs, repeated application of $\mathcal{A}_0$ gives, for every $j \geq 1$, $$\mathcal{A}^j_0 u = \mathcal{A}^{j-1}_0 [\mathcal{A}_0 u ] =  \mathcal{A}^{j-1}_0 [-f] = - \mathcal{A}^{j-1}_0 f , $$ which is consistent with $u$ being the unique fixed point of the contraction $\widetilde{\mathcal{T}}_h$, namely $\widetilde{\mathcal{T}}_hu = u.$

\subsection{Proof of Proposition \ref{Prop: conv}}
\begin{proof}
 Set $
u_0:=u_{\theta^{(k)}}$.
	By Lemma \ref{lemma 1}, for each inner step $i=0,\ldots,n-2$, the minimizer over all measurable functions of $Y$ of the loss
    \[
\mathscr L^{(k,i)}(\theta;\xi)
:=
\mathbb E\left[\Big(u_\theta(Y)-\mathcal{G}_\xi(Y,E_1,u_i)\Big)^2\right]
\]
	is the conditional expectation $\mathbb E[\mathcal{G}_\xi(Y,E_1,u_i)\mid Y]$. Therefore,
\begin{align}
		u_{i+1}(y) =\mathbb E\!\left[\mathcal{G}_\xi(y,E_1,u_i)\right] 
		= u_i(y)+\xi \mathcal A_0u_i(y)+ \xi f(y) \, , \qquad \mu\text{-a.e. on  } D_T\,.\label{eq:inhom-euler-sup}
\end{align}
Since both sides are continuous and \(\mu\) has full support, this identity holds on all of \(D_T\). Hence,
\[
u_{i+1}=(I+\xi\mathcal A_0)u_i+\xi f,
\qquad i=0,\dots,n-2.
\]
	Iterating \eqref{eq:inhom-euler-sup} $n-1$ times from $u_0 = u_{\theta^{(k)}}$ yields
\begin{equation}
		u_{n-1}=(I+\xi \mathcal A_0)^{n-1} u_0+\xi\sum_{m=0}^{n-2}(I+\xi \mathcal A_0)^m f \, . \label{exp}
\end{equation}
		By the Binomial Theorem, the first term becomes
	$$(I + \xi \mathcal{A}_0)^{n-1} u_0= \sum_{j=0}^{n-1} \binom{n-1}{j} \xi^j \mathcal{A}_0^j u_0$$
	For the extra source term, we similarly obtain
	\begin{align*}
		 \xi \sum_{m=0}^{n-2} (I+\xi \mathcal{A}_0)^m f
		&= \xi \sum_{m=0}^{n-2} \sum_{j=0}^{m} \binom{m}{j}\,\xi^{j}\, \mathcal{A}_0^{j} f \\
		&= \sum_{j=0}^{n-2} \xi^{j+1} \left( \sum_{m=j}^{n-2} \binom{m}{j} \right)  \mathcal{A}_0^{j} f . 
	\end{align*}
	By the hockey-stick identity, we obtain
\[
\xi\sum_{m=0}^{n-2}(I+\xi\mathcal A_0)^m f
=
\sum_{j=1}^{n-1}\binom{n-1}{j}\xi^j\mathcal A_0^{j-1}f.
\]
Substituting into \eqref{exp} yields
\[
u_{n-1}
=
u_0+\sum_{j=1}^{n-1}\binom{n-1}{j}\xi^j
\Big(\mathcal A_0^j u_0+\mathcal A_0^{j-1}f\Big).
\]

	Now  define
	\[
p_{n,j}:=\prod_{m=0}^{j-1}\left(1-\frac{m}{n-1}\right)\in[0,1].
\]
Since $\xi = h/(n-1)$, we have 
\[
\binom{n-1}{j} \xi^j = \binom{n-1}{j}\left(\frac{h}{n-1}\right)^j
=
\frac{h^j}{j!}\,p_{n,j},
\]
we can rewrite $u_{n-1}$ as
    \begin{align}
        u_{n-1}&= u_0+ \sum_{j=1}^{n-1} \frac{h^j}{j!} \Big(\mathcal{A}_0^{j} u_0+ \mathcal{A}_0^{j-1}f\Big) + \sum_{j=1}^{n-1} \frac{h^j}{j!} (p_{n,j}-1) \Big(\mathcal{A}_0^{j} u_0+ \mathcal{A}_0^{j-1}f\Big) \nonumber
        \\
        &=  u_0+ \sum_{j=1}^{n-1} \frac{h^j}{j!} \Big(\mathcal{A}_0^{j} u_0+ \mathcal{A}_0^{j-1}f\Big) + E_{n-1}(h)  \ ,\label{eq:proof}
    \end{align}
    where
\begin{equation} \label{eq:Enh}
E_{n-1}(h)
=
\sum_{j=1}^{n-1}\frac{h^j}{j!}(p_{n,j}-1)
\big(\mathcal A_0^j u_0+\mathcal A_0^{j-1}f\big). 
\end{equation}
 It remains to estimate \(E_{n-1}(h)\).
Since $p_{n,1}=1$, the term $j=1$ vanishes in $E_{n-1}(h)$. Moreover, for any family
$(x_m)_m\subset[0,1]$,
\[
1-\prod_m(1-x_m)\le \sum_m x_m.
\]
Applying this with \(x_m=m/{(n-1)}\) gives, for every \(j\ge 1\),
\[
1-p_{n,j}
=
1-\prod_{m=0}^{j-1}\left(1-\frac{m}{n-1}\right)
\le
\sum_{m=0}^{j-1}\frac{m}{n-1}
=
\frac{j(j-1)}{2(n-1)}.
\]
     Therefore,
\begin{align*}
    \|E_{n-1}(h)\|_\infty &\leq \sum_{j=1}^{n-1} \frac{h^j}{j!} (1-p_{n,j}) \, \big\| \mathcal{A}_0^{j} u_0+ \mathcal{A}_0^{j-1}f\Big\|_\infty 
    \\
    &\leq \frac{1}{2(n-1)} \sum_{j=2}^{n-1} \frac{h^j}{j!} j (j-1) \, \big\| \mathcal{A}_0^{j} u_0+ \mathcal{A}_0^{j-1}f\Big\|_\infty 
    \\
    &= \frac{1}{2(n-1)}\sum_{j=2}^{n-1}\frac{h^j}{(j-2)!}
\big\|\mathcal A_0^j u_0+\mathcal A_0^{j-1}f\big\|_\infty \, ,
\end{align*} 
 Since $n$ is fixed  and $u_0 \in D(\mathcal{A}^n_0), f \in D(\mathcal{A}^{n-1}_0)$, the sum on the right-hand side is finite, and
\[
\|E_{n-1}(h)\|_\infty
\le
\frac{h^2}{2(n-1)}\sum_{j=2}^{n-1}\frac{h^{j-2}}{(j-2)!}
\big\|\mathcal A_0^j u_0+\mathcal A_0^{j-1}f\big\|_\infty
=
\mathcal O\Big(\frac{h^2}{n-1}\Big),
\]
for $h \downarrow 0$. We now insert \eqref{eq:proof} into the Polyak update
\eqref{eq:relaxed-update-clean}. Since \(u_0=u_{\theta^{(k)}}\), we get
\begin{align*}
u_{\theta^{(k+1)}}
&=
\left(1-\frac{\alpha}{h}\right)u_{\theta^{(k)}}
+
\frac{\alpha}{h}u_{n-1} \\
&=
\left(1-\frac{\alpha}{h}\right)u_0
+
\frac{\alpha}{h}
\left[
u_0+\sum_{j=1}^{n-1}\frac{h^j}{j!}
\Big(\mathcal A_0^j u_0+\mathcal A_0^{j-1}f\Big)
+E_{n-1}(h)
\right] \\
&=
u_0
+
\alpha\sum_{j=1}^{n-1}\frac{h^{j-1}}{j!}
\Big(\mathcal A_0^j u_0+\mathcal A_0^{j-1}f\Big)
+
\frac{\alpha}{h}E_{n-1}(h),
\end{align*}
which is exactly \eqref{eq:conv-relaxed}. 
Finally, comparing \eqref{eq:conv-relaxed} with the expansion of the relaxed operator
\(\widetilde{\mathcal T}_h\) \eqref{fin_approx} obtained in Proposition \ref{ass:series}, namely
\[
\widetilde{\mathcal T}_h u_0
=
u_0+\alpha\sum_{j=1}^{n-1}\frac{h^{j-1}}{j!}
\Big(\mathcal A_0^j u_0+\mathcal A_0^{j-1}f\Big)
+\alpha R_{n-1}(h),
\]
we obtain
\[
\|u_{\theta^{(k+1)}}-\widetilde{\mathcal T}_h u_{\theta^{(k)}}\|_\infty
\le
\frac{\alpha}{h}\|E_{n-1}(h)\|_\infty
+
\alpha\|R_{n-1}(h)\|_\infty
=
\mathcal O\!\left(\frac{\alpha h}{n-1}\right)
+
\mathcal O(\alpha h^{\,n-1}),
\]
which proves \eqref{eq:conv-relaxed-Ttilde}.

If \(\alpha=h\), then \eqref{eq:relaxed-update-clean} reduces to
\[
u_{\theta^{(k+1)}}=u_{n-1},
\]
and \(\widetilde{\mathcal T}_h=\mathcal T_h\). In that case, comparing
\eqref{eq:proof} with the expansion of the Feynman--Kac operator $\mathcal{T}_h$ \eqref{eq_mod_op} in Proposition \ref{ass:series}, namely 
\[
\mathcal T_hu_0
=
u_0+\sum_{j=1}^{n-1}\frac{h^j}{j!}
\Big(\mathcal A_0^j u_0+\mathcal A_0^{j-1}f\Big)
+R_n(h),
\]
yields
\[
\|u_{\theta^{(k+1)}}-\mathcal T_hu_{\theta^{(k)}}\|_\infty
\le
\|E_{n-1}(h)\|_\infty+\|R_n(h)\|_\infty
=
\mathcal O\!\left(\frac{h^2}{n-1}\right)+\mathcal O(h^n).
\]
This completes the proof.

\end{proof}
\section{Additional theoretical results} \label{section: directional diff}
\paragraph{Jump-induced variance of the INEUS target.} We show that choosing the scaling parameter \(\xi = h/(n-1)\) substantially reduces the jump-induced variance of the single-jump target \(\mathcal{G}_\xi\) relative to an \(m\)-sample Monte Carlo estimator of the nonlocal term. 
\begin{Proposition}[Variance of the single-jump INEUS target] \label{prop:variance_estim}
Fix $y=(t,x)\in D_T$ and assume that
\[
Z_u(y,E)
:=
\ell\!\left(
u(t,x+\gamma(t,x,E))
-u(t,x)
\right)
\]
has finite variance under $E\sim \nu$. Let
\[
\widehat I_m[u](y)
:=
\lambda(y) \frac{1}{m}\sum_{j=1}^m Z_u(y,E_j),
\qquad
E_1,\ldots,E_m \overset{\mathrm{i.i.d.}}{\sim}\nu,
\]
be the $m$-sample Monte Carlo estimator of the nonlocal term $I[u](y)$, and let
\[
\mathcal{G}_\xi(y,E_1,u)
=
u(y)
+
\xi\left[
\partial_tu(y)
+
\mathcal{F}(y,u,\nabla_xu,\nabla_x^2u)
+
\lambda(y) Z_u(y,E_1)
\right]
\]
be the single-jump INEUS target. Then, conditionally on $y$,
\[
\operatorname{Var}_\nu\!\left(\mathcal{G}_\xi(y,E_1,u)\right)
=
\xi^2 m\,
\operatorname{Var}_\nu\!\left(\widehat I_m[u](y)\right).
\]
\end{Proposition}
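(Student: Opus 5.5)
Since $y=(t,x)$ is held fixed, the only randomness in $\mathcal{G}_\xi(y,E_1,u)$ comes from the single jump mark $E_1\sim\nu$. The plan is therefore a direct computation of two variances under $\nu$, with no appeal to the semigroup machinery of Section~3. First, I split the target into a deterministic part and a random part:
\[
\mathcal{G}_\xi(y,E_1,u) = C(y) + \xi\lambda(y)\, Z_u(y,E_1),
\qquad
C(y) := u(y) + \xi\big[\partial_t u(y) + \mathcal{F}(y,u,\nabla_x u,\nabla_x^2 u)\big].
\]
Here $C(y)$ does not depend on $E_1$. Also $\lambda(y)$ and $\xi$ are deterministic constants once $y$ is fixed. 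Translation invariance and quadratic scaling of the variance then give
\[
\operatorname{Var}_\nu\big(\mathcal{G}_\xi(y,E_1,u)\big) = \xi^2\lambda(y)^2\,\sigma^2(y),
\qquad
\sigma^2(y) := \operatorname{Var}_\nu\big(Z_u(y,E)\big) < \infty,
\]
where finiteness is the hypothesis of the proposition.

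Second, I compute the variance of the Monte Carlo estimator $\widehat I_m[u](y)$. Since $E_1,\dots,E_m$ are i.i.d., the random variables $Z_u(y,E_j)$ are i.i.d. with common variance $\sigma^2(y)$. Their covariances vanish, so
\[
\operatorname{Var}_\nu\big(\widehat I_m[u](y)\big) = \lambda(y)^2\,\frac{1}{m^2}\cdot m\,\sigma^2(y) = \frac{\lambda(y)^2\sigma^2(y)}{m}.
\]

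Third, I eliminate $\lambda(y)^2\sigma^2(y)$ between the two identities. The second identity gives $\lambda(y)^2\sigma^2(y) = m\operatorname{Var}_\nu(\widehat I_m[u](y))$, and substituting this into the first yields the claimed relation
\[
\operatorname{Var}_\nu\big(\mathcal{G}_\xi(y,E_1,u)\big) = \xi^2 m\,\operatorname{Var}_\nu\big(\widehat I_m[u](y)\big).
\]

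I do not expect a genuine obstacle, since the computation is elementary. The one step that needs care is the first: I must state explicitly that the local terms $u(y)$, $\partial_t u(y)$ and $\mathcal{F}(\cdot)$ are deterministic given $y$, because this is exactly what the conditioning on $y$ in the statement means. I must also check that finiteness of $\sigma^2(y)$ makes both variances well defined.
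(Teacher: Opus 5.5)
Your proposal is correct and follows essentially the same route as the paper's proof. Both arguments drop the $E_1$-independent local terms, then use independence to compute $\operatorname{Var}_\nu(\mathcal{G}_\xi(y,E_1,u))=\xi^2\lambda(y)^2\operatorname{Var}_\nu(Z_u(y,E))$ and $\operatorname{Var}_\nu(\widehat I_m[u](y))=\lambda(y)^2\operatorname{Var}_\nu(Z_u(y,E))/m$, and compare the two.
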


\begin{proof}
For fixed $y$, the only randomness in both estimators comes from the jump
samples. Since $E_1,\ldots,E_m$ are independent and identically distributed,
\[
\operatorname{Var}_\nu\!\left(\widehat I_m[u](y)\right)
=
\lambda^2(y)
\operatorname{Var}_\nu\!\left(
\frac{1}{m}\sum_{j=1}^m Z_u(y,E_j)
\right)
=
\frac{\lambda^2(y)}{m}
\operatorname{Var}_\nu\!\left(Z_u(y,E)\right).
\]
Similarly, the deterministic terms in $\mathcal{G}_\xi$ do not contribute to the
variance, so
\[
\operatorname{Var}_\nu\!\left(\mathcal{G}_\xi(y,E_1,u)\right)
=
\xi^2\lambda^2(y)
\operatorname{Var}_\nu\!\left(Z_u(y,E)\right).
\]
The stated relation follows immediately.
\end{proof}
In particular, for \(\xi = h/(n-1)=0.5/19\), we obtain
\[
\xi \approx 0.0263,
\qquad
\xi^2 \approx 6.9\times 10^{-4}.
\]
Hence, when \(m=64\), \, $ 
\xi^2 m \approx 0.044 < 1,$ 
and therefore
\[
\operatorname{Var}_\nu\!\bigl(\mathcal{G}_\xi(y,E_1,u)\bigr)
<
\operatorname{Var}_\nu\!\bigl(\widehat I_m[u](y)\bigr).
\]
Thus, despite relying on a single jump sample, the INEUS target has substantially lower jump-induced variance than the  \(64\)-point Monte Carlo estimator of the nonlocal term appearing in a standard PINN residual, thanks to the small scaling factor \(\xi\).

\paragraph{Scalable implementation via directional differentiation.}
	The next result  from \cite{cheridito2025deep} further reduces the cost of evaluating the differential part of the operator $\mathcal{G}_\xi$ by replacing the explicit computation of gradients and Hessians of \(u_\theta\) with a single directional second derivative. Indeed, the operator \( \mathcal{G}_\xi\) requires the evaluation of the second-order differential operator \(\mathcal L[u_\theta]\), which in principle involves \(\partial_t u_\theta\), \(\nabla_x u_\theta\), and \(\nabla_x^2 u_\theta\) at each sample point $(t,x)$.
	\begin{Proposition}\label{Proposition: efficent}
		Let  $u \in \mathcal{U}$ and $(t,x) \in D_T$ be given. Define the function 
		$\psi: \mathbb{R} \to \mathbb{R}$ \vspace{-2mm} by
		\begin{equation*}
			\psi(h) := \sum_{i=1}^{q} u \hspace{-0.3mm}
			\left(t + \frac{h^2}{2q}, x+\frac{h}{\sqrt{2}}\sigma_i(t,x)+\frac{h^2}{2q}b(t,x)\right) , \vspace{-1mm}
		\end{equation*} where $\sigma_i(t,x)$ is the $i^{th}$ column of the $d\times q$ matrix $\sigma(t,x)$. Then,
		\begin{align*}
			\psi''(0) =  \partial_t u(t,x) &+ b^\top\hspace{-0.5mm}(t,x)  \, \nabla_x u(t,x) + \frac{1}{2} \normalfont \text{Tr} \left[ \sigma \sigma^\top\hspace{-0.5mm}(t,x) \nabla^2_x u(t,x) \right] 
			\label{eqprop}
		\end{align*}   \vspace{-4mm}
	\end{Proposition}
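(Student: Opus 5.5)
The plan is to reduce everything to a one-variable chain-rule computation. Fix $(t,x)\in D_T$ and, for each column index $i=1,\dots,q$, introduce the smooth curve
\[
c_i(h) := \Big(t+\tfrac{h^2}{2q},\; x+\tfrac{h}{\sqrt 2}\sigma_i(t,x)+\tfrac{h^2}{2q}b(t,x)\Big),
\]
so that $\psi(h)=\sum_{i=1}^q u(c_i(h))$. Since $u\in\mathcal U\subset C^{1,2}(D_T)$ and $(t,x)$ is interior, we have $c_i(h)\in D_T$ for $|h|$ small. This requires $t<T$, which holds because $t+h^2/(2q)$ stays below $T$ for small $h$; for the interior point one can take $h$ of either sign in the spatial variable, while the time component only moves forward. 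Hence $\psi$ is well defined near $0$.

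I will then compute $\psi''(0)$ term by term. Differentiating $u(c_i(h))$ twice gives
\[
\tfrac{d^2}{dh^2}u(c_i(h)) = \nabla u(c_i)\cdot c_i''(h) + c_i'(h)^\top \nabla^2 u(c_i)\, c_i'(h),
\]
where $\nabla$ and $\nabla^2$ are taken in $(t,x)$. At $h=0$ we have
\[
c_i'(0)=(0,\sigma_i/\sqrt2), \qquad c_i''(0)=(1/q,\; b/q).
\]
Because the time component of $c_i'(0)$ vanishes, the quadratic term involves only $\nabla_x^2 u$, and no mixed or second time derivatives are needed. This is precisely why $C^{1,2}$ regularity suffices, and I expect it to be the one point that needs care. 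The concern is that $u$ is only $C^1$ in $t$, so the naive second-order chain rule through the full space-time Hessian is not literally available. I would handle this either by a direct Taylor expansion,
\[
u(t+s,x+k)=u(t,x)+s\,\partial_t u+k\cdot\nabla_x u+\tfrac12 k^\top\nabla_x^2u\,k+o(|s|+|k|^2),
\]
valid for $C^{1,2}$ functions, with $s=h^2/(2q)$ and $k=O(h)$, or by interpreting $\psi''(0)$ as the second-order Taylor coefficient.

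The Taylor route yields, for each $i$,
\[
u(c_i(h)) = u(t,x)+\tfrac{h}{\sqrt2}\sigma_i^\top\nabla_x u+\tfrac{h^2}{2q}\big(\partial_t u+b^\top\nabla_x u\big)+\tfrac{h^2}{4}\sigma_i^\top\nabla_x^2u\,\sigma_i+o(h^2).
\]
Summing over $i$, the $h^2/2$ coefficient is
\[
\partial_t u+b^\top\nabla_x u+\tfrac12\sum_i\sigma_i^\top\nabla_x^2u\,\sigma_i .
\]
Using $\sum_i \sigma_i^\top M\sigma_i=\mathrm{Tr}[\sigma\sigma^\top M]$, this coefficient matches the claimed expression, and reading off $\psi''(0)$ as twice the $h^2$ coefficient completes the argument.
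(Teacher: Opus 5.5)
Your Peano expansion is correct: for $u\in C^{1,2}$ one does have $u(t+s,x+k)=u+s\,\partial_t u+k^\top\nabla_x u+\tfrac12 k^\top\nabla_x^2u\,k+o(s+|k|^2)$. To see it, split the increment into a time step at the fixed point $x+k$ and a spatial Taylor step at time $t$. Your bookkeeping of the $h^2$ coefficient, including $\sum_i\sigma_i^\top M\sigma_i=\mathrm{Tr}(\sigma\sigma^\top M)$, is also right.

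The gap is the final step. An expansion $\psi(h)=a_0+a_1h+a_2h^2+o(h^2)$ gives $\psi''(0)=2a_2$ only if $\psi''(0)$ is already known to exist. It does not imply existence: $h^3\sin(1/h)$ is $o(h^2)$ but has no second derivative at $0$. Redefining $\psi''(0)$ as the Taylor coefficient changes the statement.

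Under the bare $C^{1,2}$ hypothesis you yourself flagged, existence is exactly the delicate point. $\psi'$ exists by the first-order chain rule, with $\psi'(0)=\tfrac{1}{\sqrt2}\sum_i\sigma_i^\top\nabla_x u(t,x)$. However, $(\psi'(h)-\psi'(0))/h$ contains the terms $\tfrac{1}{\sqrt2\,h}\,\sigma_i^\top\bigl[\nabla_x u(t+s,x)-\nabla_x u(t,x)\bigr]$ with $s=h^2/(2q)$. So you need $\nabla_x u(t+s,x)-\nabla_x u(t,x)=o(\sqrt s)$, which continuity of $\nabla_x u$ in $t$ alone does not give.

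The missing ingredient is a parabolic interpolation estimate.
\begin{itemize}
\item Set $w:=u(t+s,\cdot)-u(t,\cdot)$ and $g(r):=w(x+re)$ for a unit vector $e$. Expanding $g(\pm\delta)$ to second order gives $|g'(0)|\le|g(\delta)-g(-\delta)|/(2\delta)+\tfrac{\delta}{2}\sup_{|r|\le\delta}|g''(r)|$.
\item Since $w(y)=\int_0^s\partial_t u(t+r,y)\,dr$, continuity of $\partial_t u$ at $(t,x)$ gives $|g(\delta)-g(-\delta)|=o(s)$ as $(s,\delta)\to0$.
\item Continuity of $\nabla_x^2u$ gives $\sup|g''|=o(1)$ as $(s,\delta)\to0$.
\item Taking $\delta=\sqrt s$ then yields $e^\top\nabla_x w(x)=o(\sqrt s)=o(h)$.
\end{itemize}
Combine this with $\nabla_x u(t+s,x+k)-\nabla_x u(t+s,x)=\nabla_x^2u(t,x)\,k+o(|k|)$ and with continuity of $\partial_t u$ and $\nabla_x u$. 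This shows that $(\psi'(h)-\psi'(0))/h$ converges to the claimed expression, so $\psi''(0)$ exists. Your coefficient matching (or this limit directly) then finishes the proof.

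For comparison, the paper gives no argument of its own and defers to Proposition 3.1 of \cite{cheridito2025deep}. In the setting where the result is actually used, the smooth $\tanh$ networks $u_\theta$, your first chain-rule formula applies verbatim. As you observed, all terms involving $\partial_t\nabla_x u$ or $\partial_t^2u$ are multiplied by the time component $h/q$ of $c_i'(h)$ and vanish at $h=0$.
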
 
    \begin{proof}
        See Proposition 3.1  of \cite{cheridito2025deep}.
    \end{proof}
	Proposition \ref{Proposition: efficent} makes it possible to replace the explicit computation of gradients and Hessians of 
	$u$ by evaluating the univariate function $\psi''(0)$, the cost of which, using automatic differentiation, is a small multiple 
	of $q \cdot cost(u)$ and hence avoids the \(\mathcal{O}(d^2)\) scaling associated with explicitly
forming the full Hessian $\nabla_x^2 u$.
    \\ \\
    The next lemma shows how the same idea can be exploited in the nonlinear PIDE \eqref{nonlinear} through the logarithmic transformation \(v=e^{-\eta u}\).

\begin{Lemma}\label{lem:directional-log-transform}
Assume that $u(t,x) =
- \log (v(t,x))/\eta$.
Then
\[
\partial_t u(t,x)+\Delta u(t,x)-\eta\|\nabla_xu(t,x)\|^2
=
-\frac1\eta\,\frac{\partial_t v(t,x)+\Delta v(t,x)}{v(t,x)},
\]
where \(\Delta=\mathrm{Tr}\,\nabla_x^2\). Moreover, if
\[
\psi_v(h)
:=
\sum_{i=1}^d
v\!\left(t+\frac{h^2}{2d},x+he_i\right)
=
\sum_{i=1}^d
\exp\!\left(
-\eta\,u\!\left(t+\frac{h^2}{2d},x+he_i\right)
\right),
\]
then
\[
\psi_v''(0)=\partial_t v(t,x)+\Delta v(t,x),
\]
and therefore
\[
\partial_t u(t,x)+\Delta u(t,x)-\eta\|\nabla_xu(t,x)\|^2
=
-\frac{1}{\eta}\,\frac{\psi_v''(0)}{v(t,x)}
=
-\frac{e^{\eta u(t,x)}}{\eta}\,\psi_v''(0).
\]
\end{Lemma}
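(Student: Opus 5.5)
The statement splits into an algebraic identity for the logarithmic transform and a directional-derivative identity for \(\psi_v\). I would prove the two parts separately and then combine them. Throughout, \(u\) is taken \(C^{1,2}\) near \((t,x)\), so that \(v=e^{-\eta u}>0\) is also \(C^{1,2}\) there. The chain rule then applies, and \(\psi_v\) is twice differentiable at \(0\).

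\emph{First identity.} Starting from \(v=e^{-\eta u}\), I differentiate:
\[
\partial_t v=-\eta v\,\partial_t u,\qquad \nabla_x v=-\eta v\,\nabla_x u,\qquad \nabla_x^2 v=v\bigl(\eta^2\nabla_x u\nabla_x u^\top-\eta\nabla_x^2u\bigr).
\]
Taking the trace of the Hessian gives
\[
\Delta v=v\bigl(\eta^2\|\nabla_xu\|^2-\eta\Delta u\bigr).
\]
Adding the time derivative yields
\[
\partial_t v+\Delta v=-\eta v\bigl(\partial_t u+\Delta u-\eta\|\nabla_x u\|^2\bigr).
\]
Dividing by \(-\eta v\neq0\) gives the first claim.

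\emph{Second identity.} This is the special case of Proposition \ref{Proposition: efficent} with \(q=d\), \(b=0\) and \(\sigma=\sqrt2\,I_d\), applied to \(v\) in place of \(u\). Indeed, with these choices \(\frac{h}{\sqrt2}\sigma_i=he_i\) and \(\frac12\mathrm{Tr}[\sigma\sigma^\top\nabla^2_x v]=\Delta v\). Alternatively, a direct check is straightforward. For each \(i\), set \(g_i(h)=v\bigl(t+h^2/(2d),x+he_i\bigr)\). Then
\[
g_i'(h)=\frac{h}{d}\,\partial_t v+\partial_{x_i}v,
\]
evaluated along the curve. Differentiating once more and setting \(h=0\) gives
\[
g_i''(0)=\frac1d\,\partial_t v(t,x)+\partial_{x_ix_i}v(t,x).
\]
The \(h^2\) cross terms, such as \(h\,\partial_t\partial_{x_i}v\cdot h/d\), vanish at \(0\). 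Summing over \(i\) yields
\[
\psi_v''(0)=\partial_t v(t,x)+\Delta v(t,x).
\]

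Substituting this into the first identity, and using \(1/v=e^{\eta u}\), gives the final formula. The only step that needs any care is the regularity requirement. The proposition uses only \(C^{1,2}\), yet the second derivative of \(g_i\) appears to involve \(\partial_t\) along a curve with \(t\)-increment \(O(h^2)\). To handle this, I would expand \(g_i(h)\) via a first-order Taylor expansion in \(t\) and a second-order one in \(x\), and read off the \(h^2\) coefficient. This computes \(\psi_v''(0)\) as the limit of \(2(\psi_v(h)-\psi_v(0)-h\psi_v'(0))/h^2\), mirroring the argument in \cite{cheridito2025deep}.
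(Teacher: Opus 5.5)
Your proof is correct and follows the paper's argument. It uses the same chain-rule computation giving $\partial_t v+\Delta v=-\eta v\bigl(\partial_t u+\Delta u-\eta\|\nabla_x u\|^2\bigr)$, then Proposition~\ref{Proposition: efficent} specialized to $q=d$, $b\equiv 0$, $\sigma=\sqrt{2}\,I_d$. The closing regularity discussion is not needed on that route; if you keep the direct check, note that reading off the $h^2$ Taylor coefficient identifies $\psi_v''(0)$ only once the classical second derivative is known to exist, which for $v\in C^{1,2}$ follows from the interpolation estimate $\nabla_x v(t+s,x)-\nabla_x v(t,x)=o(\sqrt{s})$ as $s\downarrow 0$.
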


\begin{proof}
Since \(v=e^{-\eta u}\), we have
\[
\partial_t v=-\eta v\,\partial_t u,
\qquad
\nabla_x v=-\eta v\,\nabla_x u,
\]
and hence
\[
\Delta v
=
-\eta v\,\Delta u+\eta^2 v\,\|\nabla_xu\|^2.
\]
Dividing the identity
\[
\partial_t v+\Delta v
=
-\eta v\Big(\partial_t u+\Delta u-\eta\|\nabla_xu\|^2\Big)
\]
by \(-\eta v\) yields
\[
\partial_t u+\Delta u-\eta\|\nabla_xu\|^2
=
-\frac1\eta\,\frac{\partial_t v+\Delta v}{v}.
\]

Applying Proposition~\ref{Proposition: efficent} to \(v\) with \(b\equiv 0\) and \(\sigma= \sqrt{2}\,I_d\) gives
\[
\psi_v''(0)=\partial_t v+\Delta v.
\]
Substituting this into the previous identity yields
\[
\partial_t u+\Delta u-\eta\|\nabla_xu\|^2
=
-\frac{1}{\eta}\,\frac{\psi_v''(0)}{v}
=
-\frac{e^{\eta u}}{\eta}\,\psi_v''(0),
\]
which proves the claim.
\end{proof}

\section{Additional numerical experiments}
\paragraph{Linear PIDE with quadratic terminal condition.} We use as parameters in the linear PIDE experiments: $T = 0.5$, $b = 1$, $c=2$, $\Sigma = 0.28\, \mathbf{1}_{d \times q}$, $\lambda=0.25$, $\Sigma_J = 0.4 \,I_{d\times d}$, and test set uniformly sampled from $[0,0.5] \times [-1.5,1.5]^d$.
Table \ref{tab:highdim_results} reports the performance of INEUS for higher-dimensional linear PIDEs of the form \eqref{linear_PDE}. The results are obtained after \(k_* = 1000\) training epochs and averaged over 10 independent runs of Algorithm \ref{Algo1}. As the state dimension \(d\) increases from 10 to 150 (with fixed $q=10$), the mean MAE remains small, while the computational time grows only moderately. This supports the scalability of INEUS, whose computational complexity grows sub-linearly with the state dimension, as implied by Proposition~\ref{Proposition: efficent}, see also \cite{duarte2024machine}.
\begin{table}[h!]
\caption{\small Performance of INEUS for linear PIDEs of the form \eqref{linear_PDE} as a function of the state dimension \(d\), with fixed  $q=10$. Results are computed after \(k_* = 1000\) training epochs and reported as means and standard deviations over 10 independent runs of Algorithm \ref{Algo1}.}
\vspace{2mm}
\label{tab:highdim_results}
	\centering
	\begin{tabular}{c|c|c|c|c|c}
		 $d$ & Mean ${\rm MAE}$ & Std.\ Dev.\  $ {\rm MAE}$ & Mean Loss $\widehat{\mathscr{L}}^{(k_*)} $ & Std.\ Dev.\ Loss $\widehat{\mathscr{L}}^{(k_*)} $ & Time (sec) \\[1mm]
		\hline \\[-3mm]
		10   & 0.00043 & \(3.7 \times 10^{-5}\)   & 0.10523 & 0.00334 & 4120 \\
		25  & 0.00106 & \(1.4 \times 10^{-4}\)  & 0.17669 & 0.00217 & 4225 \\
		50  & 0.00291 & \(5.54 \times 10^{-4}\)  & 0.22776 & 0.00315 & 4303 \\
		100 & 0.01529 & \(6.63 \times 10^{-3}\) & 0.34239 & 0.01347 & 4687 \\
		150 & 0.04091 & \(1.67 \times 10^{-2}\)  & 0.56882 & 0.08724 & 5028 \\
		\hline 
	\end{tabular}
    \end{table}
    \\
The following figure then confirms that the relaxed operator $\widetilde{\mathcal{T}}_h$ in \eqref{mod_operator} helps improve the stability properties of our recursive approach based on  single-jump point sampling, compared with the standard Feynman--Kac operator $\mathcal{T}_h$ (i.e., $\alpha =h$). We also observe that this relaxed operator yields faster convergence in terms of epochs, even if performing the regression step on the target $u^{(k+K)}$ in \eqref{eq: target} once every block of $K=2$ epochs results in a higher overall computational cost. To improve stability without increasing runtime excessively, we then set $K=10$ in the numerical experiments of Section \ref{Section: numeric}. \vspace{-1.5mm}
\label{Appendix: additional}
\begin{figure}[H]
	\begin{center}		\centerline{\includegraphics[scale=0.34]{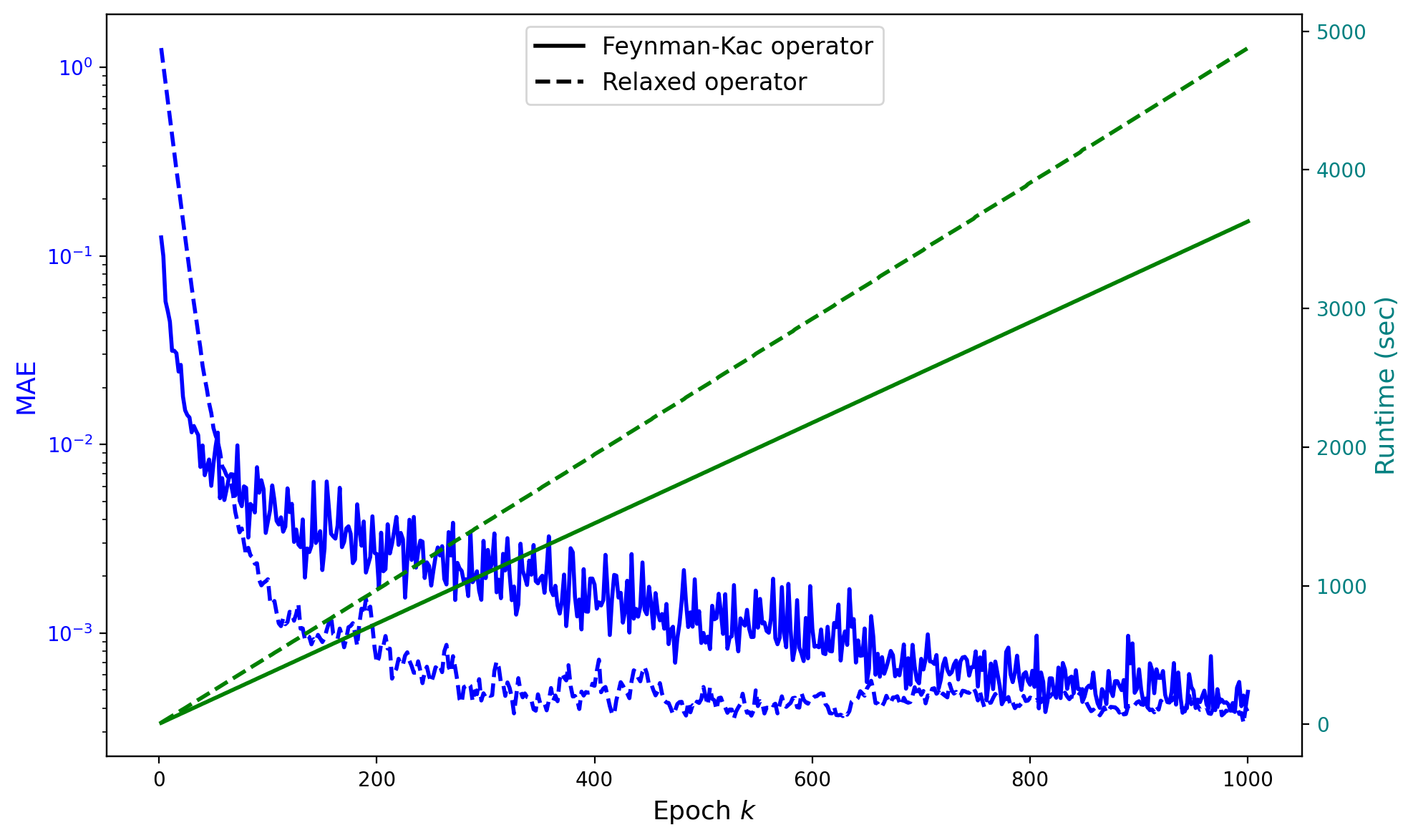}} \vspace{-1mm}
\caption{\small Comparison of INEUS using the Feynman--Kac operator $\mathcal{T}_h$ (solid line) and the relaxed operator $\widetilde{\mathcal{T}}_h$ (dotted line, $K=2$) in terms of $\rm{MAE}$ (blue) and runtime in seconds (green), as functions of the number of epochs $k$ for the linear PIDE \eqref{linear_PDE}. }
		\label{Fig: FKvsmod}
	\end{center}
	\vskip -0.3in
\end{figure}
Finally, we consider the linear PDE \eqref{linear_PDE} without jumps ($\lambda=0$). As shown in Figure~\ref{Fig: pde_linear}, INEUS remains more efficient even in this purely local setting. The gain in accuracy is consistent with the contractive structure of the INEUS fixed-point update in the linear case, while the reduction in computational cost comes from the recursive formulation: the training step avoids backpropagation through the full PDE residual and therefore does not require the higher-order derivatives that arise in residual-based \vspace{-1mm}PINNs.
\begin{figure}[H]
	\begin{center}		\centerline{\includegraphics[scale=0.34]{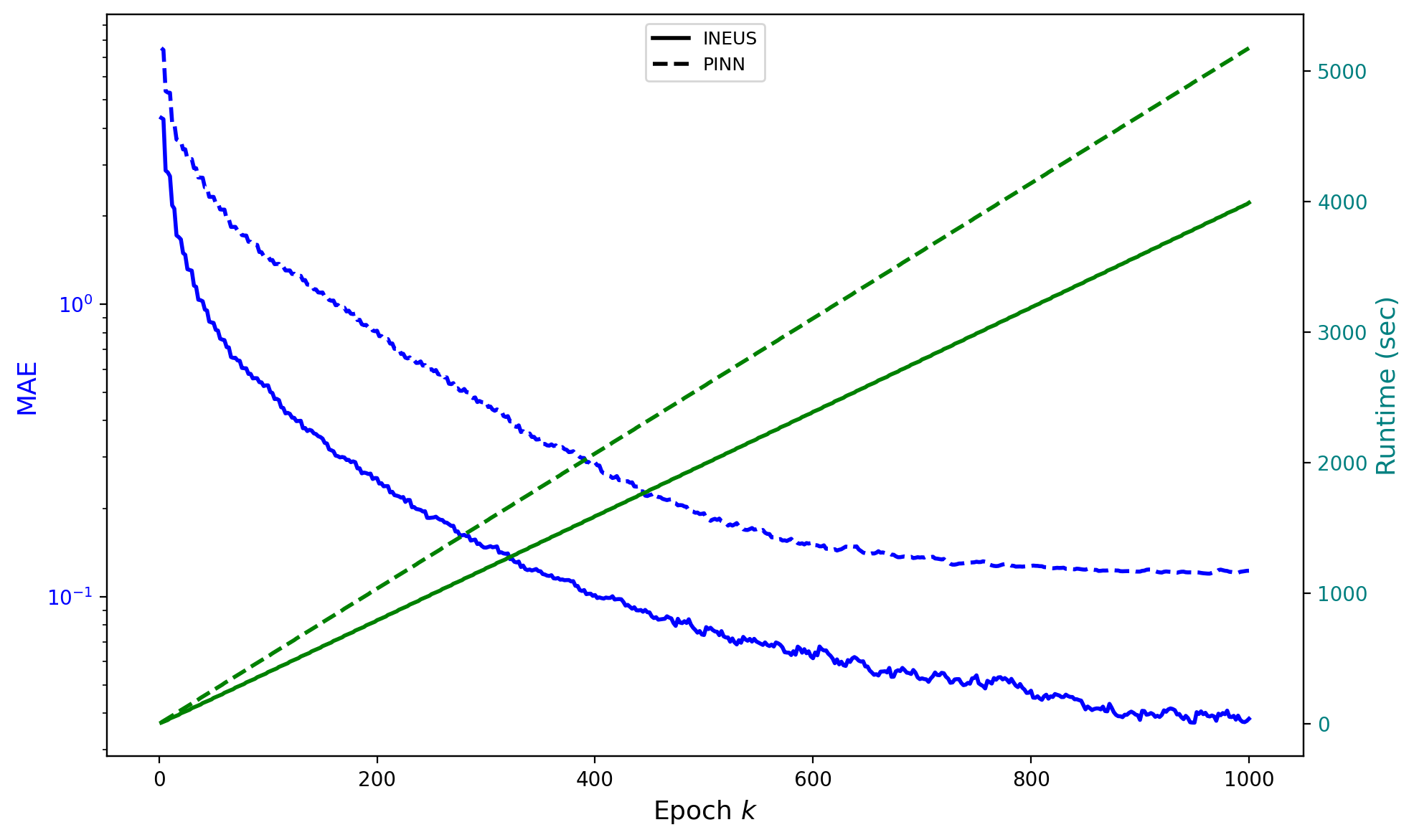}} \vspace{-1mm}
\caption{\small Comparison of $\rm{MAE}$ (blue) and runtime in seconds (green) of INEUS (solid lines) and PINNs (dashed lines) for a 100-dimensional linear PDE \eqref{linear_PDE} without jumps $(\lambda=0)$. }
		\label{Fig: pde_linear}
	\end{center}
	\vskip -0.3in
\end{figure}

\paragraph{Nonlinear Hamilton--Jacobi-Bellman PIDE.} The parameters for the PIDE \eqref{nonlinear} are: $T=1$, $\eta =1$, $f(t,x) = 2$, $\lambda = 0.5$, $\mu_J = \mathbf{0}_d$, $\Sigma_J = 0.2 I_d$, and test set uniformly sampled from $[0,1] \times [-1.5,1.5]^d$. Table~\ref{tab:highdim_resultsnonlinear} further confirms the scalability of INEUS as the dimension $d$ increases, while Figure~\ref{Fig: pde_nonlinear} illustrates its computational advantage over PINNs even for nonlinear PDEs \eqref{nonlinear} without nonlocal term $(\lambda=0)$.\vspace{-5mm}\begin{table}[h!]
\vspace{-1mm}
\caption{\small Performance of INEUS for the nonlinear PIDE \eqref{nonlinear} as a function of the state dimension \(d\). Results are computed after \(k_* = 1000\) epochs and reported as means and standard deviations over 10 independent runs of Algorithm \ref{Algo1}.}
\label{tab:highdim_resultsnonlinear}
	\centering
	\begin{tabular}{c|c|c|c|c|c}
		 $d$ & Mean ${\rm MAE}$ & Std.\ Dev.\  $ {\rm MAE}$ & Mean Loss $\widehat{\mathscr{L}}^{(k_*)} $ & Std.\ Dev.\ Loss $\widehat{\mathscr{L}}^{(k_*)} $ & Time (sec) \\[1mm]
		\hline \\[-3mm]
		5  & 0.0049 & 0.0007  & 0.0513 & 0.0029 & 3278 \\
		10  & 0.0158 & 0.0015  & 0.1299 & 0.0068 & 3289\\
		25  & 0.0599 & 0.0185  & 0.3040 & 0.0853 & 4737 \\
		50 & 0.1284 &0.0201  &  0.7561& 0.1243 & 5772 \\
		100 & 0.2526 & 0.0277  & 0.8640 & 0.1053 & 10,422 \\
		\hline 
	\end{tabular}
    \end{table}
\begin{figure}[H]
	\begin{center}		\centerline{\includegraphics[scale=0.37]{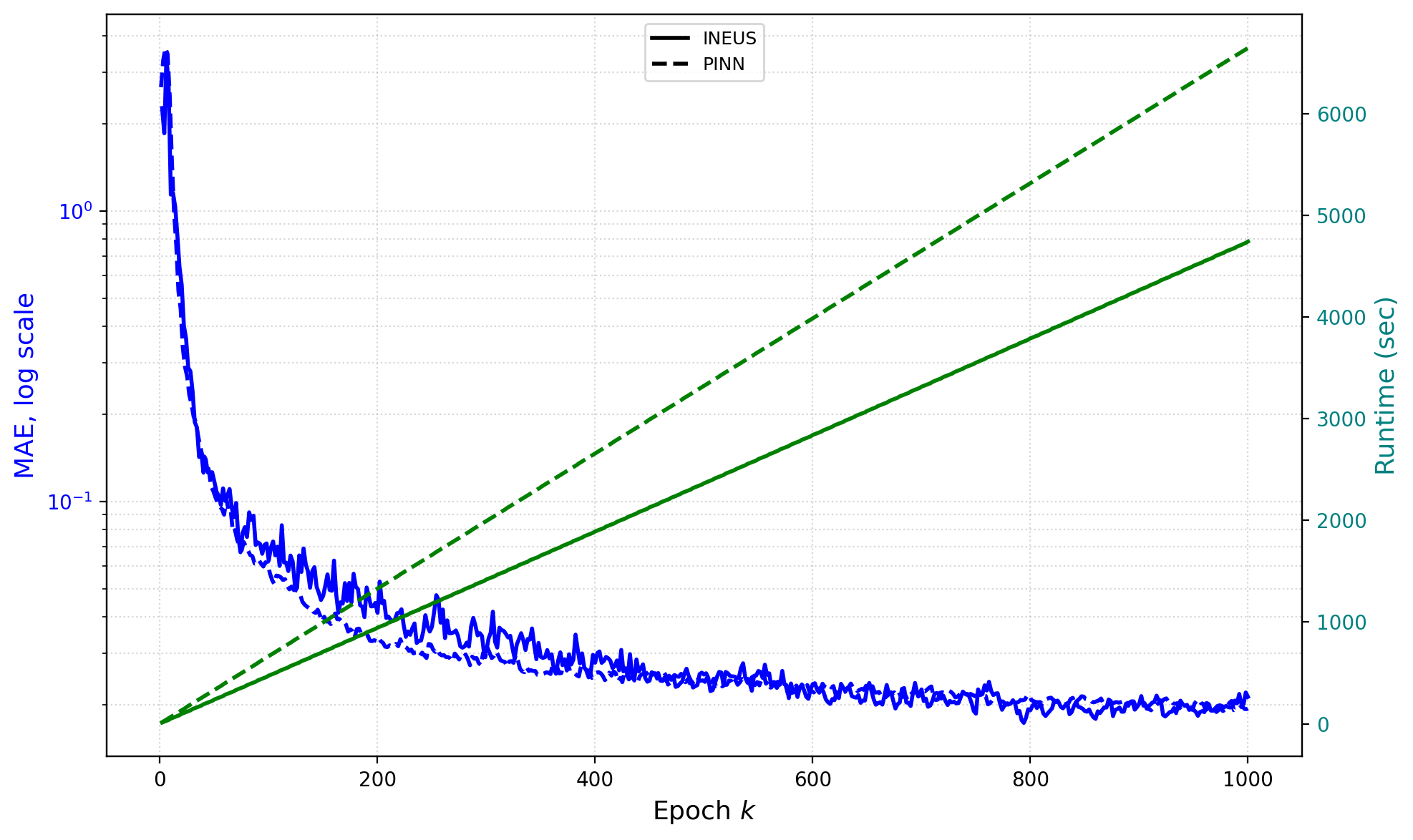}} \vspace{-1mm}
\caption{\small Comparison of $\rm{MAE}$ (blue) and runtime in seconds (green) of INEUS (solid lines) and PINNs (dashed lines) for a 100-dimensional nonlinear PDE \eqref{nonlinear} without jumps $(\lambda=0)$. }
		\label{Fig: pde_nonlinear}
	\end{center}
	\vskip -0.2in
\end{figure}
 \paragraph{Linear Black--Scholes PIDE.}
 We consider  a  financial derivative on a basket of  $d$ underlying stocks, whose Black--Scholes pricing PIDE is given on $D_T = [0,T) \times \R^d_+ $ \vspace{-1mm}by
{\small \begin{equation} \label{Black_Scholes}
 	 \begin{aligned}
 		\partial_t u(t,x) + (r - \kappa \odot \lambda) \odot x \cdot \nabla_x u(t,x) + \frac{1}{2}&\text{Tr}[\operatorname{diag}(x) \sigma \Sigma \sigma^\top \operatorname{diag}(x) \nabla^2_x u(t,x)] -r u(t,x) \\[-2mm]
 		&+ \sum_{i=1}^d \lambda_i \mathbb{E}^\nu\left[u(t,x \odot e^{E_i e_i} ) - u(t,x) \right] = 0 \, ,
 	\end{aligned}
 \end{equation}}with terminal condition $u(T,x) = (\sum_{i=1}^d w_i x_i - K)_+$, for $w_i \in[0,1]$ and $\sum w_i =1$. We consider here i.i.d.\ normal random variables $E_i \sim \mathcal{N}(\mu^i_J, \sigma^i_J)$ with $\mu^i_J \in \mathbb{R}$, $\sigma^i_J \in \mathbb{R}_+$, and denote $\kappa=(\kappa_1, \ldots, \kappa_d)$ where $ \kappa_i := \mathbb{E}[e^{E_i}-1] $, %= \exp \big(\mu_Z^i +\frac{1}{2} (\sigma^i_Z)^2 \big) - 1,$
 $\sigma =\text{diag}(\sigma_1,\ldots, \sigma_d),$ \,$\odot$  the component-wise multiplication,  and $e_i$ the  $i$-th unit vector. The parameters in this setting  are: $T=1$, $r=0.05$, $K=30$, $\Sigma
=
0.1\,\mathbf{1}_{d \times d}
+
0.9\,I_d$ with $w_i = 1/d$, $\sigma_i = 0.15$, $\lambda_i =0.5$, $\mu_J^i = 0.2$, $\sigma_J^i = 0.3$, $\kappa_i = \exp (\mu_J^i +\frac{1}{2} (\sigma^i_J)^2) -1 = 0.278$ for $i=1,\dots, d$, and test set uniformly sampled from $[0,1] \times [0,100]^d$.%with prices processes $S^i$ evolving under the risk-neutral measure $\mathbb{Q}$ as
% \begin{equation}
% 	\frac{dS^i_t}{S^i_{t-}} = (r - \lambda^i \kappa^i) dt + \sigma^i dW^i_t 
 %	+ \brak{e^{Z^i} - 1} dN^i_t , \quad \ \  i=1,\ldots, d, \label{S}
% \end{equation}
 %Let $\Phi : \R^d_+ \to \R_+$ denote the payoff of a European claim maturing at maturity $T$. Its time-$t$ price is given by
% \begin{equation}
% 	V(t,S) = \mathbb{E}^{\mathbb{Q}}[e^{-r(T-t)}\varphi(S_T) | S_t = S] \, , \quad \ (t,S) \in [0,T]\times \R^d_+ \, .
% \end{equation}
% The faire price $V$ is known to satisfy the following PIDE
  \begin{figure}[h]
	\centering
	\begin{minipage}{.5\textwidth}
		\includegraphics[width=0.95\columnwidth, height=4.2cm]{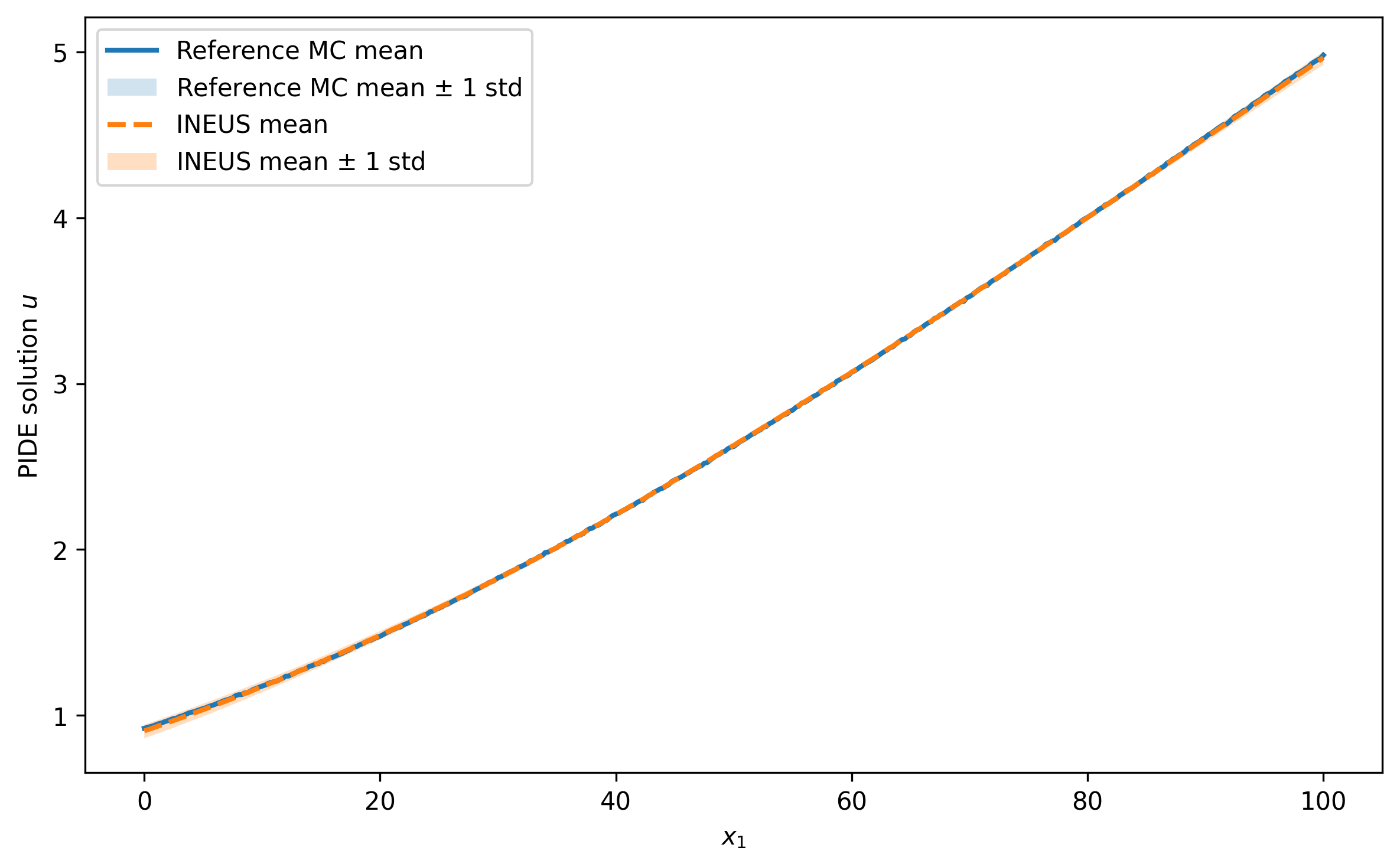}
		\label{fig:BS_t0} \vspace{6mm}
	\end{minipage} \hspace{-4mm}
	\begin{minipage}{.5\textwidth} 	\vspace{-5mm}
		\includegraphics[width=0.96\columnwidth, height=4.2cm]{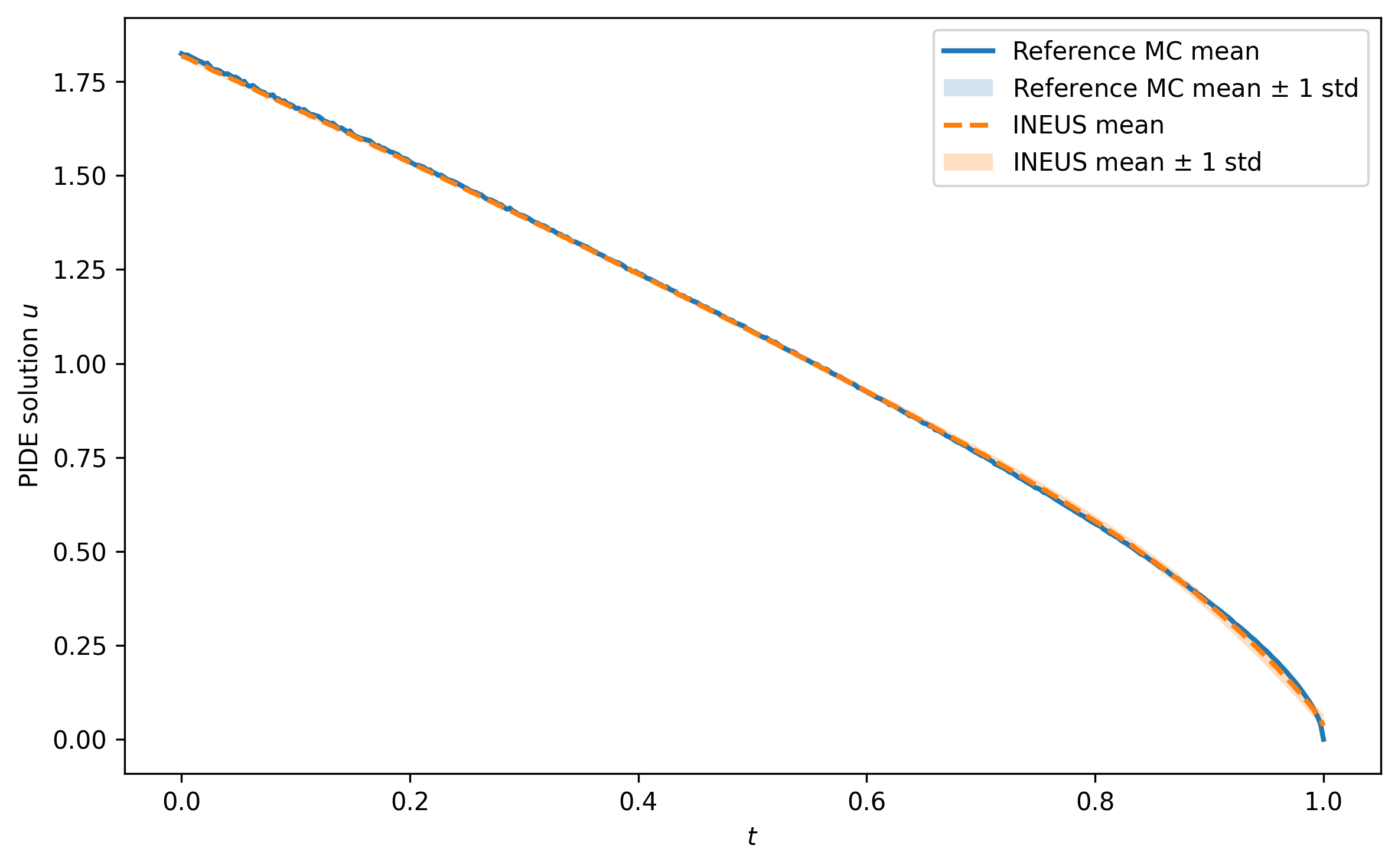}
		\label{fig:BS_K}
	\end{minipage}\vspace*{-6mm}
\caption{\small PIDE solution $u(0,x)$  for $x=(x_1,K, \dots, K)$ with $x_1  \in [0,100]$ (left) and  $u(t,x)$ for  $t \in [0,1]$ and $x= K \mathbf{1}_{20}$ (right) for a 20-dimensional linear Black--Scholes PIDE \eqref{Black_Scholes}. Orange dotted lines: numerical results of INEUS with $\pm 1$ standard deviation given by orange shaded area. Blue lines: reference MC solution obtained from the Feynman--Kac formula \eqref{Feynman} with $\pm 1$ standard deviation given by blue shaded area.}
\label{fig: BS_linear} \vspace{-1mm}
\end{figure} 
\begin{table}[h!]
\caption{\small Performance of INEUS for linear Black--Scholes PIDEs of the form \eqref{Black_Scholes} as a function of the state dimension \(d\). Results are computed after \(k_* = 1000\) training epochs and reported as means and standard deviations over 10 independent runs of Algorithm \ref{Algo1}.}
\vspace{2mm}
\label{tab:highdim_resultsbs}
	\centering
	\begin{tabular}{c|c|c|c|c|c}
		 $d$ & Mean ${\rm MAE}$ & Std.\ Dev.\  $ {\rm MAE}$ & Mean Loss $\widehat{\mathscr{L}}^{(k_*)} $ & Std.\ Dev.\ Loss $\widehat{\mathscr{L}}^{(k_*)} $ & Time (sec) \\[1mm]
		\hline \\[-3mm]
		5  & 0.0202 & 0.0031  & 0.3045 & 0.0167 & 3259 \\
		10  & 0.0224 & 0.0045  & 0.2749 & 0.0355 & 4049\\
		25  & 0.0246 & 0.0036  & 0.1933 & 0.0200 & 8248 \\
		50 & 0.0257 &0.0054 & 0.2305 & 0.0879& 10,984 \\
		100 & 0.0294 & 0.0064  & 0.2277 & 0.0950 & 15,799 \\
		\hline 
	\end{tabular}
    \end{table}

Table \ref{tab:highdim_resultsbs} again confirms the scalability of INEUS across dimensions.    As the state dimension \(d\) increases from 5 to 100 (with the Brownian dimension increasing accordingly since $q=d$ in \eqref{Black_Scholes}), the mean MAE remains small, while the runtime increase remains moderate. This is consistent with the directional-differentiation trick \ref{Proposition: efficent}, whose differential-operator cost scales with \(q\) rather than
with the \(\mathcal{O}(d^2)\) complexity of the full Hessian.

Figure \ref{fig:bsvs} then again compares INEUS performance with PINN and deep BSDE methods for the 10-dimensional linear Black--Scholes PIDE (where PINNs remain feasible). The left panel shows that INEUS reaches lower MAE values within significantly less runtime than the PINN approach, reflecting the efficiency gained by avoiding direct numerical integration of the jump term and the computation of higher-order derivatives. The right panel further shows that, when errors are compared as functions of runtime, INEUS attains the smallest MAE on the test set of randomly sampled space--time points. This suggests that INEUS provides a more accurate global approximation of the PIDE solution than the deep BSDE method with jumps, which is typically designed to approximate the solution along simulated trajectories with fixed initial point $(t_0, x_0)$. However, despite the use of Polyak averaging with $K=10$ in \eqref{eq: target}, the INEUS error curve displays larger fluctuations, indicating that its training dynamics remain less stable than those of the competing methods. Figure \ref{Fig: pde_bs} shows the corresponding results for a Black--Scholes PDE \eqref{Black_Scholes} without jumps $(\lambda=0)$. \\
\begin{figure}[h]
	\centering
	\subfigure{ \hspace{-1mm}
		\includegraphics[width=0.485\columnwidth, height=4.2cm]{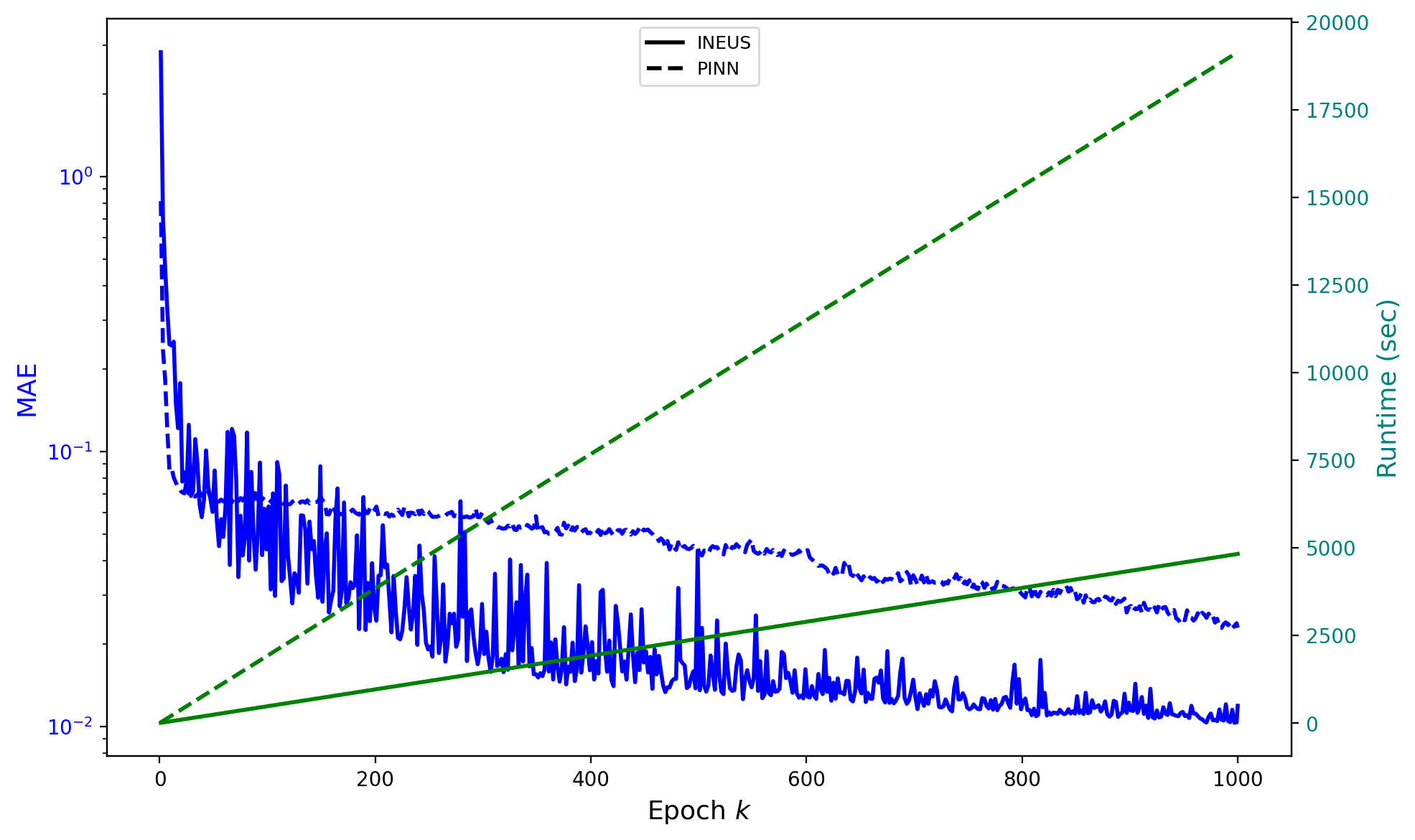}
		\label{fig:bsvspinn}
	}
	 \hspace{-3mm}
	\subfigure{
	\includegraphics[width=0.49\columnwidth, height=4.2cm]{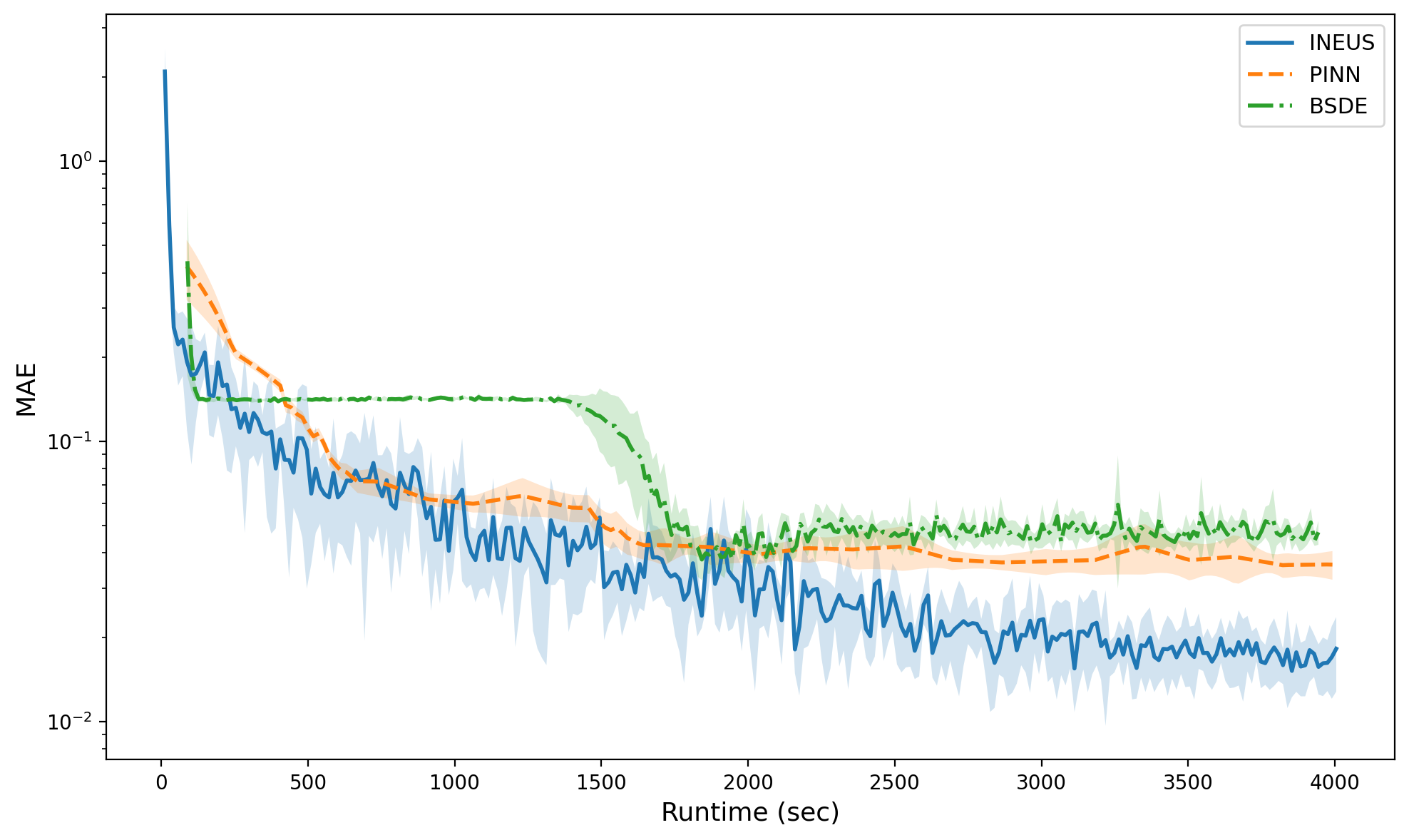}
	\label{fig:bsvsall}
	} \vspace{-1.95mm}
	\caption{{\small Left panel: comparison of ${\rm MAE}$ (blue) and runtime in seconds (green) of INEUS (solid lines) and  PINN (dashed lines) as functions of the epoch $k$. Right panel: comparison of INEUS, PINN, and deep BSDE with jumps as functions of runtime.\ Both panels correspond to a 10-dimensional linear Black--Scholes PIDE \eqref{Black_Scholes}.}} \vspace{-4mm}
	\label{fig:bsvs} 
\end{figure} 
\begin{figure}[H]
	\begin{center}		\centerline{\includegraphics[scale=0.37]{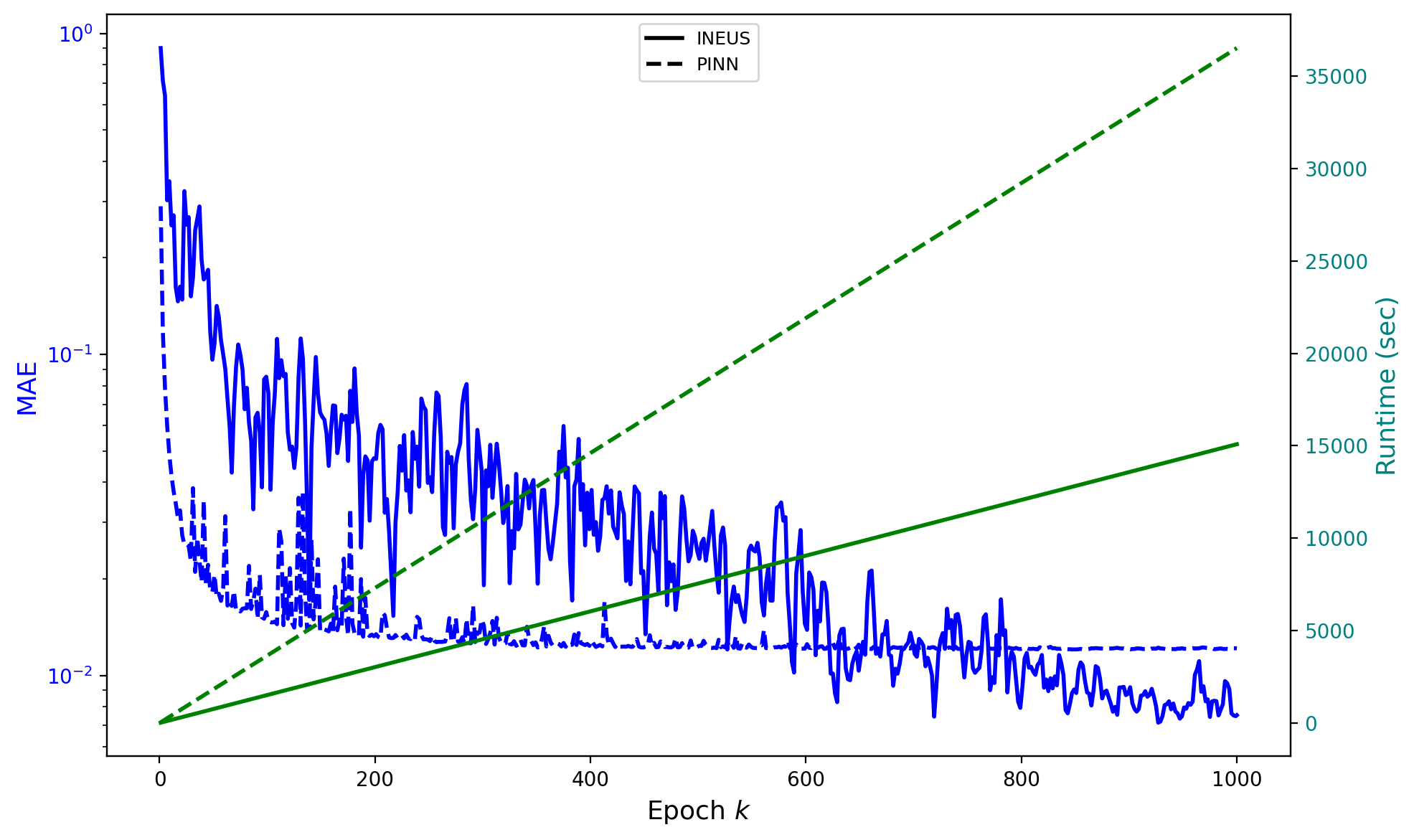}} \vspace{-1mm}
\caption{\small Comparison of $\rm{MAE}$ (blue) and runtime in seconds (green) of INEUS (solid lines) and PINNs (dashed lines) for a 100-dimensional linear Black--Scholes PDE \eqref{Black_Scholes} without jumps $(\lambda=0)$. }
		\label{Fig: pde_bs}
	\end{center}
	\vskip -0.2in
\end{figure}
\paragraph{Nonlinear Black--Scholes PIDE with default risk.}
The piecewise-linear default intensity $Q$ in the nonlinear Black--Scholes PIDE \eqref{eq:default_risk} is given by 
\begin{equation} \label{eq:intensity}
Q(y)=\mathds{1}_{\left(-\infty, v^h\right)}(y) \gamma^h+\mathds{1}_{\left[v^l, \infty\right)}(y) \gamma^l+\mathds{1}_{\left[v^h, v^l\right)}(y)\left[\frac{\left(\gamma^h-\gamma^l\right)}{\left(v^h-v^l\right)}\left(y-v^h\right)+\gamma^h\right] ,
\end{equation} with $v^h = 50, v^l = 70, \gamma^h =0.2$, $\gamma^l =0.02$. The parameters in \eqref{eq:default_risk} are set to $d=100$, $T=1$, $\bar\mu = 0.02$, $\sigma = 0.20$, recovery rate $\delta = 2/3$, discount rate $R = 0.02$,  intensity $\lambda = 0.1$, log-jump mean $\mu_J = -0.2$, log-jump volatility $\sigma_J = 0.15$, correlation $\rho_J = 0.5$, and initial condition $x_0 = 100 \cdot \mathbf{1}_{100}$.  In the no-jump case, the reference value $u(0,x_0) = 57.300$ is obtained using the deep BSDE method of \cite{han2018solving}.  This method is designed to approximate the solution at a prescribed initial point and is thus known to give a reliable pointwise benchmark at $(0,x_0)$,  but not a global reference solution over the full space-time domain (see Introduction). 
Similarly, in the jump case, the reference value $u(0,x_0) = 55.81$ is  computed using the  forward Picard scheme~\cite{bender2007forward}, 
based on the Feynman--Kac representation with $4\times10^5$ antithetic 
paths and $30$ Picard iterations\footnote{This value was also verified using the backward regression method of~\citet{gobet2005regression}.}.
Consequently, in Figure~\ref{fig:min_option}, we assess convergence to the benchmark value $u(0,x_0)$, rather than reporting a global MAE.

\begin{figure}[h]
	\centering
	\subfigure{ \hspace{-1mm}
		\includegraphics[scale=0.44]{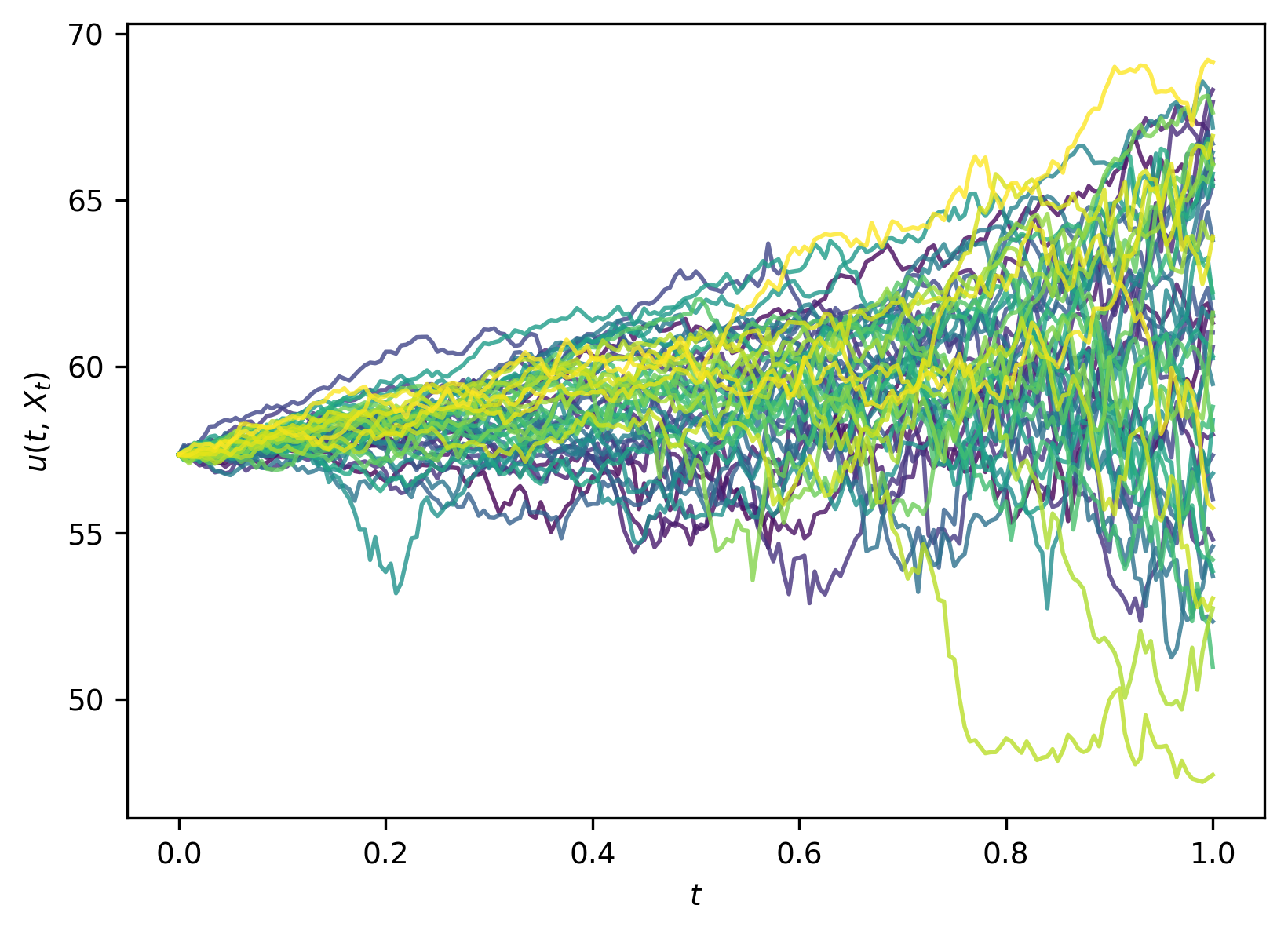}
	}
	 \hspace{-3mm}
	\subfigure{
	\includegraphics[scale=0.44]{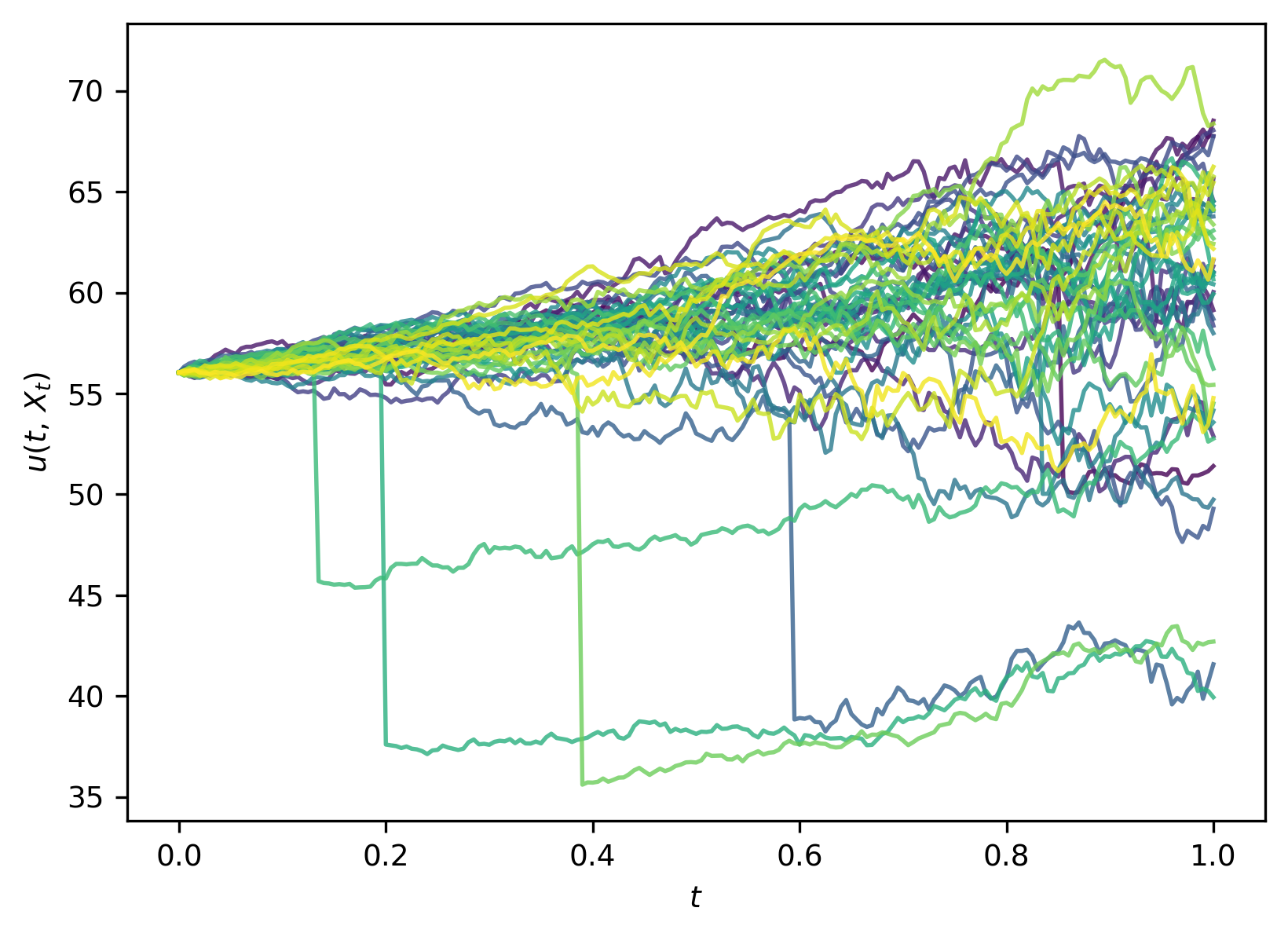}
	} \vspace{-1.95mm}
	\caption{\small Learned solution $u_\theta(t,X_t)$ of the Black--Scholes nonlinear equation \eqref{eq:default_risk} without jumps (left, $\lambda=0$) and with jumps (right, $\lambda=0.1$).    
    In both panels, the solution is evaluated along $50$ independent geometric Brownian motion sample paths starting from $x_0 = 100 \cdot \mathbf{1}_{100}$. In the no-jump setting (left), trajectories start from $u_\theta(0,x_0) \approx 57.3$ and spread as the diffusion variance grows, consistent with the expected behavior of the min-payoff pricing function under default risk. 
    In the jump setting (right), trajectories start from $u_\theta(0,x_0) \approx 55.81$ and exhibit additional discontinuities induced by the jump component ($\lambda=0.1$), leading to increased variability.} \vspace{-4mm}
	\label{fig:path} 
\end{figure}

\end{document}